\newtheorem{thm}{Theorem}
\newtheorem{lemma}[thm]{Lemma}
\newtheorem{definition}{Definition}
\newtheorem{cor}{Corollary}
\newcommand{\mbR}{\mathbb{R}}
\newcommand{\tr}{\rm{Tr}}
\newcommand{\mcB}{\mathcal{B}}
\newcommand{\mcN}{\mathcal{N}}
\newcommand{\mcS}{\mathcal{S}}
\newcommand{\card}{{\rm card}}
\newcommand{\mbE}{\mathbb{E}}
\title{Mutual Information Learned Classifiers: an Information-theoretic Viewpoint of Training Deep Learning Classification Systems}
\author{T
    Jirong Yi\textsuperscript{\rm 1}\textsuperscript{\rm 2}\thanks{Corresponding emails should be sent to: jirong-yi@uiowa.edu, jirong.yi@hologic.com},
    Qiaosheng Zhang\textsuperscript{\rm 3},
    Zhen Chen\textsuperscript{\rm 4},
    Qiao Liu\textsuperscript{\rm 5},
    Wei Shao\textsuperscript{\rm 5}
}
\begin{document}

\maketitle

\begin{abstract}
	Deep learning systems have been reported to achieve state-of-the-art performances in many applications, and a key is the existence of well trained classifiers on benchmark datasets. As a main-stream loss function, the cross entropy can easily lead us to find models which demonstrate severe overfitting behavior. In this paper, we show that the existing cross entropy loss minimization problem essentially learns the label conditional entropy (CE) of the underlying data distribution of the dataset. However, the CE learned in this way does not characterize well the information shared by the label and the input. In this paper, we propose a mutual information learning framework where we train deep neural network classifiers via learning the mutual information between the label and the input. Theoretically, we give the population classification error lower bound in terms of the mutual information. In addition, we derive the mutual information lower and upper bounds for a concrete binary classification data model in $\mathbb{R}^n$, and also the error probability lower bound in this scenario. Empirically, we conduct extensive experiments on several benchmark datasets to support our theory. The mutual information learned classifiers (MILCs) achieve far better generalization performances than the conditional entropy learned classifiers (CELCs) with an improvement which can exceed more than 10\% in testing accuracy.
\end{abstract}

\section{Introduction}\label{Sec:Introduction}

Ever since the breakthrough made by Krizhevsky et al. \cite{krizhevsky_imagenet_2012}, deep learning has been finding tremendous applications in different areas such as computer vision and traditonal signal processing \cite{goodfellow_deep_2016,bora_compressed_2017,yi_outlier_2018,zhou_global_2022,zhou_rethinking_2022,zheng_progressive_2022,yan_lawin_2022,xu_closing_2022,khan_deep_2020}. In nearly all of these applications, a fundamental classification task is usually involved. For many other applications which do not directly or explicitly involve classifications, they still use models that are pretrained via classification tasks as a backbone for extracting useful and meaningful representation for specific tasks \cite{liu_swin_2021,he_deep_2015}. In practice, such extracted representations have been reported to be beneficial for the downstream tasks \cite{liu_swin_2021}.

To train classification models, the machine learning community has been mainly using cross entropy loss or its variants (with some regularization term). However, the models that are trained using cross entropy can easily overfit the data and result in pretty bad generalization performance. This motivates the proposal of many techniques for improved generalization. One important line of works comes from the regularization viewpoint, i.e., restricting the model space for searching during training to avoid overfitting to noise \cite{goodfellow_deep_2016}. Examples include weight decay (or $\ell_2$ regularization), and $\ell_1$ regularization. A  major limitation of the regularization approaches is that they are essentially incorporating prior knowledge about the learning tasks to be solved, but unfortunately, such prior knowledge is not always available in practice, or may not be optimal even if it exists. What makes things worse is that as recently reported, such prior knowledge may make the learned models adversarially vulnerable so that adversarial attacks can be easily achieved. This is because the model can be underfitted to those unseen adversarial examples \cite{xu_adversarial_2019}.

In this paper, we show that the existing cross entropy loss minimization for training deep neural network classifiers essentially learns the conditional entropy of the underlying data distribution of the input and the label. We argue that this can be the fundamental reason which accounts for severe overfitting of models trained by cross entropy loss minimization, and the extremely small training loss in practice implies that the learned model completely ignores the information remained the label after revealing the input to it. However, this is not always the case.  To see these, we consider the MNIST classification task \cite{goodfellow_deep_2016}. When the image of a hand-written digital is given, we are 100\% sure for most of the time which class the image belongs to. In Figure \ref{Fig:MNISTImgsSure1}, we show one of such images on which the digit is no doubt 1, thus the remaining uncertainty or information about label when this image is given is 0. However, this is not always true because we can have image samples whose classes cannot determined with complete certainty. The digit in Figure \ref{Fig:MNISTImgsAmbug1} has a truth label 1, but it looks like 2. Different people can have different labels for these image samples, but their ground truth labels are at the discretion of the creator of them. In more complex image classification tasks such as ImageNet classification (See Appendix for illustrative image samples), a single image itself can contain multiple objects, and thus belongs to multiple classes. However, it has only single annotation or label which depends on the discretion of the human annotators \cite{deng_imagenet:_2009}, which also leads to the ignorance of label conditional entropy. 

\begin{figure}[!htb]
	\centering
	\begin{subfigure}[b]{0.1\textwidth}
		\centering
		\includegraphics[width=\linewidth]{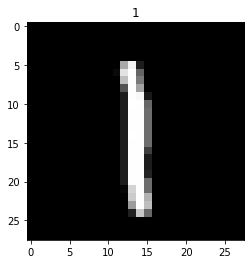}
		\caption{Truth label is 1}\label{Fig:MNISTImgsSure1}
	\end{subfigure}
	~
	\begin{subfigure}[b]{0.1\textwidth}
		\centering
		\includegraphics[width=\linewidth]{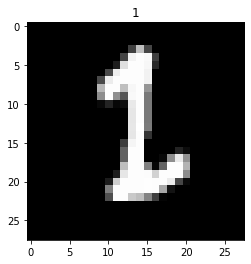}
		\caption{Truth label is 1}\label{Fig:MNISTImgsAmbug1}
	\end{subfigure}
	~
	\begin{subfigure}[b]{0.1\textwidth}
		\centering
		\includegraphics[width=\linewidth]{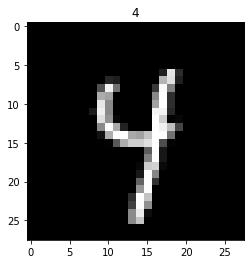}
		\caption{Truth label is 4}\label{Fig:MNISTImgsAmbug4}
	\end{subfigure}%
	~
	\begin{subfigure}[b]{0.1\textwidth}
		\centering
		\includegraphics[width=\linewidth]{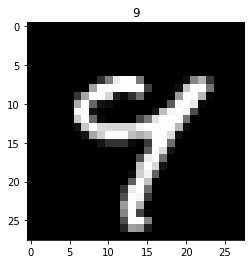}
		\caption{Truth label is 9}\label{Fig:MNISTImgsAmbug9}
	\end{subfigure}
	\caption{Image examples from MNIST dataset.}\label{Fig:MNISTImgs}
\end{figure}

We propose a new learning framework where classifiers are trained via learning the mutual information of the dataset. Under our framework, we design a new loss for training the deep neural network where the loss itself originates from a representation of mutual information of the dataset generating distribution, and we propose new pipelines associated with the proposed framework for efficient learning and inference. We refer to the corresponding loss as mutual information learning loss (milLoss), and the traditional cross entropy loss as conditional entropy learning loss (celLoss) since it essentially learns the conditional entropy of the dataset generating distribution (we will show this in later sections). When reformulated as a regularized form of the celLoss, the milLoss encourages the model not only to accurately learn the conditional entropy of the label when an input is given, but also to precisely learn the entropy of the label. This is distinctly different from the label smoothing regularization (LSR), confidence penalty (CP), label correction (LC) etc which consider the conditional entropy of the label \cite{szegedy_going_2015,wang_proselflc_2021,meister_generalized_2020}.


For the proposed mutual information learning paradigm, we establish a population error probability lower bound for classification, in terms of the mutual information (MI) between the input and the label by using Fano's inequality \cite{cover_elements_2012}. This explicitly characterizes how the performance of the models is connected to the mutual information contained in the dataset. Compared to Fano's inequality,  our bound is tighter due to a carefully designed relaxation (See Appendix for the details). Our error probability bound is applicable for arbitrary distribution and arbitrary learning algorithms. We also consider a concrete binary classification problem in $\mbR^n$, and derive both lower and upper bounds on the mutual information of the data distribution. We also derive error probability bounds for this binary classification data model. We conduct extensive experiments to validate our theoretical analysis by using classification tasks on benchmark dataset such as MNIST and CIFAR-10. The empirical results show that the proposed framework achieves superior generalization performance than the existing conditional entropy learning approach. 

The contributions of this work are summarized as follows.
\begin{itemize}
	\item We show that the existing cross entropy loss minimization for training deep neural networks essentially learn the conditional entropy $H(Y|X)$, and we point out some fundamental limitations of this approach which motivate us to propose a new training paradigm via learning mutual information.
	
	\item For the proposed framework, we establish theoretically the connection between the error probability over the distribution and its mutual information. To better appreciate the proposed framework, we consider a concrete binary classification data model, and derive both lower and uppper bounds for the mutual information and an error probability bound associated with the data distribution.
	
	\item We conduct extensive experiments with training deep neural classifiers on several benchmark datasets to validate our theory. Empirical results show that the proposed framework can improve greatly the generalization performance of deep neural network classifiers.
	
\end{itemize}

\subsection{Related Works}

Our work is highly related to the following several works, but there are distinct differentiations between our work and them. First of all, in 2019, Yi et al. formulated the classification problem under the encoding-decoding paradigm as in Figure \ref{Fig:ErrorProbability} by assuming there is an observation synthesis process which can generate observations or inputs for a given label, and the classification task is simply about inferring the label from the observation. Under this framework, they give theoretical characterizations of the robustness of machine learning models \cite{yi_trust_2019}. They also characterized the limit of an arbitrary adversarial attacking algorithm for attacking arbitrary machine learning systems, and tried to answer the question of what is the best an adversary can acheive and what the optimal adversarial attacks look like \cite{yi_derivation_2020}. We continue to investigate the classification tasks within the regime of the encoding-decoding paradigm. Though the works by Yi et al., are the major motivations for this work, the goal of this work is completely different.  We investigate the learning of  classification models without presence of adversaries, and the connection between the models' generalization performance and the mutual information of the dataset generating distribution. Our work is also distinctly from the information bottleneck framework where the goal is to learn a minimal sufficient representation $T$ from an input $X$ about its label $Y$ \cite{kolchinsky_nonlinear_2019,tezuka_information_2021,wang_pac-bayes_2021}, and usually a classifier needs to be trained on top of $T$ for classification.

\begin{figure}[!htb]
	\centering
	\includegraphics[width=\linewidth]{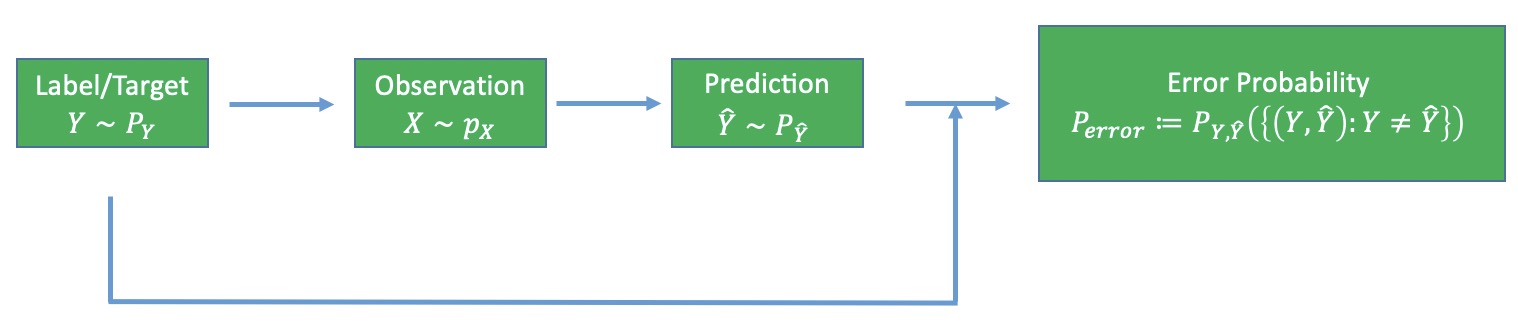}
	\caption{Information-theoretic view point of learning process.}\label{Fig:ErrorProbability}
\end{figure}

Our work is also highly related to that by \cite{mcallester_formal_2020} where the authors proposed a difference-of-entropy (doe) formulation for estimating the mutual information of a distribution from empirical observations from it \cite{mcallester_formal_2020}. In their formulation, two different neural networks are trained to learn the conditional entropy and the entropy, respectively. We use a formulation similar to the doe in \cite{mcallester_formal_2020}, but we only use a single neural network to learn both of them so that the parameters can be shared. Besides, we show that the existing cross entropy loss minimization approach actually learns the conditional entropy, and establish error probability lower bound in terms of the mutual information. 

\begin{table}[h]
	\centering
	\small
	\begin{tabular}{|l | c |}
		\hline 
		Loss obj. & Formula \\
		\hline 
		celLoss & $H(P_{Y|X}, Q_{Y|X})$\\
		\hline 
		celLoss$+$LSR & $(1-\epsilon)H(P_{Y|X}, Q_{Y|X}) + \epsilon H(U_{Y|X}, Q_{Y|X})$\\
		\hline celLoss$+$CP & $(1-\epsilon)H(P_{Y|X}, Q_{Y|X}) - \epsilon H(Q_{Y|X}, Q_{Y|X})$\\
		\hline 
		celLoss$+$LC & $(1-\epsilon)H(P_{Y|X}, Q_{Y|X}) + \epsilon H(Q_{Y|X}, Q_{Y|X})$\\
		\hline 
		milLoss & $H(P_{Y|X}, Q_{Y|X}) + \lambda_{ent} H(P_{Y}, Q_{Y})$\\
		\hline 
	\end{tabular}
	\caption{Comparisons among celLoss, LSR+celLoss, CP+celLoss, LC+celLoss, and regularized form of milLoss. The $P_{Y|X}$ and $P_Y$ are the conditional and marginal label distribution, respectively. The $Q_{Y|X}$ and the $U_{Y|X}$ are the predicted conditional label distribution and the uniform conditional label distribution.}\label{Tab:LossFunctionComparisons}
\end{table}

Another line of works which is highly related to this paper includes \cite{szegedy_rethinking_2016,meister_generalized_2020,pereyra_regularizing_2017,wang_proselflc_2021} where the regularized forms of celLoss are considered such as the LSR, CP, and LC. See the difference between these loss functions and the regularized form of our proposed one in Table \ref{Tab:LossFunctionComparisons}. The key assumption of the LSR and the CP is that the one-hot label is too confident, and a less confident prediction should be preferred. This is achieved by encouraging the prediction to be also close to a uniform distribution in LSR, or to have high entropy in CP \cite{szegedy_going_2015,pereyra_regularizing_2017}. The LC assumes the model will fit to the data distribution before overfitting to the noise during training, and the model should trust its prediction after certain stages during training. This is achieved by encouraging the model to have low-entropy or high-confidence prediction \cite{wang_proselflc_2021}. In both \cite{szegedy_rethinking_2016,pereyra_regularizing_2017,wang_proselflc_2021} and most of other related works, the regularizations still look at the conditional label distribution only while the regularization term in our formulation looks at the marginal label distribution.

{\bf Notations:}  We denote by $P(X)$ or $P_X$ the probability mass function of $X$ if $X$ is a discrete random variable or vector, and by $p(X)$ or $p_X$ the probability density function of $X$ if $X$ is continuous. Without loss of generality, we refer to both as probability distributions. The joint distribution of a continuou radnom variable $X$ and discrete random variable $Y$ will be denoted by $p_{X,Y}$ or $p(X,Y)$. For a distribution $Q_{Y}$ parameterized by $\theta$, we will denote it alternatively by $Q_{Y;\theta}$ and $Q_{Y}(\cdot;\theta)$. The probability mass (or the probability density) at a realization $y$ of discrete (continuous) random variable $Y$ will be denoted by $P_Y(y)$ and $P_Y(Y=y)$ (or $p_Y(y)$ and $p_Y(Y=y)$) interchangeablely. We use $[B]$  to denote a set $\{1,2,\cdots,B\}$ where $B$ is a positive integer. We denote by $\mcS:=\{(x_i,y_i)\}_{i=1}^N \subset\mbR^n\times[C]$ a set of data samples where $x_i\in\mbR^n$ is the input (or feature) and $y_i$ is the corresponding output (or label, or prediction, or target), and by $\mcS_x:=\{x_i\}_{i=1}^N$ a set of features or inputs by dropping the labels. In this paper, we assume $\|x_i - x_j\| >0, \forall i\neq j$. The $\mcS|_x$ denotes a subset of $\mcS$ with the input being $x$, i.e., $\mcS|_x :=\{(x_i,y_i)\in\mcS: x_i=x\}$. We denote by $\bm{0}$ a vector or a matrix whose elements are all zero, and by $\bm{1}$ a vector or a matrix whose elements are 1. We use $I_n\in\mbR^{n\times n}$ to denote an identity matrix. The $|Q|$ denotes the determinant of a square matrix $Q$, and the cardinality of a set $\mcS$ is denoted by $\card(\mcS)$ or $|\mcS|$. All the proofs can be found in the Appendix.

\section{Preliminaries}\label{Sec:Preliminaries}

We consider the classification task in machine learning, i.e., given a dataset $\mcS:=\{(x_i,y_i)\}_{i=1}^N $ drawn I.I.D. according to a joint data distribution $p_{X,Y}$ where $(x_i,y_i)\in\mbR^n\times[C]$ with $C$ being a positive integer, we want to learn a mapping $M:\mbR^n\to[C]$ from $\mcS$ such that $M$ can classify an unseen sample $x'\sim p_X$ in $\mbR^n$ to the correct class. The mutual information $I(X,Y)$ of the input $X$ and the label $Y$ under the joint distribution $p_{X,Y}$ can then be defined as
\begin{align}\label{Eq:MutualInfoDefn}
	I(X;Y)
	&:= \int_{\mbR^n} \sum_{y\in [C]} p(x,y) \log\left( \frac{p(x,y)}{p(x) P(y)} \right) dx,
\end{align} 
where we also define $p(x,y):=p(x)P(y|x)$ and $p(x,y):=P(y)p(x|y), \forall x\in\mbR^n, y\in[C]$. The entropy, differential entropy, and cross entropy are exactly the same as those in Shannon information theory. We define the conditional entropy, the conditional differential entropy, and the conditional cross entropy as in Definitions \ref{Definition:ConditionalEnt} and \ref{Definition:ConditionalCrossEntropy}, respectively.

\begin{definition}\label{Definition:ConditionalEnt}
	(Conditional Differential Entropy and Conditional Entropy) For a joint distribution $p_{X,Y}$ of a continuous random vector $X\in\mbR^n$ and discrete random variable $Y\in[C]$, we define the conditional differential entropy $h(X|Y)$ as 
	$h(X|Y):=  \sum_{y\in[C]} P(y) \int_{\mbR^n} p(x|y) \log\left( \frac{1}{p(x|y)} \right)dx$, 
	and the instance conditional differential entropy at realization $y$ for $Y$ as $
	h(X|y):= \int_{\mbR^n} p(x|y) \log\left( \frac{1}{p(x|y)} \right)dx$. We define the conditional entropy $H(Y|X)$ as
	$H(Y|X):=  \int_{\mbR^n}  p(x)   \sum_{y\in[C]} P(y|x) \log\left( \frac{1}{P(y|x)} \right)dx$, 
	and the instance conditional entropy at realization $x$ of $X$ as $
	H(Y|x):= \sum_{y\in[C]} P(y|x) \log\left( \frac{1}{P(y|x)} \right)$
\end{definition}

\begin{definition}\label{Definition:ConditionalCrossEntropy}
	(Conditional Cross Entropy) For two joint distributions $p_{X,Y}$ and $q_{X,Y}$ of a continuous random vector $X\in\mbR^n$ and discrete random variable $Y\in[C]$, we define the conditional cross entropies as 
	$H(P_{Y|X},Q_{Y|X})
	:= \int_{\mbR^n} p_X(x) \sum_{y\in[C]} P_{Y|X}(y|x) \log\left( \frac{1}{Q_{Y|X}(y|x)} \right)dx$, 
	and 
	$h(p_{X|Y}, q_{X|Y}):= \sum_{y\in[C]} P_Y(y)  \int_{\mbR^n} p_{X|Y}(x|y) \log\left( \frac{1}{q_{X|Y(x|y)}} \right)dx$.
\end{definition}

The conditional cross entropy will be used to derive a new formulation for training classifiers. The connections among these information-theoretic quantities associated with a hybrid distribution $p_{X,Y}\in\mbR^n\times[C]$ are presented in Theorem \ref{Thm:ITQuantitiesConnections}.

\begin{thm}\label{Thm:ITQuantitiesConnections}
	(Connections among Different Information-theoretic Quantities) With the definition of mutual information between a continous random vector $X\in\mbR^n$ and a discrete random variable $Y\in[C]$ in \eqref{Eq:MutualInfoDefn} and Defintions \ref{Definition:ConditionalEnt}-\ref{Definition:ConditionalCrossEntropy}, we have
	$I(X,Y) = H(Y) - H(Y|X)$ and $I(X,Y) = h(X) - h(X|Y).$
\end{thm}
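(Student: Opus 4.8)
The plan is to start directly from the definition \eqref{Eq:MutualInfoDefn} of $I(X;Y)$ and exploit the two factorizations of the hybrid joint density recorded right after it, namely $p(x,y)=p(x)P(y|x)$ and $p(x,y)=P(y)p(x|y)$ for all $x\in\mbR^n$, $y\in[C]$. For the first identity, I would substitute $p(x,y)=p(x)P(y|x)$ inside the logarithm; the factor $p(x)$ cancels in the ratio, leaving $\log\bigl(P(y|x)/P(y)\bigr)=\log\frac{1}{P(y)}-\log\frac{1}{P(y|x)}$. Splitting the integral-sum accordingly gives two terms. In the term carrying $\log\frac{1}{P(y|x)}$ I rewrite $p(x,y)=p(x)P(y|x)$ again to recognize exactly $H(Y|X)=\int_{\mbR^n}p(x)\sum_{y}P(y|x)\log\frac{1}{P(y|x)}\,dx$ from Definition \ref{Definition:ConditionalEnt}. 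In the term carrying $\log\frac{1}{P(y)}$, I pull $\log\frac{1}{P(y)}$ out of the $x$-integral and use $\int_{\mbR^n}p(x,y)\,dx=P(y)$ (marginalization, justified by Tonelli since the integrand is nonnegative), obtaining $\sum_{y}P(y)\log\frac{1}{P(y)}=H(Y)$. Hence $I(X,Y)=H(Y)-H(Y|X)$.

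For the second identity I would run the symmetric argument with the other factorization $p(x,y)=P(y)p(x|y)$: the factor $P(y)$ cancels, leaving $\log\bigl(p(x|y)/p(x)\bigr)=\log\frac{1}{p(x)}-\log\frac{1}{p(x|y)}$. The term with $\log\frac{1}{p(x|y)}$ becomes $\sum_{y}P(y)\int_{\mbR^n}p(x|y)\log\frac{1}{p(x|y)}\,dx=h(X|Y)$ by Definition \ref{Definition:ConditionalEnt}, and the term with $\log\frac{1}{p(x)}$ collapses via $\sum_{y}p(x,y)=p(x)$ to $\int_{\mbR^n}p(x)\log\frac{1}{p(x)}\,dx=h(X)$, the differential entropy. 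This yields $I(X,Y)=h(X)-h(X|Y)$.

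The only real subtlety — the step I expect to be the main obstacle to make fully rigorous — is the legitimacy of splitting $\log\frac{p(x,y)}{p(x)P(y)}$ into a difference of two logarithms and then integrating the pieces separately: this is valid as stated only when each of the resulting integrals is finite (or at least not of the form $\infty-\infty$). I would handle this by assuming, as is standard in this hybrid setting, that $H(Y)$, $H(Y|X)$, $h(X)$, and $h(X|Y)$ are all well-defined and finite (the paper's working hypotheses on $p_{X,Y}$ already implicitly guarantee this for the data models considered); under that assumption the additive split is justified termwise, and the interchanges of $\sum_{y\in[C]}$ (a finite sum) with $\int_{\mbR^n}$ are immediate. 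A secondary routine check is that the two definitions $p(x,y):=p(x)P(y|x)$ and $p(x,y):=P(y)p(x|y)$ are consistent, i.e. amount to Bayes' rule, which the excerpt takes as given. With these points dispatched, the computation is a short substitution-and-marginalize argument and nothing deeper is needed.
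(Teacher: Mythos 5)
Your proposal is correct and follows essentially the same route as the paper's own proof: substitute the two factorizations $p(x,y)=p(x)P(y|x)$ and $p(x,y)=P(y)p(x|y)$, split the logarithm of the ratio, and marginalize each piece to recognize $H(Y)$, $H(Y|X)$, $h(X)$, and $h(X|Y)$. Your additional remark on finiteness of the entropies (to justify the termwise split) is a sensible caveat that the paper leaves implicit, but it does not change the argument.
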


\section{Traning Classifiers via Conditional Entropy Learning}\label{Sec:CrossEntropyLossTrainingAsConditionalEntropyEstimation}

The common practice in the machine/deep learning community separates the training or learning process and the decision or inference process, i.e.,  by first learning a conditional probability $Q_{Y|X}(y|x; \theta_{Y|X})\in[0,1]$of $Y$ given realization $x$ of $X$, and then making decisions about the labels via checking which class achieves the highest probability. Here, the $\theta_{Y|X}$ denotes the parameters associated with the function $Q_{Y|X}(\cdot;\theta_{Y|X})$. The former process is achieved by minimizing the cross entropy loss between an empirical conditional distribution $\hat{P}_{Y|X}$ and the estimated conditional distribution $Q_{Y|X}(\cdot;\theta_{Y|X})$, i.e.,
\begin{align}\label{Eq:CondEntLoss}
	\min_{\theta_{Y|X}} \frac{1}{N} \sum_{ (x,y)\in\mcS, c\in[C]} 
	\hat{P}_{Y|X}(c|x) \log\left( \frac{1}{ Q_{Y|X}(c|x; \theta_{Y|X})} \right) ,
\end{align}
while the inference process is then achieved via $\bar{y}_i = \arg\max_{c\in[C]} \left[Q_{Y|X}(x_i; \theta_{Y|X})\right]_c$. An example of $\hat{P}_{Y|X}(c|x)$ is the one-hot encoding \cite{he_deep_2015}, i.e., $\hat{P}_{Y|X}(c|x_i) = 1$ if $c=y_i$, or 0 if otherwise.

The empirical distribution $\hat{P}_{Y|X}$ is usually affected by the data collection process and the encoding methods for labels. For example, in image recognition tasks, a single image can have multiple objects, but it is at the human annotators' discretion about which label to use. Even for this particular label, different label encoding methods can give different label representations. The common approaches for encoding the $\hat{P}_{Y|X}$ include the one-hot representation, and the regularized forms such as the  $(1-\epsilon) \hat{P}_{Y|X} - \epsilon Q_{Y|X}$  \cite{pereyra_regularizing_2017,wang_proselflc_2021}. 

With the one-hot encoding for labels, the cross entropy minimization \eqref{Eq:CondEntLoss} can be simplified as
\begin{align}\label{Eq:CondEntLoss_OneHot}
	\min_{\theta_{Y|X}} \frac{1}{N} \times \sum_{(x,y)\in\mcS} \left( 
	\log\left( \frac{1}{ Q_{Y|X}(y|x; \theta_{Y|X})}  \right) \right)
\end{align}
The $\sum_{ c\in[C]} 
\hat{P}_{Y|X}(c|x) \log\left( \frac{1}{ Q_{Y|X}(c|x; \theta_{Y|X})} \right)$ in \eqref{Eq:CondEntLoss} can be interpreted as an estimate of the instance conditional entropy of $P_{Y|X}(Y|x)$ conditioning on the realization $x$ of $X$, i.e., $H(Y|x) \approx \sum_{ c\in[C]} 
\hat{P}_{Y|X}(c|x) \log\left( \frac{1}{ Q_{Y|X}(c|x;\theta_{Y|X}) } \right)$.
In Theorem \ref{Thm:EntEstViaCrossEnt}, we will show that this is indeed the case under certain conditions, and the cross entropy minimization problem in \eqref{Eq:CondEntLoss} is essentially learning the conditional entropy of the truth data distribution. With the uniqueness assumption of the realizations of $X$ in the dataset $\mcS$, the uniform distribution can be treated as the empirical distribution of $X$ over $\mcS_x$, and the objective function in \eqref{Eq:CondEntLoss} becomes an estimate of the conditional entropy $H(Y|X)$ with respect to $P_{Y|X}$, i.e., 
\begin{align}\label{Eq:CdntEtrp}
	& H(Y|X) \nonumber \\
	& \approx \sum_{(x,y)\in\mcS} 
	\left( \hat{P}_X(x) \hat{P}_{Y|X}(y|x) \log\left( \frac{1}{ Q_{Y|X}(y|x;\theta_{Y|X})} \right) \right).
\end{align}

\begin{thm}\label{Thm:EntEstViaCrossEnt}
	(Cross Entropy Minimization as Entropy Learning) For an arbitrary discrete distribution $P_{Y}$ in $[C]$, we have 
	$H(Y) \leq \inf_{Q_Y} H(P_Y,Q_Y)$, 
	where $Q_Y$ is a distribution of $Y$, and equality holds if and only if $P_Y=Q_Y$. When a set of $N$ data points $\mcS:=\{y_i\}_{i=1}^N$ drawn independently from $P_{Y}$ is given, by defining $R(y):= \frac{P_Y(y)}{\hat{P}_Y(y)}$ where $\hat{P}_Y$ is the empirical distribution associated with $\{y_i\}_{i=1}^N$, we have 
	$H(Y) \leq \inf_{Q^g_Y} H(\hat{P}_Y^g, Q_Y^g)$,
	where $\hat{P}_Y^g$ is defined as $
	\hat{P}_Y^g(y) : = \hat{P}_Y(y) R(y), \forall {y\in[C]}$, 
	and $Q_Y^g$ is defined as $
	Q_Y^g(y) = Q_Y(y)R(y), \forall {y\in[C]}$, 
	with $Q_Y$ being a distribution of $Y$. The inequality holds if and only if $Q_Y = P_Y = \hat{P}_Y, \forall {y\in[C]}$.
\end{thm}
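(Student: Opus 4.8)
The plan is to obtain both statements from the non-negativity of Kullback--Leibler divergence (Gibbs' inequality) together with an importance-sampling identity. For the first part, I would write the gap between cross entropy and entropy as a divergence, $H(P_Y,Q_Y)-H(Y)=\sum_{y\in[C]}P_Y(y)\log(P_Y(y)/Q_Y(y))=D_{\mathrm{KL}}(P_Y\|Q_Y)$, and then bound $-D_{\mathrm{KL}}(P_Y\|Q_Y)=\sum_{y}P_Y(y)\log(Q_Y(y)/P_Y(y))\le\log\sum_{y}Q_Y(y)=0$ via Jensen's inequality for the concave function $\log(\cdot)$ (equivalently, the log-sum inequality). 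Strict concavity gives equality exactly when $Q_Y(y)/P_Y(y)$ is constant on the support of $P_Y$ and $Q_Y$ is supported there, i.e.\ when $Q_Y=P_Y$; since $Q_Y=P_Y$ is admissible, $\inf_{Q_Y}H(P_Y,Q_Y)=H(Y)$ with the infimum attained only at $Q_Y=P_Y$.

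For the second part, the key observation is that the weight $R(y)=P_Y(y)/\hat P_Y(y)$ is exactly the importance weight that turns an empirical average into a population average: by construction $\hat P_Y^g(y)=\hat P_Y(y)R(y)=P_Y(y)$ for every $y$, so $\hat P_Y^g$ coincides identically with the true marginal. I would then substitute the definitions into the reweighted cross entropy and regroup into divergences: $H(\hat P_Y^g,Q_Y^g)=\sum_{y}P_Y(y)\log\!\big(1/(Q_Y(y)R(y))\big)=H(P_Y,Q_Y)+\sum_{y}P_Y(y)\log(\hat P_Y(y)/P_Y(y))=H(Y)+D_{\mathrm{KL}}(P_Y\|Q_Y)-D_{\mathrm{KL}}(P_Y\|\hat P_Y)$, using the identity $H(P_Y,Q_Y)=H(Y)+D_{\mathrm{KL}}(P_Y\|Q_Y)$ from the first part. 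Minimizing over the admissible $Q_Y$ and invoking $D_{\mathrm{KL}}(P_Y\|Q_Y)\ge0$ (equality iff $Q_Y=P_Y$) reduces everything to the residual term $D_{\mathrm{KL}}(P_Y\|\hat P_Y)$, which vanishes iff $\hat P_Y=P_Y$; combining the minimizer condition with the vanishing of this residual gives the stated equality case $Q_Y=P_Y=\hat P_Y$. I would also note that $\hat P_Y\to P_Y$ almost surely as $N\to\infty$ by the strong law of large numbers, so the residual term disappears in the large-sample regime.

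Once the divergence form is in place the computations are routine; the parts that need care are bookkeeping. One must specify precisely the class over which $Q_Y^g$ (equivalently $Q_Y$) ranges --- and, since $\hat P_Y(y)$ can be zero for classes not seen in the sample, either restrict attention to the support or assume every class is represented --- so that the infimum is well posed and the substitution $Q_Y^g=Q_Y R$ is legitimate. One must also track the sign of the residual $-D_{\mathrm{KL}}(P_Y\|\hat P_Y)\le0$ carefully, since it is what makes the equality condition of the second part $Q_Y=P_Y=\hat P_Y$ rather than merely $Q_Y=\hat P_Y$; apart from that, the whole argument is two applications of Gibbs' inequality, first to $(P_Y,Q_Y)$ and then to the importance-reweighted pair.
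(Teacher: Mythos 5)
Your first part is fine and is exactly the paper's argument (Gibbs' inequality / nonnegativity of $D_{KL}$, equality iff $Q_Y=P_Y$). The problem is in the second part. You take the definition $Q_Y^g(y)=Q_Y(y)R(y)$ literally from the statement and, correctly using $\hat{P}_Y^g=P_Y$, arrive at the identity $H(\hat{P}_Y^g,Q_Y^g)=H(Y)+D_{KL}(P_Y\|Q_Y)-D_{KL}(P_Y\|\hat{P}_Y)$. But then your conclusion does not follow: the residual $-D_{KL}(P_Y\|\hat{P}_Y)\le 0$ does not merely sharpen the equality condition, it reverses the direction of the claimed bound. Your own identity gives $\inf_{Q_Y}H(\hat{P}_Y^g,Q_Y^g)=H(Y)-D_{KL}(P_Y\|\hat{P}_Y)$, which is \emph{strictly smaller} than $H(Y)$ whenever $\hat{P}_Y\neq P_Y$, so with the multiplicative convention the inequality $H(Y)\le\inf_{Q_Y^g}H(\hat{P}_Y^g,Q_Y^g)$ is simply false. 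Concretely, take $C=2$, $P_Y=(1/2,1/2)$, $\hat{P}_Y=(1/4,3/4)$, $Q_Y=P_Y$: then $Q_Y^g=(1,1/3)$ and $H(\hat{P}_Y^g,Q_Y^g)=\tfrac12\log 3<\log 2=H(Y)$. A proof cannot both derive this identity and assert the theorem; the step ``minimizing over $Q_Y$ \dots gives the stated inequality'' is the gap.

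The source of the trouble is a sign/typo mismatch that a complete argument should have caught: the appendix version of the theorem (and the paper's proof) defines $Q_Y^g(y)=Q_Y(y)/R(y)$, not $Q_Y(y)R(y)$. With the division convention your computation becomes $H(\hat{P}_Y^g,Q_Y^g)=\sum_y P_Y(y)\log\bigl(R(y)/Q_Y(y)\bigr)=H(Y)+D_{KL}(P_Y\|Q_Y)+D_{KL}(P_Y\|\hat{P}_Y)\ge H(Y)$, with equality iff $Q_Y=P_Y$ and $\hat{P}_Y=P_Y$, which is exactly the claim and is essentially the paper's proof (the paper writes it as $H(Y)\le H(P_Y,Q_Y)\le H(\hat{P}_Y^g,Q_Y^g)$, using $-D_{KL}(P_Y\|\hat{P}_Y)\le 0$ to pass from the second to the third quantity). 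So your decomposition via $\hat{P}_Y^g=P_Y$ is a clean route --- arguably cleaner than the paper's chained bound --- but as written your proof establishes a statement with the wrong sign on the residual and therefore does not prove the theorem; you need to either adopt the $Q_Y/R$ convention or explicitly flag the statement's definition as inconsistent with the claimed inequality. (Your side remarks about zero empirical masses and the law of large numbers are fine but immaterial to this issue.)
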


Theorem \ref{Thm:EntEstViaCrossEnt} shows that the entropy $H(Y)$ is upper bounded by the cross entropy $H(P_Y,Q_Y)$. It also tells us that the entropy $H(Y)$ can actually be estimated from the empirical distribution $\hat{P}_Y$ over the sample set by minimizing $H(\hat{P}_Y^g,{Q}_Y^g)$. In practice, when we assume $R(y)=1, \forall y\in[C]$, and this gives $\hat{H}(Y)
:= \inf_{Q_Y} H(\hat{P}_Y,{Q}_Y)$
which is equal to $H(Y)$ if and only if  $P_Y(y) = \hat{P}_Y(y) = Q_Y(y)$. This essentially implies the possibility of learning entropy from empirical samples. 

In the classification tasks as we discussed previously, we can have similar formulation for $H(Y|X)$, i.e., $\hat{H}(Y|X):= \inf_{Q_{Y|X}} H(\hat{P}_{Y|X},{Q}_{Y|X})$. In \eqref{Eq:CondEntLoss}, the objective function in the cross entropy minimization is essentially $H(\hat{P}_{Y|X},{Q}_{Y|X})$ with ${Q}_{Y|X}$ parameterized by $\theta_{Y|X}$. This implies that the commonly used cross entropy loss minimization learns the conditional entropy $H(Y|X)$. We will refer to these classifiers as conditional entropy learning classifiers (CELC, /selk/). From Theorem \ref{Thm:ITQuantitiesConnections},  the mutual information (MI) can be learned via
\begin{align}\label{Eq:MutualInfoViaDoE_YEntMinusYCondEnt}
	I(X;Y)
	& = H(Y) - H(Y|X) \nonumber \\
	& \approx \inf_{Q_Y} H(\hat{P}_Y,Q_Y) - \inf_{Q_{Y|X}} H(\hat{P}_{Y|X},Q_{Y|X}).
\end{align}

\section{Training Classifiers via Mutual Information Learning}\label{Sec:TrainingClassifiersViaMutualInformationEstimation}

In this section, we formally present a new framework for training classifiers, i.e., via mutual information learning instead of the conditional entropy learning in existing paradigm, and we refer to classifier trained in this way as mutual information learned classifier (MILC, /milk/). 

For the mutual information learning formulation in \eqref{Eq:MutualInfoViaDoE_YEntMinusYCondEnt}, we can parameterize $Q_Y$ and $Q_{Y|X}$ via a group of parameters $\theta_{Y|X}\in\mbR^{m}$. More specifically, we parameterize $Q_{Y|X}(Y|X;\theta_{Y|X})$ using $\theta_{Y|X}$, and then calculate the marginal estimation $Q_{Y}(Y;\theta_{Y|X})$ via ${Q}_Y(y; \theta_{Y|X}) 
:= \sum_{x\in\mcS_x} \hat{P}_X(x) Q_{Y|X}(y|x;\theta_{Y|X})$.
Thus, the mutual information has the following form
\begin{align}\label{Eq:MIAlterEstParametericMethod_MargDistFromLearnedJointDist}
	I(X;Y)
	& \approx  \inf_{\theta_{Y|X}} 
	H(\hat{P}_Y,{Q}_Y(Y;{\theta_{Y|X}})) \nonumber\\ 
	&\quad - 
	\inf_{\theta_{Y|X}}H(\hat{P}_{Y|X}, Q_{Y|X}(Y|X; \theta_{Y|X})),
\end{align}
which is a multi-object optimization problem \cite{miettinen_nonlinear_2012}. An equivalent regularized form of \eqref{Eq:MIAlterEstParametericMethod_MargDistFromLearnedJointDist} can be as follows
\begin{align}\label{Eq:MIAlterEstParametericMethodEquivalentRegluarizedForm}
	\inf_{\theta_{Y|X}} 
	H(\hat{P}_{Y|X}, Q_{Y|X}) +  \lambda_{ent}H(\hat{P}_Y, Q_Y),
\end{align}
where $\lambda_{ent}>0$ is a regularization hyperparameter. In \eqref{Eq:MIAlterEstParametericMethodEquivalentRegluarizedForm}, the $H(\hat{P}_{Y|X}, Q_{Y|X})$ essentially corresponds to the cross entropy loss in multi-class classification while $H(\hat{P}_Y, Q_Y)$ can be treated as a regularization term. The cross entropy term guides machine learning algorithms to learn accurate estimation of condition entropy $H(Y|X)$, and the regularization term encourages the model also to learn the label entropy $H(Y)$. The overall deep neural network classifiers' trainning or learning pipeline, and the corresponding decision or inference pipeline are presented in the Appendix. To the best of our knowledge, the label entropy has never been used to train machine learning systems by the community. 

One may want to estimate the MI via empirical distribution only, i.e.,
\begin{align}\label{Eq:MutualInfoNaive}
	& I(X;Y) = H(Y) - H(Y|X) \nonumber\\
	& \approx \sum_{c\in[C]} \hat{P}_Y(c)\log\left(\frac{1}{\hat{P}_Y(c)}\right) \nonumber\\
	&\quad -  \sum_{i\in[N]} \hat{P}_X(x_i) \times \sum_{c\in[C]} \hat{P}_{Y|X}(c|x_i) \times \log\left(\frac{1}{\hat{P}_{Y|X}(c|x_i)}\right).
\end{align}
The problem is that the estimation can be quite inaccurate, and this is formally presented in Theorem \ref{Thm:EntEstBound} where we give the error bound of estimating entropy of $Y$ via the empirical distribution $\hat{p}_Y$. 

\begin{thm}\label{Thm:EntEstBound}
	(Error Bound of Entropy Learning from Empirical Distribution) For two arbitrary distributions $P_Y$ and $\hat{P}_Y$ of a discrete random variable $Y$ over $[C]$, we have 
	\begin{align*}
		\sum_{y\in[C]} R(y) \log\left( \frac{1}{{P}_Y(y)} \right)
		\leq 
		\Delta
		\leq  \sum_{y\in[C]} R(y) \log\left( \frac{1}{\hat{P}_Y(y)} \right)
	\end{align*}
	where $\Delta:=H_{P_Y}(Y) - H_{\hat{P}_Y}(Y)$, $R(y) = P_Y(y) - \hat{P}_Y(y), \forall y \in[C]$, and $H_{P_Y}(Y)$ is the entropy of $Y$ calculated via $P_Y$. The equality holds iff $P_Y = \hat{P}_Y$. 
\end{thm}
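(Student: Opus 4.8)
The plan is to reduce both inequalities to Gibbs' inequality, i.e. to the non-negativity of the Kullback--Leibler divergence. Abbreviate $p_y := P_Y(y)$ and $q_y := \hat{P}_Y(y)$ for $y\in[C]$, so that $R(y) = p_y - q_y$ and $\sum_{y\in[C]} R(y) = 0$ since both are probability distributions on $[C]$ (adopting the usual convention $0\log 0 = 0$, and assuming $p_y,q_y>0$ on the common support so the logarithms are defined). Recall that $\Delta = H_{P_Y}(Y) - H_{\hat{P}_Y}(Y) = -\sum_y p_y\log p_y + \sum_y q_y\log q_y$. The key observation is that the two ``slack'' quantities separating $\Delta$ from the two claimed bounds are, after cancellation, precisely the two asymmetric relative entropies between $P_Y$ and $\hat P_Y$.

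For the left inequality I would form the gap between $\Delta$ and the claimed lower bound and expand:
\begin{align*}
\Delta - \sum_{y\in[C]} R(y)\log\frac{1}{p_y}
&= -\sum_y p_y\log p_y + \sum_y q_y\log q_y + \sum_y (p_y - q_y)\log p_y \\
&= \sum_y q_y\log\frac{q_y}{p_y} = D_{\mathrm{KL}}(\hat{P}_Y \,\|\, P_Y) \ge 0 ,
\end{align*}
where the $-\sum_y p_y\log p_y$ terms cancel against the $+\sum_y p_y\log p_y$ coming out of $(p_y-q_y)\log p_y$, leaving exactly a relative entropy, which is non-negative. This gives $\sum_{y\in[C]} R(y)\log(1/p_y) \le \Delta$.

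For the right inequality I would perform the symmetric computation on the other gap:
\begin{align*}
\sum_{y\in[C]} R(y)\log\frac{1}{q_y} - \Delta
&= -\sum_y (p_y - q_y)\log q_y + \sum_y p_y\log p_y - \sum_y q_y\log q_y \\
&= \sum_y p_y\log\frac{p_y}{q_y} = D_{\mathrm{KL}}(P_Y \,\|\, \hat{P}_Y) \ge 0 ,
\end{align*}
where now the $\sum_y q_y\log q_y$ terms cancel, leaving the relative entropy in the reversed direction. This gives $\Delta \le \sum_{y\in[C]} R(y)\log(1/q_y)$.

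For the equality characterization, each bound is attained exactly when the corresponding divergence vanishes; by the equality case of Gibbs' inequality this occurs iff $\hat{P}_Y = P_Y$ on all of $[C]$, in which case $R\equiv 0$ and $\Delta = 0$, so all three quantities coincide (at $0$). I do not anticipate any genuine obstacle: the content is entirely in recognizing that the two slack terms are the two KL divergences, after which everything follows from a standard inequality. The only care needed is bookkeeping around zero-probability atoms via the $0\log 0 = 0$ convention, and noting that it is the normalization $\sum_y p_y = \sum_y q_y = 1$ that makes the leftover sums bona fide divergences rather than mere differences of logarithmic moments.
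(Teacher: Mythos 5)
Your proposal is correct and is essentially the paper's own argument: in both cases the slack between $\Delta$ and each claimed bound is identified as a KL divergence (the paper writes $\Delta = \sum_y R(y)\log\bigl(1/\hat{P}_Y(y)\bigr) - D_{KL}(P_Y\|\hat{P}_Y)$ for the upper bound and handles the lower bound "similarly", which is exactly your reversed-divergence computation), and Gibbs' inequality plus its equality case finishes the proof. The only difference is presentational — the paper inserts ratios inside the logarithm rather than expanding the gap directly — so there is nothing substantive to add.
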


\begin{table*}[h]
	\centering
	\begin{tabular}{|l | c | c| c| c| c|}
		\hline 
		top-1 accuracy & celLoss & celLoss$+$LSR & celLoss$+$CP & celLoss$+$LC & milLoss \\
		\hline
		MLP & $0.934\pm0.000$ & 0.934$\pm$0.001 & 0.930$\pm$0.002 & 0.932$\pm$0.000 & $0.945\pm0.001$\\
		\hline 
		CNN & $0.981\pm0.000$ & 0.982$\pm$0.001 & 0.980$\pm$0.000 & 0.980$\pm$0.001 & $0.984\pm0.001$\\
		\hline 
	\end{tabular}
	\caption{Top-1 accuracy on MNIST dataset associated with different models which are trained with different loss objective function.}\label{Tab:ProofOfConcept_MNIST}
\end{table*}

\begin{table*}[h]
	\centering
	\begin{tabular}{|l | c | c| c| c| c|}
		\hline 
		top-1 accuracy & celLoss & celLoss$+$LSR & celLoss$+$CP & celLoss$+$LC & milLoss \\
		\hline
		GoogLeNet & $0.803\pm0.006$ & 0.766$\pm$0.004 & 0.784$\pm$0.006 & 0.791$\pm$0.002 & $0.866\pm0.000$\\
		\hline
		ResNet-18 & $0.732\pm0.002$ & 0.679$\pm$0.001 & 0.703$\pm$0.006 & 0.726$\pm$0.005 & $0.832\pm0.004$\\
		\hline
		MobileNetV2 &0.676$\pm$0.007 & 0.647$\pm$0.005 & 0.660$\pm$0.008 & 0.677$\pm$0.004 & 0.762$\pm$0.006\\
		\hline 
		EfficientNet-B0 & 0.524$\pm$0.013 & 0.510$\pm$0.006 & 0.503$\pm$0.008 & 0.524$\pm$0.009 & 0.682$\pm$0.006\\
		\hline 
		ShuffleNetV2 & 0.604$\pm$0.003 & 0.554$\pm$0.005 & 0.578$\pm$0.005 & 0.600$\pm$0.004 & 0.677$\pm$0.003\\
		\hline
	\end{tabular}
	\caption{Top-1 accuracy on CIFAR-10 dataset associated with different models which are trained with different loss objective function.}\label{Tab:ProofOfConcept_CIFAR10}
\end{table*}

\section{Error Probability Lower Bounds via Mutual Information}\label{Sec:ErrorProbLBound}

In this section, we establish the classification error probability bound in terms of mutual information. We follow Yi et al. to model the learning process as in Figure \ref{Fig:ErrorProbability} \cite{yi_trust_2019,xie_information-theoretic_2019,yi_derivation_2020,yi_towards_2021}, and assume there is a label distribution $P_Y$, and based on realizations from $P_Y$, we can generate a set of observations from $p_X$. Given the observations, we want to infer their labels. By combining the ground truth labels sampled from $P_Y$ and the predicted labels, we then calculate the error probability as $
P_{error} :=
P_{Y,\hat{Y}}\left( \{(Y,\hat{Y}): Y\neq \hat{Y}\} \right)$.
This learning process is consistent with practice. For example, in a dog-cat image classification tasks, we first have the concepts of the two classes, i.e., cat and dog. Then, we can generate observations of these labels/concepts, i.e.,  images of cat and dog by taking pictures of them, or simply drawing them, or using neural image synthesis \cite{goodfellow_deep_2016}. We then use these observations to train models, hoping that they will finally be able to predict the correct labels. Under this framework, we can show that the error probability associated with the learning process as shown in Figure \ref{Fig:ErrorProbability} can be bounded via mutual information $I(X;Y)$, and the results are formally presented in Theorem \ref{Thm:ErrorProbabilityMI_Bounds}.

\begin{thm}\label{Thm:ErrorProbabilityMI_Bounds}
	(Error Probability Bound via Mutual Information) Assume that the learning process $Y\to X\to\hat{Y}$ in Figure \ref{Fig:ErrorProbability} is a Markov chain where $Y\in[C]$, $X\in\mbR^n$, and $\hat{Y}\in[C]$, then for the prediction $\hat{Y}$ from an arbitrary learned model, we have $
	\max\left( 0,\frac{2+H(Y) - I(X;Y) - a}{4} \right)
	\leq P_{error}$
	where $a:=\sqrt{(H(Y) - I(X;Y) - 2)^2 + 4}$.
\end{thm}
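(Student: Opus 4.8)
The plan is to read $P_{error}$ as the decoding error probability of the Markov chain $Y\to X\to\hat Y$, combine the data processing inequality with Fano's inequality, and then push the resulting estimate through a relaxation tailored so that solving for $P_{error}$ reduces to solving a quadratic.

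First I would exploit the Markov structure: by the data processing inequality $I(Y;\hat Y)\le I(X;Y)$, hence
\[
H(Y\mid\hat Y)=H(Y)-I(Y;\hat Y)\ \ge\ H(Y)-I(X;Y),
\]
and by Theorem~\ref{Thm:ITQuantitiesConnections} the right-hand side equals $H(Y\mid X)$. So it is enough to upper bound $H(Y\mid\hat Y)$ using $P_{error}$ alone, which is exactly what Fano's inequality does: with $E:=\mathbf 1[Y\ne\hat Y]$ (so $\mbP(E=1)=P_{error}$), expanding $H(Y,E\mid\hat Y)$ in the two obvious ways and using that $E$ is a function of $(Y,\hat Y)$ and that $H(Y\mid E=0,\hat Y)=0$ yields
\[
H(Y)-I(X;Y)\ \le\ H(Y\mid\hat Y)\ \le\ H_b(P_{error})+P_{error}\log(C-1),
\]
where $H_b$ is the binary entropy.

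The crux is the next step: I would replace the Fano right-hand side $H_b(P_{error})+P_{error}\log(C-1)$ by an upper bound that (i) no longer involves the class count $C$ and (ii), after clearing denominators, is a quadratic in $P_{error}$. Writing $t:=H(Y)-I(X;Y)$, the relaxation must be engineered to produce
\[
t\ \le\ 2P_{error}+\frac{1}{2(1-P_{error})},
\]
equivalently $4P_{error}^{2}-2(2+t)P_{error}+(2t-1)\le 0$. Constructing such a relaxation and checking that it is valid for every $P_{error}\in[0,1)$ while landing exactly on this clean quadratic (rather than on something intractable) is where essentially all of the work is, and it is the ``carefully designed relaxation'' the introduction points to as the source of the improvement over the vanilla Fano bound; I expect this to be the main obstacle.

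Finally I would solve the quadratic. The polynomial $q(p):=4p^{2}-2(2+t)p+(2t-1)$ opens upward with reduced discriminant $(2+t)^{2}-4(2t-1)=(t-2)^{2}+4>0$, so $q(P_{error})\le 0$ forces $P_{error}$ to lie between its two real roots, and in particular
\[
P_{error}\ \ge\ \frac{(2+t)-\sqrt{(t-2)^{2}+4}}{4}.
\]
Intersecting with the trivial bound $P_{error}\ge 0$ gives $\max\!\bigl(0,\tfrac{2+H(Y)-I(X;Y)-a}{4}\bigr)\le P_{error}$ with $a=\sqrt{(H(Y)-I(X;Y)-2)^{2}+4}$, as claimed; the $\max$ with $0$ absorbs precisely the range $t\le\tfrac12$, where the smaller root is nonpositive and the statement is vacuous but still true.
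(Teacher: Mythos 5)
Your outer scaffolding is fine---the data-processing step, the definition of $E$, and the final quadratic are all exactly what the paper uses (your target inequality $t\le 2P_{error}+\tfrac{1}{2(1-P_{error})}$, after multiplying by $2(1-P_{error})$, is precisely the paper's quadratic $2P_{error}^2-(2+t)P_{error}+(t-\tfrac12)\le 0$, and your root computation is correct). But the proof has a genuine gap where you yourself flag it: you never produce the ``carefully designed relaxation,'' and the route you propose for it---start from the standard Fano conclusion $H(Y\mid\hat Y)\le H_b(P_{error})+P_{error}\log(C-1)$ and then weaken its right-hand side into something $C$-free and quadratic---cannot work. The term $P_{error}\log(C-1)$ grows without bound in $C$, so no valid upper bound on it can be independent of $C$; the inequality you need is not a relaxation of Fano's conclusion at all, and in fact for large $C$ it is strictly stronger than what Fano's statement gives.

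The missing idea is to modify the Fano \emph{derivation} rather than its conclusion: in the two expansions of $H(E,Y\mid\hat Y)$, bound the conditional term by $H(Y\mid\hat Y,E=1)\le H(Y\mid\hat Y)$ instead of by $\log(C-1)$. This yields $H(Y\mid\hat Y)\le H(P_{error})+P_{error}\,H(Y\mid\hat Y)$, i.e.\ $(1-P_{error})\,H(Y\mid\hat Y)\le H(P_{error})$, and combining with $H(Y\mid\hat Y)\ge H(Y)-I(X;Y)$ (your data-processing step) gives the $C$-free bound $(1-P_{error})\bigl(H(Y)-I(X;Y)\bigr)\le H(P_{error})$. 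The second missing ingredient is the quadratic upper bound on the binary entropy, $H(P_{error})\le 1-2(P_{error}-\tfrac12)^2$ (Lemma~\ref{Lem:ErrorEntropyUB}), which replaces the transcendental $H(P_{error})$ and produces exactly the quadratic you wrote down; dividing the resulting inequality by $1-P_{error}$ recovers your $t\le 2P_{error}+\tfrac{1}{2(1-P_{error})}$. With these two steps supplied, the rest of your argument (solving the quadratic, discriminant $(t-2)^2+4$, intersecting with $P_{error}\ge 0$) goes through verbatim; without them, the central inequality is asserted rather than proved.
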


From Theorem \ref{Thm:ErrorProbabilityMI_Bounds}, we can see that the $P_{error}$ lower bound will decrease when $I(X;Y)$ increases. This is consistent with our intuitions. For example, when the dependence between the observation $X$ and the label $Y$ gets stronger (larger mutual information), it will be easier to infer $Y$ from $X$, thus a smaller error probability can occur. In Figure \ref{Fig:ErrroProbability_and_MI}, we give illustrations of the relation between the error probability lower bound and the mutual information for a balanced underlying data distribution (see Appendix for error probability lower bound for an unbalanced data distribution). We assume uniform marginal distribution for the label. For the case with 100 classes, if the label and the input has zero mutual information, i.e., no dependency between them, we can only draw a random guess and get 0.99 error probability while the lower bound from Theorem \ref{Thm:ErrorProbabilityMI_Bounds} is about 0.9.

\begin{figure}[!htb]
	\centering
	\includegraphics[width=0.6\linewidth]{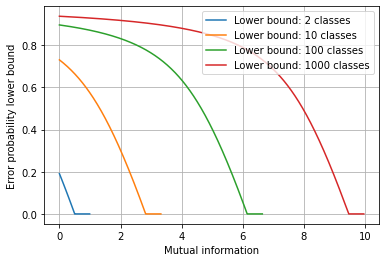}
	\caption{Error probability lower bound and mutual information for balanced dataset: uniform distirbution of labels is assumed.}\label{Fig:ErrroProbability_and_MI}
\end{figure}

\begin{figure}[!htb]
	\centering
	\begin{subfigure}[b]{0.22\textwidth}
		\centering
		\includegraphics[width=\linewidth]{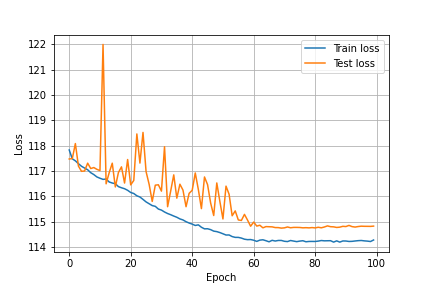}
		\caption{milLoss}\label{Fig:CIFAR10_GoogLeNet_milLoss}
	\end{subfigure}
	~
	\begin{subfigure}[b]{0.22\textwidth}
		\centering
		\includegraphics[width=\linewidth]{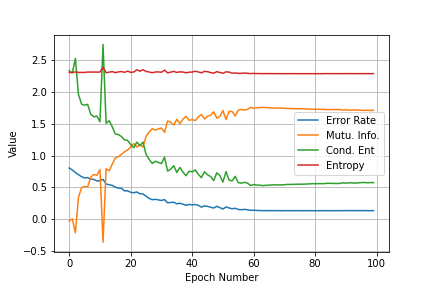}
		\caption{milLoss}\label{Fig:CIFAR10_GoogLeNet_milInformation}
	\end{subfigure}%
	\\
	\begin{subfigure}[b]{0.22\textwidth}
		\centering
		\includegraphics[width=\linewidth]{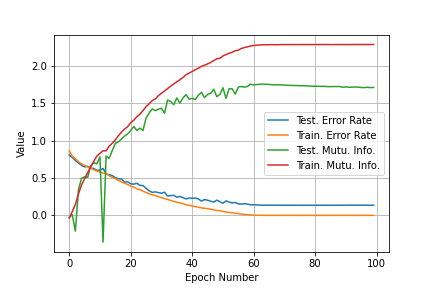}
		\caption{milLoss}\label{Fig:CIFAR10_GoogLeNet_milErrorRate_MI}
	\end{subfigure}
	~
	\begin{subfigure}[b]{0.22\textwidth}
		\centering
		\includegraphics[width=\linewidth]{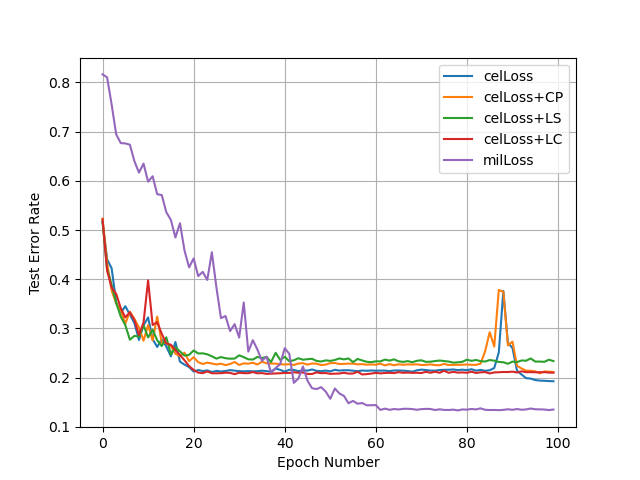}
		\caption{Error rates}\label{Fig:CIFAR10_GoogLeNet_milAcel_ErrorRate}
	\end{subfigure}
	\caption{GoogLeNet on CIFAR-10. \ref{Fig:CIFAR10_GoogLeNet_milLoss}: loss during training and testing at different epochs. \ref{Fig:CIFAR10_GoogLeNet_milInformation}: error rate, mutual information, label conditional entropy, and label entropy during test at different epochs. \ref{Fig:CIFAR10_GoogLeNet_milErrorRate_MI}: mutual information and error rate during training and testing at different epoch. \ref{Fig:CIFAR10_GoogLeNet_milAcel_ErrorRate}: testing error rate curves associated with different loss functions.}\label{Fig:CIFAR10_GoogLeNet}
\end{figure}

We now derive the mutual information bounds for a binary classification data model $P_{X,Y}$ in $\mbR^n\times\{-1,1\}$. In the data generation process, we first sample a label $y\in\{-1,1\}$, and then a corresponding feature $x$ from a Gaussian distribution, i.e., $P(Y=-1) = q, P(Y=1) = 1-q$, 
\begin{align}\label{Defn:BinaryClassificationDataModel}
	\small
	p(X=x|y) = \frac{1}{\sqrt{|2\pi\Sigma|}} \exp\left(-\frac{(x-y\mu)^T\Sigma^{-1}(x-y\mu)}{2}\right),
\end{align}
where $\mu\in\mbR^n$ is a mean vector, and $\Sigma\in\mbR^{n\times n}$ is a positive semidefinite matrix. In this data model, we can derive lower and upper bounds of the mutual information $I(X;Y)$, and the results are presented in Theorem \ref{Thm:BinaryClassificationDataModelTruthMI}. 

\begin{thm}\label{Thm:BinaryClassificationDataModelTruthMI}
	(Mutual Information Bounds of Binary Classification Dataset Model) For the data model with distribution defined in \eqref{Defn:BinaryClassificationDataModel}, we have the mutual information $I(X;Y)$ satisfying
	\begin{align*}\label{Eq:MIinBinaryClassificationModel}
		2\min(q,1-q)\mu^T\Sigma^{-1}\mu \leq I(X;Y) \leq 4q(1-q)\mu^T\Sigma^{-1}\mu.
	\end{align*}
\end{thm}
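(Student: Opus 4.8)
The plan is to start from the identity $I(X;Y) = h(X) - h(X \mid Y)$ of Theorem~\ref{Thm:ITQuantitiesConnections}, written in the more convenient ``pointwise'' form
\begin{align*}
I(X;Y) &= \sum_{y\in\{-1,1\}} P(Y=y)\, D\!\left(p_{X|Y=y}\,\|\,p_X\right) \\
&= q\,D(p_0\|p_X) + (1-q)\,D(p_1\|p_X),
\end{align*}
where, for the model in \eqref{Defn:BinaryClassificationDataModel}, $p_0 := p_{X|Y=-1} = \mcN(-\mu,\Sigma)$, $p_1 := p_{X|Y=1} = \mcN(\mu,\Sigma)$, and $p_X = q\,p_0 + (1-q)\,p_1$. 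The key model-specific fact is that $p_0$ and $p_1$ are Gaussians with a \emph{common} covariance, so their Kullback--Leibler divergence has the closed form $D(p_0\|p_1) = D(p_1\|p_0) = \tfrac12(2\mu)^{T}\Sigma^{-1}(2\mu) = 2\mu^{T}\Sigma^{-1}\mu$. Everything then reduces to sandwiching this (skew Jensen--Shannon) quantity between $\min(q,1-q)\cdot 2\mu^{T}\Sigma^{-1}\mu$ and $q(1-q)\cdot\big(D(p_0\|p_1)+D(p_1\|p_0)\big)$.

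For the upper bound I would use joint convexity of $D(\cdot\|\cdot)$, in particular convexity in the second argument: since $p_X = q\,p_0 + (1-q)\,p_1$,
\begin{align*}
D(p_0\|p_X) \le q\,D(p_0\|p_0) + (1-q)\,D(p_0\|p_1) = (1-q)\,D(p_0\|p_1),
\end{align*}
and symmetrically $D(p_1\|p_X) \le q\,D(p_1\|p_0)$. Substituting these back gives $I(X;Y) \le q(1-q)\big(D(p_0\|p_1)+D(p_1\|p_0)\big) = 4q(1-q)\,\mu^{T}\Sigma^{-1}\mu$, which is exactly the claimed right-hand inequality. (An equivalent route bounds $h(X)$ by the differential entropy of a Gaussian with covariance $\Sigma + 4q(1-q)\,\mu\mu^{T}$ and uses the matrix-determinant lemma, $|I_n + 4q(1-q)\Sigma^{-1}\mu\mu^{T}| = 1 + 4q(1-q)\mu^{T}\Sigma^{-1}\mu$, together with $\log(1+t)\le 2t$; the convexity argument, however, lands on the stated constant directly.)

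For the lower bound I would first peel off the prior asymmetry with the elementary inequality $qa + (1-q)b \ge \min(q,1-q)(a+b)$ for $a,b\ge 0$, which reduces the problem to establishing
\begin{align*}
D(p_0\|p_X) + D(p_1\|p_X) \ge D(p_0\|p_1) = 2\mu^{T}\Sigma^{-1}\mu;
\end{align*}
equivalently, via $I(X;Y) = h(X) - h(X\mid Y)$ with the exact value $h(X\mid Y) = \tfrac12\log\!\big((2\pi e)^{n}|\Sigma|\big)$, it suffices to produce a sharp lower bound on the differential entropy of the two-component Gaussian mixture $p_X$. This is the step I expect to be the main obstacle: a Gaussian mixture has no closed-form entropy, and the off-the-shelf estimates (nonnegativity of KL, concavity of $h$, or a crude pointwise bound $p_X(x)\le\max(p_0(x),p_1(x))$) only yield $I(X;Y)\ge 0$. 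My approach would be to pass to the scalar sufficient statistic $T := X^{T}\Sigma^{-1}\mu$ for $Y$ (so $I(X;Y) = I(T;Y)$ by sufficiency/data processing), under which the model becomes the one-dimensional mean-shift Gaussian channel $T\mid Y=y \sim \mcN(ys,\,s)$ with $s := \mu^{T}\Sigma^{-1}\mu$, and then to control the resulting scalar mixture entropy via a tangent-line / log-sum-type estimate of terms of the form $-\mbE\!\left[\log\!\left(q + (1-q)e^{W}\right)\right]$, carefully tracking the dependence on $s$ so that the leading term reproduces $2\mu^{T}\Sigma^{-1}\mu$; a pairwise-divergence lower bound on mixture entropy (of Kolchinsky--Tracey type) is an alternative tool for the same step.
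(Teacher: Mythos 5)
Your upper bound is correct, and it takes a genuinely different route from the paper's. The paper works directly with $I(X;Y)=h(X)-h(X|Y)$, lower-bounds the log of the mixture density via Jensen's inequality applied to the exponentials, and then evaluates Gaussian quadratic-form expectations (Lemma \ref{Lem:QuadraticFormExpectation}) to obtain $h(X)\le \tfrac12\log|2\pi e\Sigma|+4q(1-q)\mu^T\Sigma^{-1}\mu$. Your decomposition $I(X;Y)=q\,D(p_0\,\|\,p_X)+(1-q)\,D(p_1\,\|\,p_X)$ together with convexity of the KL divergence in its second argument and the closed form $D(p_0\,\|\,p_1)=2\mu^T\Sigma^{-1}\mu$ lands on the same constant $4q(1-q)\mu^T\Sigma^{-1}\mu$ with essentially no computation, and your parenthetical max-entropy route (covariance $\Sigma+4q(1-q)\mu\mu^T$ plus the determinant lemma) even yields the sharper $\tfrac12\log\bigl(1+4q(1-q)\mu^T\Sigma^{-1}\mu\bigr)$. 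This part of your proposal is sound and arguably cleaner than the paper's.

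The lower bound is where your proposal has a genuine gap, and you were right to flag it as the main obstacle: the reduction you propose cannot be completed. Since $p_X\ge q\,p_0$ pointwise, $D(p_0\,\|\,p_X)\le\log(1/q)$, and likewise $D(p_1\,\|\,p_X)\le\log\bigl(1/(1-q)\bigr)$, so $D(p_0\,\|\,p_X)+D(p_1\,\|\,p_X)\le\log\tfrac{1}{q(1-q)}$ is bounded independently of $\mu$, whereas $D(p_0\,\|\,p_1)=2\mu^T\Sigma^{-1}\mu$ is unbounded; the inequality you want to reduce to is therefore false once $2\mu^T\Sigma^{-1}\mu>\log\tfrac{1}{q(1-q)}$. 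No variant of your plan (sufficient-statistic reduction, Kolchinsky--Tracey-type mixture-entropy bounds) can repair this, because $I(X;Y)\le H(Y)\le\log 2$ for binary $Y$, so a lower bound growing linearly in $\mu^T\Sigma^{-1}\mu$ is impossible in the regime of large $\mu^T\Sigma^{-1}\mu$; those mixture-entropy bounds saturate at $h(X|Y)+H(Y)$ for exactly this reason. For comparison, the paper's own proof of the left-hand inequality bounds the mixture density by the larger of its two components and then replaces $\int p_X(x)\min\bigl(\tfrac{(x-\mu)^T\Sigma^{-1}(x-\mu)}{2},\tfrac{(x+\mu)^T\Sigma^{-1}(x+\mu)}{2}\bigr)dx$ by $\min(I_-,I_+)$, i.e.\ it uses $\mbE[\min(U,V)]\ge\min(\mbE U,\mbE V)$, which is the wrong direction. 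So the step you could not reproduce is not a valid step: your proposal is incomplete on the lower bound, but the stated lower bound itself (unlike the upper bound) does not admit a correct proof as written, and your instinct that only bounds of order $H(Y)$ are available there is the accurate one.
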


For a simplified case in $\mbR$ with $\mu=1$ and variance $\sigma^2=1$ for Theorem \ref{Thm:BinaryClassificationDataModelTruthMI}, when the variance becomes bigger, the two distributions $\mcN(1,\sigma^2)$ and $\mcN(-1,\sigma^2)$ get closer to each other. Thus, conditioning on $X$ can give very little information about $Y$, making it difficult to differentiate the two class labels. In Appendix, we give illustration of this point, and also the mutual information lower and upper bounds for the binary classification data model. Theorem \ref{Thm:BinaryClassificationDataModelTruthMI} can be easily generalized to multi-class classification in $\mbR^n$, and we leave this for future work. Based on Theorem \ref{Thm:EntEstBound} and \ref{Thm:BinaryClassificationDataModelTruthMI}, we can also derive a error probability lower bound for the binary classification data model  \eqref{Defn:BinaryClassificationDataModel} in Corollary \ref{Corola:ErrorProbabilityBound}.

\begin{cor}\label{Corola:ErrorProbabilityBound}
	For the data distribution defined in \eqref{Defn:BinaryClassificationDataModel}, we assume the $Y\to X\to \hat{Y}$ forms a Markov chain where $\hat{Y}$ is the prediction from a classifier, and we follow the learning process in Figure \ref{Fig:ErrorProbability} to learn the classifier. Then, the error probability for an arbitrary classifier must satisfy
	\begin{align*}
		\max\left( 0,\frac{2+H(Y) - 4q(1-q)\mu^T\Sigma^{-1}\mu - a}{4} \right)
		\leq P_{error},
	\end{align*}
	where $a:=\sqrt{(H(Y) - I(X;Y) - 2)^2 + 4}$.
\end{cor}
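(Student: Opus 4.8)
The plan is to derive the corollary as a direct specialization of Theorem~\ref{Thm:ErrorProbabilityMI_Bounds} to the Gaussian two-class model \eqref{Defn:BinaryClassificationDataModel}, using the mutual-information upper bound of Theorem~\ref{Thm:BinaryClassificationDataModelTruthMI} to trade the (generally intractable) exact value $I(X;Y)$ for an explicit function of the model parameters $q,\mu,\Sigma$.

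First I would invoke Theorem~\ref{Thm:ErrorProbabilityMI_Bounds}. The corollary assumes $Y\to X\to\hat{Y}$ is a Markov chain with $Y\in\{-1,1\}$ (the case $C=2$) and $X\in\mbR^n$, so the theorem applies verbatim and gives
\begin{align*}
\max\!\left(0,\ \tfrac{2+H(Y)-I(X;Y)-a}{4}\right)\le P_{error},\qquad a=\sqrt{(H(Y)-I(X;Y)-2)^2+4}.
\end{align*}
It then remains to show that the left-hand side can only decrease when the $I(X;Y)$ appearing in the numerator is replaced by the larger quantity $4q(1-q)\mu^T\Sigma^{-1}\mu$ supplied by Theorem~\ref{Thm:BinaryClassificationDataModelTruthMI}. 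In the form stated (where $a$ keeps the exact $I(X;Y)$) this is immediate: since $4q(1-q)\mu^T\Sigma^{-1}\mu\ge I(X;Y)$ we have $2+H(Y)-4q(1-q)\mu^T\Sigma^{-1}\mu-a\le 2+H(Y)-I(X;Y)-a$, and because $x\mapsto\max(0,x/4)$ is monotone, the corollary's left-hand side is at most the theorem's left-hand side, which is at most $P_{error}$.

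If instead one also wishes to substitute the upper bound inside $a$ (so that the whole bound is a closed-form function of $q,\mu,\Sigma$), I would record the required monotonicity explicitly: writing $t:=H(Y)-I(X;Y)$ and $\varphi(t):=2+t-\sqrt{(t-2)^2+4}$, a one-line derivative check gives $\varphi'(t)=1-\frac{t-2}{\sqrt{(t-2)^2+4}}>0$, so $\varphi$ is increasing in $t$ and hence $\max(0,\varphi(t)/4)$ is nonincreasing in $I(X;Y)$; replacing $I(X;Y)$ by any upper bound therefore preserves validity. Finally I would assemble the pieces, using that $H(Y)=-q\log q-(1-q)\log(1-q)$ is known exactly from $P_Y$ (so Theorem~\ref{Thm:EntEstBound} is not actually invoked here). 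I do not anticipate any real obstacle: the content is entirely the two cited theorems plus the sign of one derivative, and the only thing to watch is applying the monotonicity with the correct orientation so the substitution weakens rather than strengthens the bound.
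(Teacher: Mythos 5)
Your proposal is correct and is essentially the paper's own argument: the Appendix proof is precisely ``plug the mutual information bound of Theorem~\ref{Thm:BinaryClassificationDataModelTruthMI} into the error probability bound'' (the paper's citation of Theorem~\ref{Thm:EntEstBound} there is a misreference to Theorem~\ref{Thm:ErrorProbabilityMI_Bounds}, as you correctly inferred). Your explicit monotonicity check of $t\mapsto 2+t-\sqrt{(t-2)^2+4}$ and the remark that the stated corollary keeps the exact $I(X;Y)$ inside $a$ only make the substitution step more careful than the paper's one-line proof.
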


This is also intuitive. For example, when we consider the model in $\mbR$ with mean $\mu\in\mbR$ and $\sigma^2\in[0,+\infty)$, we have $\max\left( 0,\frac{2+H(Y) - 4q(1-q)\mu^T\Sigma^{-1}\mu - a}{4} \right)
\leq P_{error}$. When we increase $\mu$ (the distributions from two classes become farther from each other) and decrease $\sigma^2$ (the distributions from two classes become more concentrated), we are more likely to classify them correctly, thus a lower error probability. 

\begin{figure}[!htb]
	\centering
	\begin{subfigure}[b]{0.25\textwidth}
		\centering
		\includegraphics[width=\linewidth]{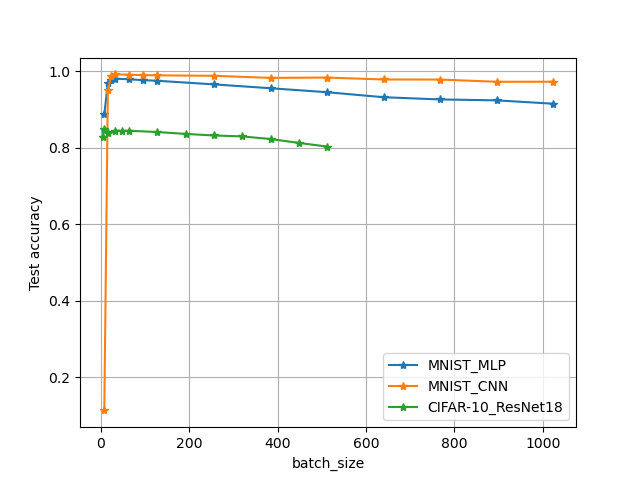}
		\caption{Batch size}
	\end{subfigure}%
	~
	\begin{subfigure}[b]{0.25\textwidth}
		\centering
		\includegraphics[width=\linewidth]{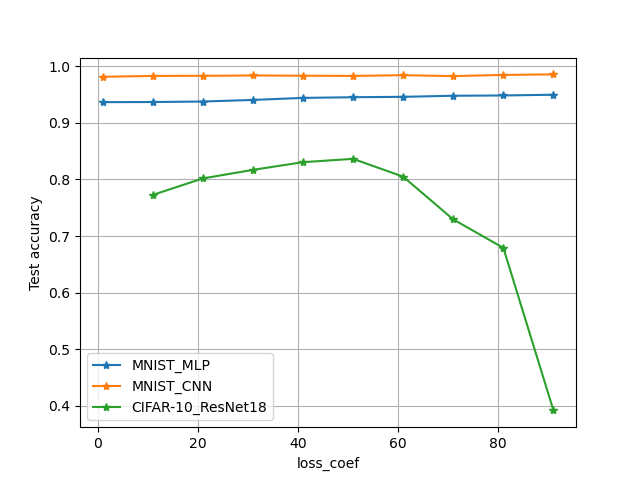}
		\caption{Loss coefficient $\lambda_{ent}$}
	\end{subfigure}
	\caption{Pinpoint batch size and $\lambda_{ent}$ parameters.}\label{Fig:ParamsPinpoint}
\end{figure}

\section{Experimental Results}\label{Sec:ExperimentalResults}

In this section, we present experimental results from multi-class classification on the MNIST and CIFAR-10 to validate our theory \cite{he_deep_2015}. 

For MNIST classification task, we use both a multiple layer perceptron (MLP) and a convolutional neural network (CNN) to train two different classifiers by using the regularized form of the mutual information learning loss in \eqref{Eq:MIAlterEstParametericMethodEquivalentRegluarizedForm}, the conditional entropy learning loss in \eqref{Eq:CondEntLoss}, and also its regularized forms as Table \ref{Tab:LossFunctionComparisons}. During training, we use a batch size of 512, and each model is trained for 77 epochs. For CIFAR-10 dataset, we use the ResNet-18, GoogLeNet, MobileNetV2, EfficientNetB0, and ShuffleNetV2 \cite{he_deep_2015,szegedy_going_2015,sandler_mobilenetv2_2019,tan_efficientnet_2020,ma_shufflenet_2018} to train classifiers. The batch size is set to be 256. Each model is trained for 100 epochs. For each model under each setup for MNIST, and CIFAR-10, the regularization parameter $\epsilon$ associated with the LSR, CP, and LC is fixed at 0.1 \cite{pereyra_regularizing_2017}. The $\lambda_{ent}$ takes value 5e1 when the mutual information learning loss is used to train the models. We do not use any data augmentations, nor do we use the weight decay. In the first set of experiments for MNIST and CIFAR-10, we conduct 3 trials for each model, and the reported results are averaged over the 3 trials. However, for later experiments, we conduct single trial for each model as we do not see much variations in the results across differential trials. We use SGD optimizer with a constant learning rate 1e-3 and a momentum 0.9.

The first set of experimental results associated with the baseline models for MNIST and CIFAR-10 datasets are presented in Table \ref{Tab:ProofOfConcept_MNIST} and \ref{Tab:ProofOfConcept_CIFAR10} where we present the testing data accuracy. From the results, we can see the proposed mutual information learning loss (milLoss) in \eqref{Eq:MIAlterEstParametericMethod_MargDistFromLearnedJointDist} achieved improvements of large margin when compared with the conditional entropy learning loss (celLoss) and its variants in \eqref{Eq:CondEntLoss}, e.g., from 0.52 to 0.68 when the EfficientNet-B0 is used. In fact, under our experiments setup, none of LSR, CP, and LC show any improvements in accuracy. 

We plot the learning curves in Figure \ref{Fig:CIFAR10_GoogLeNet}. More learning curves associated with other models can be found in the Appendix. From the results, we can see that the proposed approach can train classifiers with much better classification performances. We can also see a very strong connection between the mutual information and the error rate. The MILCs seems to take longer time to converge than the conditional entropy learned classifiers (CELCs), and we conjecture this is because learning the joint distribution $p_{X,Y}$ is more challenging than learning the conditional distribution $p_{Y|X}$. 

We conduct experiments with typical neural network architectures to investigate how the batch size and the $\lambda_{ent}$ affect the performance, the experimental setup except the batch size or $\lambda_{ent}$ is exactly the same as that of the baseline models. When evaluating effect of the batch size (or the $\lambda_{ent}$), we use fixed $\lambda_{ent}=5e1$ (or fixed batch size of 512 for MNIST and 256 for CIFAR-10). The results are presented in Figure \ref{Fig:ParamsPinpoint}. From the results we can see that for both MNIST and CIFAR-10 dataset, the testing accuracy does not always go up as the batch size increases, which is quite different what is expected for CELCs. When the batch size increases, both the signal pattern and the noise pattern will become stronger. The milLoss essentially learns the mutual information associated with the joint data generation distribution, and it can overfit to the noise pattern as the batch size increases since the mutual information itself encourage the model to consider the overall data generation distribution. This then results in the classification performance degradation. However, for CELCs, despite the stronger noise pattern caused by a larger batch size, the conditional entropy learning loss can help the model avoid overfitting to the noise pattern because it encourages the model to give high confidence prediction of labels. The cost is a less accurate characterization of the joint distribution $p_{X,Y}$ by CELCs.

\section{Conclusions}\label{Sec:Conclusions}

In this paper, we showed that the existing cross entropy loss minimization essentially learns the label conditional entropy. We proposed a new loss function which originating from mutual information learning, and established rigorous relation between the error probability associated with a model trained on a dataset and the mutual information of its underlying distribution. The application of our theory to a concrete binary classification data model was investigated. We also conducted extensive experiments to validate our theory, and the empirical results shows that the mutual information learned classifiers (MILCs) acheive far better generalization performance than those trained via cross entropy minimization or itr regularized variants.

\bibliography{202204_MILC}

\begin{thebibliography}{32}
\providecommand{\natexlab}[1]{#1}

\bibitem[{Bora et~al.(2017)Bora, Jalal, Price, and
  Dimakis}]{bora_compressed_2017}
Bora, A.; Jalal, A.; Price, E.; and Dimakis, A. 2017.
\newblock Compressed sensing using generative models.
\newblock \emph{arXiv:1703.03208 [cs, math, stat]}.
\newblock ArXiv: 1703.03208.

\bibitem[{Cover and Thomas(2012)}]{cover_elements_2012}
Cover, T.; and Thomas, J. 2012.
\newblock \emph{Elements of information theory}.
\newblock John Wiley \& Sons.

\bibitem[{Deng et~al.(2009)Deng, Dong, Socher, Li, Li, and
  Li}]{deng_imagenet:_2009}
Deng, J.; Dong, W.; Socher, R.; Li, L.; Li, K.; and Li, F. 2009.
\newblock {ImageNet}: a large-scale hierarchical image database.
\newblock \emph{IEEE Conference on Computer Vision and Pattern Recognition},
  248--255.

\bibitem[{Goodfellow et~al.(2016)Goodfellow, Bengio, Courville, and
  Bengio}]{goodfellow_deep_2016}
Goodfellow, I.; Bengio, Y.; Courville, A.; and Bengio, Y. 2016.
\newblock \emph{Deep learning}, volume~1.
\newblock MIT press Cambridge.

\bibitem[{He et~al.(2015)He, Zhang, Ren, and Sun}]{he_deep_2015}
He, K.; Zhang, X.; Ren, S.; and Sun, J. 2015.
\newblock Deep residual learning for image recognition.
\newblock \emph{arXiv:1512.03385 [cs]}.
\newblock ArXiv: 1512.03385.

\bibitem[{Khan et~al.(2020)Khan, Yi, Mudumbai, Wu, and Xu}]{khan_deep_2020}
Khan, Z.; Yi, J.; Mudumbai, R.; Wu, X.; and Xu, W. 2020.
\newblock Do deep minds think alike? {Selective} adversarial attacks for
  fine-grained manipulation of multiple deep neural networks.
\newblock \emph{arXiv:2003.11816 [cs, eess, math, stat]}.
\newblock ArXiv: 2003.11816.

\bibitem[{Kolchinsky, Tracey, and Wolpert(2019)}]{kolchinsky_nonlinear_2019}
Kolchinsky, A.; Tracey, B.; and Wolpert, D. 2019.
\newblock Nonlinear information bottleneck.
\newblock \emph{Entropy}, 21(12): 1181.
\newblock Number: 12 Publisher: Multidisciplinary Digital Publishing Institute.

\bibitem[{Krizhevsky, Sutskever, and Hinton(2012)}]{krizhevsky_imagenet_2012}
Krizhevsky, A.; Sutskever, I.; and Hinton, G. 2012.
\newblock {ImageNet} classification with deep convolutional neural networks.
\newblock In \emph{Advances in {Neural} {Information} {Processing} {Systems}
  25}, 1097--1105. Curran Associates, Inc.

\bibitem[{Liu et~al.(2021)Liu, Lin, Cao, Hu, Wei, Zhang, Lin, and
  Guo}]{liu_swin_2021}
Liu, Z.; Lin, Y.; Cao, Y.; Hu, H.; Wei, Y.; Zhang, Z.; Lin, S.; and Guo, B.
  2021.
\newblock Swin transformer: hierarchical vision transformer using shifted
  windows.
\newblock \emph{arXiv:2103.14030 [cs]}.
\newblock ArXiv: 2103.14030 version: 2.

\bibitem[{Ma et~al.(2018)Ma, Zhang, Zheng, and Sun}]{ma_shufflenet_2018}
Ma, N.; Zhang, X.; Zheng, H.; and Sun, J. 2018.
\newblock {ShuffleNet} {V2}: practical guidelines for efficient {CNN}
  architecture design.
\newblock \emph{arXiv:1807.11164 [cs]}.
\newblock ArXiv: 1807.11164.

\bibitem[{McAllester and Stratos(2020)}]{mcallester_formal_2020}
McAllester, D.; and Stratos, K. 2020.
\newblock Formal limitations on the measurement of mutual information.
\newblock In \emph{Proceedings of the {Twenty} {Third} {International}
  {Conference} on {Artificial} {Intelligence} and {Statistics}}, 875--884.
  PMLR.
\newblock ISSN: 2640-3498.

\bibitem[{Meister, Salesky, and Cotterell(2020)}]{meister_generalized_2020}
Meister, C.; Salesky, E.; and Cotterell, R. 2020.
\newblock Generalized entropy regularization or: there's nothing special about
  label smoothing.
\newblock \emph{arXiv:2005.00820 [cs]}.
\newblock ArXiv: 2005.00820.

\bibitem[{Miettinen(2012)}]{miettinen_nonlinear_2012}
Miettinen, K. 2012.
\newblock \emph{Nonlinear multiobjective optimization}, volume~12.
\newblock Springer Science \& Business Media.

\bibitem[{Pereyra et~al.(2017)Pereyra, Tucker, Chorowski, Kaiser, and
  Hinton}]{pereyra_regularizing_2017}
Pereyra, G.; Tucker, G.; Chorowski, J.; Kaiser, L.; and Hinton, G. 2017.
\newblock Regularizing neural networks by penalizing confident output
  distributions.
\newblock ArXiv:1701.06548 [cs].

\bibitem[{Sandler et~al.(2019)Sandler, Howard, Zhu, Zhmoginov, and
  Chen}]{sandler_mobilenetv2_2019}
Sandler, M.; Howard, A.; Zhu, M.; Zhmoginov, A.; and Chen, L. 2019.
\newblock {MobileNetV2}: inverted residuals and linear bottlenecks.
\newblock \emph{arXiv:1801.04381 [cs]}.
\newblock ArXiv: 1801.04381.

\bibitem[{Szegedy et~al.(2015)Szegedy, Liu, Jia, Sermanet, Reed, Anguelov,
  Erhan, Vanhoucke, and Rabinovich}]{szegedy_going_2015}
Szegedy, C.; Liu, W.; Jia, Y.; Sermanet, P.; Reed, S.; Anguelov, D.; Erhan, D.;
  Vanhoucke, V.; and Rabinovich, A. 2015.
\newblock Going deeper with convolutions.
\newblock 1--9.

\bibitem[{Szegedy et~al.(2016)Szegedy, Vanhoucke, Ioffe, Shlens, and
  Wojna}]{szegedy_rethinking_2016}
Szegedy, C.; Vanhoucke, V.; Ioffe, S.; Shlens, J.; and Wojna, Z. 2016.
\newblock Rethinking the inception architecture for computer vision.
\newblock 2818--2826.

\bibitem[{Tan and Le(2020)}]{tan_efficientnet_2020}
Tan, M.; and Le, Q. 2020.
\newblock {EfficientNet}: rethinking model scaling for convolutional neural
  networks.
\newblock \emph{arXiv:1905.11946 [cs, stat]}.
\newblock ArXiv: 1905.11946.

\bibitem[{Tezuka and Namekawa(2021)}]{tezuka_information_2021}
Tezuka, T.; and Namekawa, S. 2021.
\newblock Information bottleneck analysis by a conditional mutual information
  bound.
\newblock \emph{Entropy (Basel)}, 23(8): 974.

\bibitem[{Wang et~al.(2021{\natexlab{a}})Wang, Hua, Kodirov, Clifton, and
  Robertson}]{wang_proselflc_2021}
Wang, X.; Hua, Y.; Kodirov, E.; Clifton, D.; and Robertson, N.
  2021{\natexlab{a}}.
\newblock {ProSelfLC}: progressive self label correction for training robust
  deep neural networks.
\newblock 752--761.

\bibitem[{Wang et~al.(2021{\natexlab{b}})Wang, Huang, Kuruoglu, Sun, Chen, and
  Zheng}]{wang_pac-bayes_2021}
Wang, Z.; Huang, S.; Kuruoglu, E.; Sun, J.; Chen, X.; and Zheng, Y.
  2021{\natexlab{b}}.
\newblock {PAC}-{Bayes} information bottleneck.

\bibitem[{Xie et~al.(2019)Xie, Yi, Xu, and
  Mudumbai}]{xie_information-theoretic_2019}
Xie, H.; Yi, J.; Xu, W.; and Mudumbai, R. 2019.
\newblock An information-theoretic explanation for the adversarial fragility of
  {AI} classifiers.
\newblock In \emph{2019 {IEEE} {International} {Symposium} on {Information}
  {Theory} ({ISIT})}, 1977--1981.
\newblock ISSN: 2157-8117.

\bibitem[{Xu et~al.(2022)Xu, Li, Guo, Yang, Roth, Hatamizadeh, Zhao, Xu, Huang,
  and Xu}]{xu_closing_2022}
Xu, A.; Li, W.; Guo, P.; Yang, D.; Roth, H.; Hatamizadeh, A.; Zhao, C.; Xu, D.;
  Huang, H.; and Xu, Z. 2022.
\newblock Closing the generalization gap of {Cross}-silo federated medical
  image segmentation.
\newblock 20866--20875.

\bibitem[{Xu et~al.(2019)Xu, Ma, Liu, Deb, Liu, Tang, and
  Jain}]{xu_adversarial_2019}
Xu, H.; Ma, Y.; Liu, H.; Deb, D.; Liu, H.; Tang, J.; and Jain, A. 2019.
\newblock Adversarial attacks and defenses in images, graphs and text: a
  review.
\newblock \emph{arXiv:1909.08072 [cs, stat]}.
\newblock ArXiv: 1909.08072.

\bibitem[{Yan, Zhang, and Wu(2022)}]{yan_lawin_2022}
Yan, H.; Zhang, C.; and Wu, M. 2022.
\newblock Lawin transformer: improving semantic segmentation transformer with
  multi-scale representations via large window attention.
\newblock \emph{arXiv:2201.01615 [cs]}.
\newblock ArXiv: 2201.01615 version: 1.

\bibitem[{Yi(2021)}]{yi_towards_2021}
Yi, J. 2021.
\newblock \emph{Towards adversarial and non-adversarial robustness of machine
  learning and signal processing: fundamental limits and algorithms}.
\newblock {PhD} {Thesis}, The University of Iowa.

\bibitem[{Yi et~al.(2018)Yi, Le, Wang, Wu, and Xu}]{yi_outlier_2018}
Yi, J.; Le, A.; Wang, T.; Wu, X.; and Xu, W. 2018.
\newblock Outlier detection using generative models with theoretical
  performance guarantees.
\newblock \emph{arXiv:1810.11335 [cs, eess, math, stat]}.
\newblock ArXiv: 1810.11335.

\bibitem[{Yi, Mudumbai, and Xu(2020)}]{yi_derivation_2020}
Yi, J.; Mudumbai, R.; and Xu, W. 2020.
\newblock Derivation of information-theoretically optimal adversarial attacks
  with applications to robust machine learning.
\newblock \emph{arXiv:2007.14042 [cs, math, stat]}.
\newblock ArXiv: 2007.14042.

\bibitem[{Yi et~al.(2019)Yi, Xie, Zhou, Wu, Xu, and Mudumbai}]{yi_trust_2019}
Yi, J.; Xie, H.; Zhou, L.; Wu, X.; Xu, W.; and Mudumbai, R. 2019.
\newblock Trust but verify: an information-theoretic explanation for the
  adversarial fragility of machine learning systems, and a general defense
  against adversarial attacks.
\newblock \emph{arXiv:1905.11381 [cs, stat]}.
\newblock ArXiv: 1905.11381.

\bibitem[{Zheng et~al.(2022)Zheng, Zhang, Zhang, Qi, and
  Sun}]{zheng_progressive_2022}
Zheng, A.; Zhang, Y.; Zhang, X.; Qi, X.; and Sun, J. 2022.
\newblock Progressive end-to-end object detection in crowded scenes.
\newblock 857--866.

\bibitem[{Zhou et~al.(2022{\natexlab{a}})Zhou, Wang, Konukoglu, and
  Van~Gool}]{zhou_rethinking_2022}
Zhou, T.; Wang, W.; Konukoglu, E.; and Van~Gool, L. 2022{\natexlab{a}}.
\newblock Rethinking semantic segmentation: a prototype view.
\newblock 2582--2593.

\bibitem[{Zhou et~al.(2022{\natexlab{b}})Zhou, Chen, Pei, Mao, Wang, and
  He}]{zhou_global_2022}
Zhou, Z.; Chen, J.; Pei, W.; Mao, K.; Wang, H.; and He, Z. 2022{\natexlab{b}}.
\newblock Global tracking via ensemble of local trackers.
\newblock 8761--8770.

\end{thebibliography}

\newpage
\appendix*
\section*{Appendix A: Definitions}\label{Sec:Dfnt}

\begin{definition}\label{Definition:MI}
	The mutual information $I(X,Y)$ of a continuous random vector $X\in\mbR^n$ and a discrete random variable $Y\in[C]$ under the joint distribution $p_{X,Y}$ is defined as
	\begin{align}
		I(X;Y)
		:= \int_{\mbR^n} \sum_{y\in [C]} p(x,y) \log\left( \frac{p(x,y)}{p(x) P(y)} \right) dx,
	\end{align} 
	where we also define $p(x,y):=p(x)P(y|x)$ and $p(x,y):=P(y)p(x|y), \forall x\in\mbR^n, y\in[C]$.
\end{definition}

\begin{definition}\label{Definition:Ent}
	(Differential Entropy and Entropy \cite{cover_elements_2012}) For a continuous random vector $X\in\mbR^n$ with distribution $p_X$, we define its differential entropy $h(X)$
	as
	\begin{align}\label{Defn:DifferentialEntropy}
		h(X): = - \int_{\mbR^n} p(x) \log(p(x)) dx.
	\end{align}
	For a discrete random variable $Y\in[C]$ with distribution $P_Y$, we define its entropy as
	\begin{align}\label{Defn:Entropy}
		H(Y):= -\sum_{y\in[C]} P(y) \log(P(y)).
	\end{align}
\end{definition}

\begin{definition}\label{Definition:ConditionalEnt}
	(Conditional Differential Entropy and Conditional Entropy) For a joint distribution $p_{X,Y}$ of a continuous random vector $X\in\mbR^n$ and a discrete random variable $Y\in[C]$, we define the conditional differential entropy $h(X|Y)$ as 
	\begin{align}\label{Defn:ConditionalDifferentialEnt}
		h(X|Y):=  \sum_{y\in[C]} P(y) \int_{\mbR^n} p(x|y) \log\left( \frac{1}{p(x|y)} \right)dx,
	\end{align}
	and the instance conditional differential entropy at realization $y$ for $Y$ as
	\begin{align}\label{Defn:InstcConditionalDifferentialEnt}
		h(X|y):= \int_{\mbR^n} p(x|y) \log\left( \frac{1}{p(x|y)} \right)dx.
	\end{align}
	
	We define the conditional entropy $H(Y|X)$ as
	\begin{align*}
		H(Y|X):=  \int_{\mbR^n}  p(x)   \sum_{y\in[C]} P(y|x) \log\left( \frac{1}{P(y|x)} \right)dx, 
	\end{align*}
	and the instance conditional entropy at realization $x$ of $X$ as 
	\begin{align}\label{Defn:InstcConditionalEnt}
		H(Y|x):= \sum_{y\in[C]} P(y|x) \log\left( \frac{1}{P(y|x)} \right).
	\end{align}
\end{definition}

\begin{definition}\label{Definition:CrossEntropy}
	(Cross Entropy) We define the cross entropy between two continuous distributions $p_X, q_X$ over the same continuous support set $\Omega$ as
	\begin{align} 
		h(p_X,q_X):=\int_\Omega p_X(x) \log\left( \frac{1}{q_X(x)} \right) dx.
	\end{align} 
	We define the cross entropy between discrete distributions $P_Y, Q_Y$ over the same discrete support set $\Omega$ as 
	\begin{align} 
		H(P_Y,Q_Y):=\sum_{y\in\Omega} P_Y(y) \log\left( \frac{1}{Q_Y(y)} \right).
	\end{align}
\end{definition}

\begin{definition}\label{Definition:ConditionalCrossEntropy}
	(Conditional Cross Entropy) For two joint distributions $p_{X,Y}$ and $q_{X,Y}$ of a continuous random vector $X\in\mbR^n$ and a discrete random variable $Y\in[C]$, we define the conditional cross entropy $H(P_{Y|X}, Q_{Y|X})$ as 
	\begin{align*}
		& H(P_{Y|X},Q_{Y|X})\\ 
		& := \int_{\mbR^n} p_X(x) \sum_{y\in[C]} P_{Y|X}(y|x) \log\left( \frac{1}{Q_{Y|X}(y|x)} \right)dx,
	\end{align*}
	and the conditional cross entropy $H(P_{X|Y}, Q_{X|Y})$ as 
	\begin{align*}
		& h(p_{X|Y}, q_{X|Y})\\
		& := \sum_{y\in[C]} P_Y(y)  \int_{\mbR^n} p_{X|Y}(x|y) \log\left( \frac{1}{q_{X|Y(x|y)}} \right)dx.
	\end{align*}
\end{definition}

\section*{Appendix B: Training and Inference of Mutual Information Learned Classifiers}\label{Sec:TrnInfrcPpln}

The training and inference pipeline of the proposed mutual information learning classifiers is shown in Figure \ref{Fig:MIML}. During the training process, we sample a data batch from training dataset $\mcS$ in each iteration, and then calculate the empirical marginal distribution $\hat{P}_X$, $\hat{P}_Y$ and $\hat{P}_{Y|X}$. The inputs $\{x_i\}_{i=1}^B$ will be fed to a machine learning system for it to learn the conditional distribution $Q_{Y|X;\theta_{Y|X}}$. We then combine the $Q_{Y|X;\theta_{Y|X}}$ with $\hat{P}_X$ and $\hat{P}_{Y|X}$ separately to calculate the learned marginal distribution $Q_{Y;\theta_{Y|X}}$ and the learned conditional entropy $\hat{H}(Y|X)$. The $Q_{Y;\theta_{Y|X}}$ is then combined with the $\hat{P}_Y$ to calculate the label entropy. We finally calculate the mutual information by subtracting conditional entropy from entropy. During inference, we feed an input $x$ to the model to get a conditional distribution, and the final class label prediction will be the one achieving the highest probability.

\begin{figure*}[!htb]
	\centering
	\begin{subfigure}[b]{\textwidth}
		\centering
		\includegraphics[width=\linewidth]{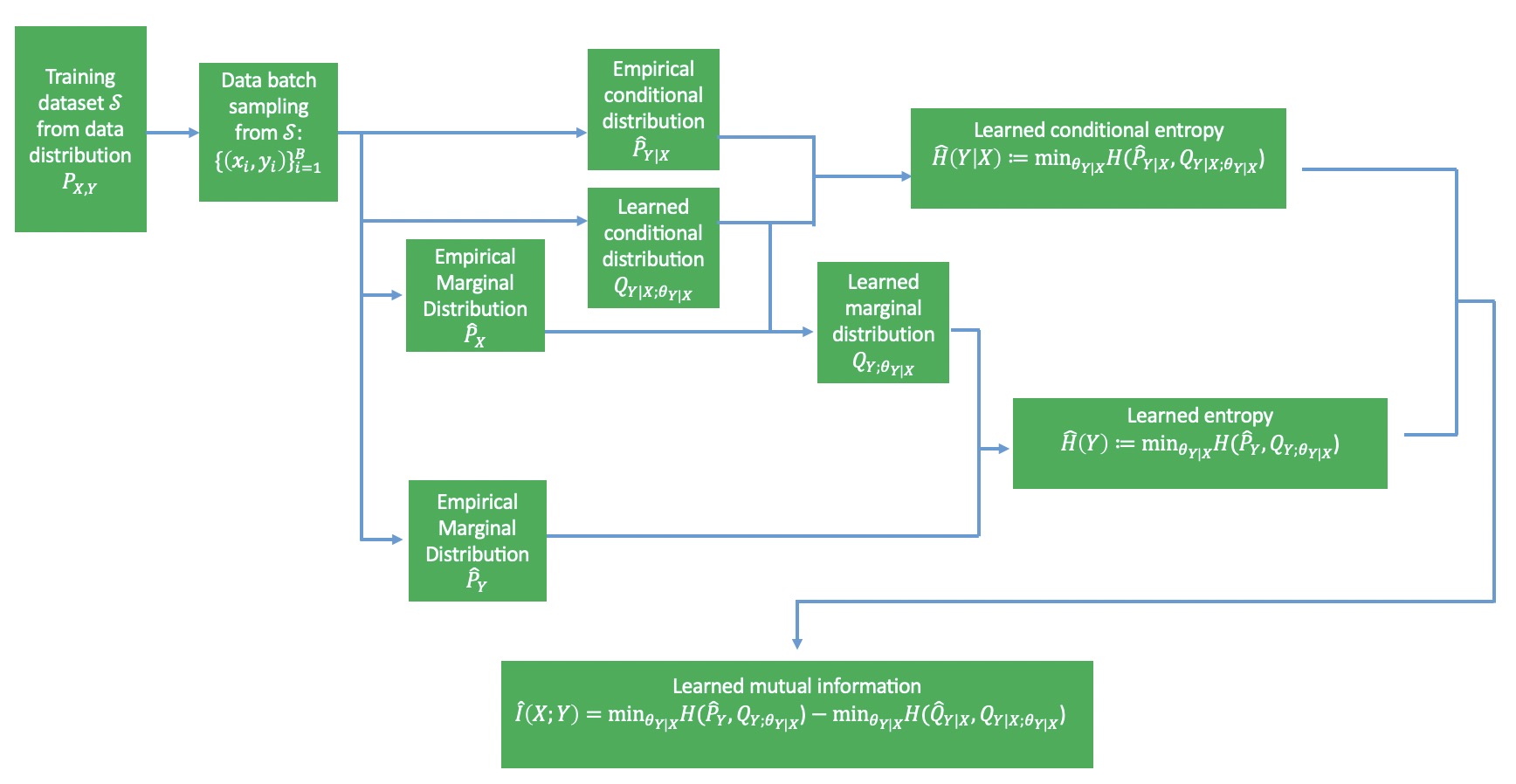}
		\caption{Training/learn pipeline}\label{Fig:MIML_TrainingPipelineV2}
	\end{subfigure}%
	\\
	\begin{subfigure}[b]{\textwidth}
		\centering
		\includegraphics[width=\linewidth]{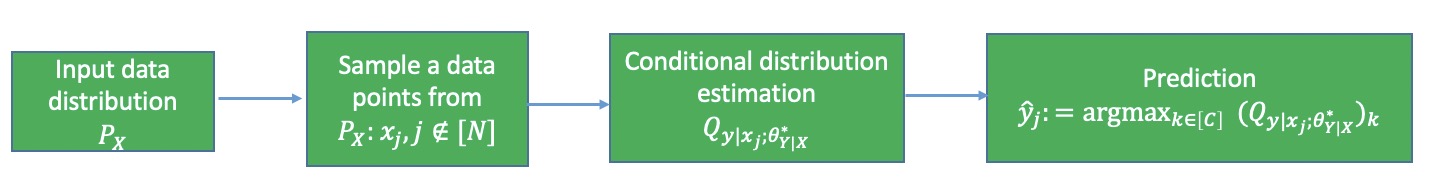}
		\caption{Inference/decision pipeline}\label{Fig:MIML_InferencePipeline}
	\end{subfigure}
	\caption{Mutual information learning classifiers (MILC)}\label{Fig:MIML}
\end{figure*}

\section*{Appendix C: Relations among Information-theoretic Quantities}\label{Sec:ITQttRlt}

\begin{thm}\label{Thm:ITQuantitiesConnections}
	(Connections among Different Information-theoretic Quantities) With the definition of mutual information between a continous random vector $X\in\mbR^n$ and a discrete random variable $Y\in[C]$ in \eqref{Eq:MutualInfoDefn} and Defintion \ref{Definition:Ent}-\ref{Definition:ConditionalCrossEntropy}, we have $I(X,Y) = H(Y) - H(Y|X)$, and $I(X,Y) = h(X) - h(X|Y)$.
\end{thm}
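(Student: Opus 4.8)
The plan is to prove both identities by the standard ``split the logarithm'' manipulation, exploiting the two equivalent factorizations $p(x,y) = p(x)P(y|x) = P(y)p(x|y)$ that are built into the definition of mutual information in \eqref{Eq:MutualInfoDefn}. I would handle the two claims separately but in parallel.

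For $I(X,Y) = H(Y) - H(Y|X)$: starting from \eqref{Eq:MutualInfoDefn}, I would rewrite the integrand's logarithm as $\log\frac{p(x,y)}{p(x)P(y)} = \log\frac{p(x,y)}{p(x)} - \log P(y) = \log P(y|x) - \log P(y)$, using $p(x,y)/p(x) = P(y|x)$. This splits $I(X,Y)$ into two terms. In the term carrying $\log P(y)$, I would pull $\log P(y)$ out of the $x$-integral and use $\int_{\mbR^n} p(x,y)\,dx = P(y)$ to obtain $\sum_{y\in[C]} P(y)\log P(y) = -H(Y)$. In the term carrying $\log P(y|x)$, I would substitute $p(x,y) = p(x)P(y|x)$ to get $\int_{\mbR^n} p(x)\sum_{y\in[C]} P(y|x)\log P(y|x)\,dx = -H(Y|X)$ by Definition~\ref{Definition:ConditionalEnt}. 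Combining the two pieces yields the first identity.

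For $I(X,Y) = h(X) - h(X|Y)$: I would repeat the argument with the other factorization, writing $\log\frac{p(x,y)}{p(x)P(y)} = \log\frac{p(x,y)}{P(y)} - \log p(x) = \log p(x|y) - \log p(x)$. Summing out $y$ in the $\log p(x)$ term gives $\int_{\mbR^n} p(x)\log p(x)\,dx = -h(X)$, and the $\log p(x|y)$ term becomes $\sum_{y\in[C]} P(y)\int_{\mbR^n} p(x|y)\log p(x|y)\,dx = -h(X|Y)$ by Definition~\ref{Definition:ConditionalEnt}. This gives the second identity.

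I expect the only real subtlety --- and hence the ``hard part,'' though it remains routine --- to be justifying the interchange of the finite sum over $y\in[C]$ with the integral over $\mbR^n$ and the splitting of a single logarithm into a difference of two separately integrable pieces; this is where one needs the (implicit) regularity assumption that $H(Y)$, $H(Y|X)$, $h(X)$, and $h(X|Y)$ are finite, so that no $\infty-\infty$ ambiguity arises. Since $Y$ ranges over the finite set $[C]$, the sum is finite and Fubini--Tonelli applies term by term, so I would simply state these hypotheses and carry out the computation; no delicate estimate is required beyond that.
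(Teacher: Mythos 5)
Your proposal is correct and matches the paper's own argument essentially step for step: both identities are obtained by splitting the logarithm in \eqref{Eq:MutualInfoDefn} according to the two factorizations $p(x,y)=p(x)P(y|x)=P(y)p(x|y)$ and recognizing the resulting pieces as $H(Y)$, $H(Y|X)$, $h(X)$, and $h(X|Y)$. Your added remark on finiteness and the sum--integral interchange is a harmless refinement the paper leaves implicit.
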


\begin{proof}\label{Proof:ITQuantitiesConnections}
	(of Theorem \ref{Thm:ITQuantitiesConnections}) From  \eqref{Eq:MutualInfoDefn}, we have
	\begin{align}\label{Eq:MI&ConditionalDifferentialEntrp}
		\scriptsize
		& I(X,Y)\\ 
		& = \int_{\mbR^n} p(x) \sum_{y\in[C]} P(y|x) \left( \log\left( \frac{p(x,y)}{P(y)} \right) -  \log\left( {p(x)} \right)\right) dx \nonumber \\
		& = \int_{\mbR^n} p(x) \sum_{y\in[C]} P(y|x) \log\left( \frac{p(x,y)}{P(y)} \right)dx \\
		&\quad -
		\int_{\mbR^n} p(x)\sum_{y\in[C]} P(y|x) \log\left( {p(x)} \right) dx \nonumber \\
		& = \int_{\mbR^n} \sum_{y\in[C]} p(x,y) \log\left( {p(x|y)} \right)dx 
		-
		\int_{\mbR^n} p(x) \log\left( {p(x)} \right) dx \nonumber \\
		& = - \int_{\mbR^n} \sum_{y\in[C]} P(y)p(x|y) \log\left( \frac{1}{p(x|y)} \right)dx + h(X) \nonumber \\
		& = h(X) - \sum_{y\in[C]} P(y) \int_{\mbR^n} p(x|y) \log\left( \frac{1}{p(x|y)} \right)dx  \nonumber \\
		& = h(X) - h(X|Y).
	\end{align}
	Similarly, we have from \eqref{Eq:MutualInfoDefn}
	\begin{align}
		\scriptsize
		& I(X,Y)\nonumber \\
		& = \sum_{y\in[C]}  P(y) \int_{\mbR^n} p(x|y) \left( \log\left( \frac{p(x,y)}{p(x)} \right) -  \log\left( {P(y)} \right)\right) dx \nonumber \\
		& = \sum_{y\in[C]}  P(y) \int_{\mbR^n} p(x|y) \log\left( \frac{p(x,y)}{p(x)} \right)dx \nonumber \\
		& \quad - 
		\sum_{y\in[C]}  P(y) \int_{\mbR^n} p(x|y) \log\left( {P(y)} \right)dx \nonumber \\
		& = \sum_{y\in[C]}  P(y) \int_{\mbR^n} p(x|y) \log\left( \frac{p(x,y)}{p(x)} \right)dx \nonumber \\
		& \quad - 
		\sum_{y\in[C]}  P(y)   \log\left( {P(y)} \right).
	\end{align}
	Thus, 
	\begin{align}
		\scriptsize
		& I(X,Y)\nonumber \\
		& = H(Y) - \sum_{y\in[C]}  P(y) \int_{\mbR^n} p(x|y) \log\left( \frac{1}{P(y|x)} \right)dx \nonumber \\
		& = H(Y) - \int_{\mbR^n}  \sum_{y\in[C]}  P(y) p(x|y) \log\left( \frac{1}{P(y|x)} \right)dx \nonumber \\
		& = H(Y) - \int_{\mbR^n}  \sum_{y\in[C]}  p(x) P(y|x) \log\left( \frac{1}{P(y|x)} \right)dx \nonumber \\
		& = H(Y) - \int_{\mbR^n}  p(x)   \sum_{y\in[C]} P(y|x) \log\left( \frac{1}{P(y|x)} \right)dx \nonumber \\
		& = H(Y) - H(Y|X).
	\end{align}
\end{proof}

\section*{Appendix D: Cross Entropy Minimization as Entropy Learning}\label{Sec:CrsEtrpMnmzt}

\begin{thm}\label{Thm:EntEstViaCrossEnt}
	(Cross Entropy Minimization as Entropy Learning) For an arbitrary discrete distribution $P_{Y}$ in $[C]$, we have 
	\begin{align}
		H(Y) \leq \inf_{Q_Y} H(P_Y,Q_Y),
	\end{align}
	where $Q_Y$ is a distribution of $Y$, and the equality holds if and only if $P_Y=Q_Y$. When a set of $N$ data points $\mcS:=\{y_i\}_{i=1}^N$ drawn independently from $P_{Y}$ is given, by defining $R(y):= \frac{P_Y(y)}{\hat{P}_Y(y)}$ where $\hat{P}_Y$ is the type or empirical distribution associated with $\{y_i\}_{i=1}^N$, we have 
	\begin{align}
		H(Y) \leq \inf_{Q^g_Y} H(\hat{P}_Y^g, Q_Y^g),
	\end{align}
	where $\hat{P}_Y^g$ is defined as
	\begin{align}
		\hat{P}_Y^g(y) : = \hat{P}_Y(y) R(y), \forall {y\in[C]},
	\end{align}
	and $Q_Y^g$ is defined as 
	\begin{align}
		Q_Y^g(y) = Q_Y(y)/R(y), \forall {y\in[C]},
	\end{align}
	with $Q_Y$ being a distribution of $Y$. The inequality holds if and only if $P_Y = \hat{P}_Y$ or $R(y)=1, \forall {y\in[C]}$.
\end{thm}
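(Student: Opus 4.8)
The plan is to reduce both inequalities to Gibbs' inequality, i.e.\ the non-negativity of the Kullback--Leibler divergence, together with one algebraic identity about the reweighting by $R$.

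\textbf{First part.} For any distribution $Q_Y$ of $Y$ I would write
\begin{align*}
H(P_Y,Q_Y) - H(Y) = \sum_{y\in[C]} P_Y(y)\log\frac{1}{Q_Y(y)} - \sum_{y\in[C]} P_Y(y)\log\frac{1}{P_Y(y)} = \sum_{y\in[C]} P_Y(y)\log\frac{P_Y(y)}{Q_Y(y)} = D_{\mathrm{KL}}(P_Y\,\|\,Q_Y),
\end{align*}
and invoke $D_{\mathrm{KL}}(P_Y\,\|\,Q_Y)\ge 0$, with equality iff $P_Y=Q_Y$ (immediate from $\log t\le t-1$, or from Jensen applied to the convex function $-\log$). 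Taking the infimum over $Q_Y$ and observing that $Q_Y=P_Y$ attains the value $H(Y)$ gives $H(Y)=\inf_{Q_Y}H(P_Y,Q_Y)$; the pointwise bound $H(P_Y,Q_Y)\ge H(Y)$ is strict unless $Q_Y=P_Y$.

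\textbf{Second part.} The key observation is that reweighting by $R$ sends the empirical distribution back onto the true one: since $R(y)=P_Y(y)/\hat{P}_Y(y)$, we have $\hat{P}_Y^g(y)=\hat{P}_Y(y)R(y)=P_Y(y)$ for all $y\in[C]$. Substituting this together with $Q_Y^g(y)=Q_Y(y)/R(y)$ into the definition of cross entropy, I would compute
\begin{align*}
H(\hat{P}_Y^g,Q_Y^g) = \sum_{y\in[C]} P_Y(y)\log\frac{R(y)}{Q_Y(y)} = \sum_{y\in[C]} P_Y(y)\log\frac{1}{Q_Y(y)} + \sum_{y\in[C]} P_Y(y)\log R(y) = H(P_Y,Q_Y) + D_{\mathrm{KL}}(P_Y\,\|\,\hat{P}_Y).
\end{align*}
Both summands are bounded below --- the first by $H(Y)$ by the first part, the second by $0$ by Gibbs' inequality --- so $H(\hat{P}_Y^g,Q_Y^g)\ge H(Y)$ for every $Q_Y$. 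Since the added term $D_{\mathrm{KL}}(P_Y\,\|\,\hat{P}_Y)$ does not depend on $Q_Y$, the infimum decouples, giving $\inf_{Q_Y^g}H(\hat{P}_Y^g,Q_Y^g)=H(Y)+D_{\mathrm{KL}}(P_Y\,\|\,\hat{P}_Y)\ge H(Y)$, and equality holds exactly when $D_{\mathrm{KL}}(P_Y\,\|\,\hat{P}_Y)=0$, i.e.\ $P_Y=\hat{P}_Y$, equivalently $R(y)=1$ for all $y\in[C]$.

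\textbf{Main obstacle.} There is no deep difficulty here; the argument is two applications of Gibbs' inequality glued by the identity $\hat{P}_Y^g=P_Y$. The only points that need care are (i) the bookkeeping of the equality cases --- the pointwise inequalities are strict off the diagonal, whereas the infima are always attained --- and (ii) the degenerate case in which some label of $[C]$ does not appear in $\mcS$, so $\hat{P}_Y(y)=0$ and $R(y)$ is ill-defined; I would either assume every class occurs in the sample (consistent with the paper's running assumptions) or read the bound in the limiting sense, where $D_{\mathrm{KL}}(P_Y\,\|\,\hat{P}_Y)=+\infty$ makes it vacuous.
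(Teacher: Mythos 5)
Your proposal is correct and follows essentially the same route as the paper: both parts rest on Gibbs' inequality, and your identity $H(\hat{P}_Y^g,Q_Y^g)=H(P_Y,Q_Y)+D_{\mathrm{KL}}(P_Y\|\hat{P}_Y)$ is exactly the rearrangement the paper derives by inserting the factor $\tfrac{P_Y}{\hat{P}_Y}\cdot\tfrac{\hat{P}_Y}{P_Y}$ into the cross entropy before dropping the KL term. Your observation that $\hat{P}_Y^g=P_Y$ makes the bookkeeping (and the equality case $P_Y=\hat{P}_Y$) a bit more transparent, but the argument is the same.
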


\begin{proof}\label{Proof:EntEstViaCrossEnt}
	(of Theorem \ref{Thm:EntEstViaCrossEnt}) From the definition of entropy, we have for an arbitrary distribution $Q_Y$ over $Y$
	\begin{align}\label{Eq:EntViaCrossEnt}
		H(Y) 
		& = \sum_{y\in[C]} P_Y(y) \log\left(\frac{1}{P_Y(y)}\right) \nonumber \\
		& = \sum_{y\in[C]} P_Y(y) \log\left(\frac{Q_Y(y)}{P_Y(y) }\frac{1}{Q_Y(y)}\right) \nonumber\\
		& = \sum_{y\in[C]} P_Y(y) \log\left(\frac{Q_Y(y)}{P_Y(y) }\right) \nonumber \\
		& \quad + \sum_y P_Y(y) \log\left(\frac{1}{Q_Y(y)}\right) \nonumber\\
		& = H(P_Y,Q_Y)  - D_{KL}(P_Y||Q_Y) \nonumber\\
		& \leq H(P_Y,Q_Y),
	\end{align}
	where $H(P_Y,Q_Y):= \sum_{y\in[C]} P_Y(y) \log\left(\frac{1}{Q_Y(y)}\right)$
	is the cross entropy between $P_Y$ and $Q_Y$, and  the $D_{KL}(P_Y||Q_Y) :=\sum_{y\in[C]} P_Y(y) \log\left(\frac{P_Y(y)}{Q_Y(y) }\right)$ 
	is the KL divergence between $P_Y$ and $Q_Y$, and we used the fact that $D_{KL}(P_Y||Q_Y)\geq0$. The equality holds iff $Q_Y(y) = P_Y(y), \forall {y\in[C]}$. The \eqref{Eq:EntViaCrossEnt} holds for arbitrary $Q_Y$, thus, $H(Y) = \inf_{Q_Y} H(P_Y,Q_Y)$
	
	For the cross entropy $H(P_Y,Q_Y)$, we have
	\begin{align}\label{Eq:CrossEntViaType}
		H(P_Y,Q_Y) 
		& =  \sum_{y\in[C]} P_Y(y) \log\left(\frac{1}{Q_Y(y)}\right) \nonumber \\
		& =  \sum_{y\in[C]} P_Y(y) \log\left(\frac{1}{Q_Y(y)} \frac{P_Y(y)}{\hat{P}_Y(y)} \frac{\hat{P}_Y(y)}{{P}_Y(y)}\right) \nonumber  \\
		& =  \sum_{y\in[C]} P_Y(y) \log\left(\frac{1}{Q_Y(y)} \frac{P_Y(y)}{\hat{P}_Y(y)} \right) \nonumber \\
		& + \sum_{y\in[C]} P_Y(y) \log\left( \frac{\hat{P}_Y(y)}{{P}_Y(y)}\right) \nonumber  \\ 
		& = - \sum_{y\in[C]} P_Y(y) \log\left(\frac{P_Y(y)}{\hat{P}_Y(y)} \right) \nonumber\\
		& + \sum_{y\in[C]} P_Y(y) \log\left(\frac{1}{Q_Y(y)} \frac{{P}_Y(y)}{\hat{P}_Y(y)}\right) \\
		& = -D_{KL}(P_Y||\hat{P}_Y) 
		+ \sum_{y\in[C]} P_Y(y) \log\left(\frac{1}{Q_Y(y)} \frac{{P}_Y(y)}{\hat{P}_Y(y)}\right)
	\end{align}
	Thus
	\begin{align}\label{Eq:CrossEntViaType2}
		H(P_Y,Q_Y) 
		& \leq \sum_{y\in[C]} \hat{P}_Y(y) \frac{P_Y(y)}{\hat{P}_Y(y)}  \log\left(\frac{1}{Q_Y(y)} \frac{{P}_Y(y)}{\hat{P}_Y(y)}\right) \nonumber \\
		& \leq \sum_{y\in[C]} \hat{P}_Y(y) R(y)  \log\left(\frac{1}{Q_Y(y) /R(y)}\right)\\
		&\leq \sum_{y\in[C]} \hat{P}_Y^g(y) \log\left(\frac{1}{Q_Y^g(y)}\right) \nonumber\\
		& = H(\hat{P}_Y^g,Q_Y^g),
	\end{align}
	where we used $\hat{P}_Y^g(y) = \hat{P}_Y(y) R(y)$, and $Q_Y^g(y) = Q_Y(y)/R(y)$, and $R(y) =\frac{{P}_Y(x)}{\hat{P}_Y(y)}$. The equality holds iff $P_Y(y) = \hat{P}_Y(y), \forall {y\in[C]}$ which implies that $D_{KL}(P_Y||\hat{P}_Y)=0$ and $R(y)=1$. Thus, $H(Y)=H(P_Y,Q_Y)=H(P_Y^g,Q_Y^g)$ if and only if $P_Y=Q_Y=\hat{P}_Y$. Thus, 
	\begin{align}\label{Eq:EntViaCrossEntCont1}
		H(Y)
		= \inf_{Q_Y} H(P_Y,Q_Y)
		= \inf_{Q^g_Y} H(\hat{P}_Y^g,{Q}_Y^g).
	\end{align}
	
\end{proof}

\section*{Appendix E: Error Bound of Entropy Learning from Empirical Distribution}\label{Sec:ErrBd_EtrpLrn_EmprcDstrbt}

\begin{thm}\label{Thm:EntEstBound}
	(Error Bound of Entropy Learning from Empirical Distribution) For two arbitrary distributions $P_Y$ and $\hat{P}_Y$ of a discrete random variable $Y$ over $[C]$, we have 
	\begin{align}
		\sum_{y\in[C]} R(y) \log\left( \frac{1}{{P}_Y(y)} \right)
		\leq 
		\Delta
		\leq  \sum_{y\in[C]} R(y) \log\left( \frac{1}{\hat{P}_Y(y)} \right)
	\end{align}
	where $\Delta:=H_{P_Y}(Y) - H_{\hat{P}_Y}(Y)$, $R(y) = P_Y(y) - \hat{P}_Y(y), \forall y \in[C]$, and $H_{P_Y}(Y)$ is the entropy of $Y$ calculated via $P_Y$. The equality holds iff $P_Y = \hat{P}_Y$. 
\end{thm}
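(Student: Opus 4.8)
The plan is to expand $\Delta$ and the two flanking sums directly in terms of entropies and cross entropies, and then recognize each of the two required inequalities as a single instance of Gibbs' inequality (equivalently, nonnegativity of a Kullback--Leibler divergence), exactly the fact already used in the proof of Theorem~\ref{Thm:EntEstViaCrossEnt}.

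First I would rewrite the right-hand sum using $R(y) = P_Y(y) - \hat{P}_Y(y)$:
\begin{align*}
\sum_{y\in[C]} R(y)\log\!\left(\frac{1}{\hat{P}_Y(y)}\right)
&= -\sum_{y\in[C]} P_Y(y)\log \hat{P}_Y(y) + \sum_{y\in[C]} \hat{P}_Y(y)\log \hat{P}_Y(y)\\
&= H(P_Y,\hat{P}_Y) - H_{\hat{P}_Y}(Y),
\end{align*}
where $H(P_Y,\hat{P}_Y)$ is the cross entropy of Definition~\ref{Definition:CrossEntropy}. After cancelling $H_{\hat{P}_Y}(Y)$, the claimed upper bound $\Delta \le \sum_{y} R(y)\log(1/\hat{P}_Y(y))$ is therefore equivalent to $H_{P_Y}(Y) \le H(P_Y,\hat{P}_Y)$, i.e. to $D_{KL}(P_Y\|\hat{P}_Y)\ge 0$, with equality iff $P_Y=\hat{P}_Y$. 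Symmetrically, for the left-hand sum,
\begin{align*}
\sum_{y\in[C]} R(y)\log\!\left(\frac{1}{P_Y(y)}\right)
&= -\sum_{y\in[C]} P_Y(y)\log P_Y(y) + \sum_{y\in[C]} \hat{P}_Y(y)\log P_Y(y)\\
&= H_{P_Y}(Y) - H(\hat{P}_Y,P_Y),
\end{align*}
so (cancelling $H_{P_Y}(Y)$) the claimed lower bound is equivalent to $H_{\hat{P}_Y}(Y)\le H(\hat{P}_Y,P_Y)$, i.e. to $D_{KL}(\hat{P}_Y\|P_Y)\ge 0$, again with equality iff $P_Y=\hat{P}_Y$. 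Chaining the two gives the full statement, and since each equality forces $P_Y=\hat{P}_Y$, the overall equality condition is $P_Y=\hat{P}_Y$.

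I do not anticipate a real obstacle: the proof is two applications of Gibbs' inequality pointing in opposite directions ($D_{KL}(P_Y\|\hat{P}_Y)$ for the upper bound, $D_{KL}(\hat{P}_Y\|P_Y)$ for the lower one). The only care needed is sign bookkeeping — tracking which of $H(P_Y,\hat{P}_Y)$ and $H(\hat{P}_Y,P_Y)$ appears on each side — and a remark on finiteness: under the standing convention $0\log 0 = 0$ everything is finite when both distributions are supported on all of $[C]$, while if some $\hat{P}_Y(y)=0$ with $P_Y(y)>0$ the upper bound reads $+\infty$ and holds trivially. Note that $\sum_{y} R(y)=0$ is consistent with all of this but is not actually needed, since it is already absorbed into the cancellations above.
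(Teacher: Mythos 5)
Your proof is correct and is essentially the paper's own argument in a slightly reorganized form: the paper expands $\Delta$, extracts $-D_{KL}(P_Y\|\hat{P}_Y)$, and drops it (handling the lower bound ``similarly'' via $D_{KL}(\hat{P}_Y\|P_Y)\ge 0$), which is exactly your two applications of Gibbs' inequality after rewriting each flanking sum as a cross entropy minus an entropy. The equality condition you state matches the paper's as well, so no changes are needed.
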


\begin{proof}
	(of Theorem \ref{Thm:EntEstBound}) From the definition of entropy, we have 
	\begin{align*}
		& H_{P_Y}(Y) - H_{\hat{P}_Y}(Y) \nonumber \\
		& = \sum_{y\in[C]} P_Y(y)\log\left(\frac{1}{{P}_Y(y)}\right)
		- \sum_{y\in[C]} \hat{P}_Y(y)\log\left(\frac{1}{\hat{P}_Y(y)}\right) \nonumber \\
		& = \sum_{y\in[C]} P_Y(y)\log\left(\frac{\hat{P}_Y(y)}{{P}_Y(y)} \frac{1}{\hat{P}_Y(y)} \right) \nonumber \\
		&\quad - \sum_{y\in[C]}  \hat{P}_Y(y)\log\left(\frac{1}{\hat{P}_Y(y)}\right) \nonumber \\
		& = - \sum_{y\in[C]} P_Y(y)\log\left(\frac{{P}_Y(y)}{\hat{P}_Y(y)} \right) \nonumber \\
		&\quad
		+ \sum_{y\in[C]} P_Y(y)\log\left(\frac{1}{\hat{P}_Y(y)} \right)  - \sum_{y\in[C]}  \hat{P}_Y(y)\log\left(\frac{1}{\hat{P}_Y(y)}\right) \nonumber \\
		& = -D_{KL}(p_Y||\hat{P}_Y) \nonumber \\
		& \quad + \sum_{y\in[C]} P_Y(y)\log\left(\frac{1}{\hat{P}_Y(y)} \right)  - \sum_{y\in[C]}  \hat{P}_Y(y)\log\left(\frac{1}{\hat{P}_Y(y)}\right), \nonumber \\
		& \leq \sum_{y\in[C]} R(y) \log\left( \frac{1}{\hat{P}_Y(y)} \right).
	\end{align*}
	The equality holds if and only if $\hat{P}_Y=P_Y$. Similarly, we can show $\sum_{y\in[C]} R(y) \log\left( \frac{1}{{P}_Y(y)} \right)
	\leq 
	H_{P_Y}(Y) - H_{\hat{P}_Y}(Y)$.
\end{proof}

\section*{Appendix F: Classification Error Probabiliy Bound via Mutual Information}\label{Sec:Cls_ErrPrbBd_MI}

\begin{lemma}\label{Lem:ErrorEntropyUB}
	(\cite{yi_derivation_2020}) For arbitrary $x\in[0,1]$, we have
	\begin{align}
		x\log\left(\frac{1}{x}\right) + (1-x) \log\left(\frac{1}{1-x}\right)  \leq 1-2(x - 0.5)^2.
	\end{align}
\end{lemma}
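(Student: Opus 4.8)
The plan is to set $\phi(x) := x\log(1/x) + (1-x)\log(1/(1-x))$ (the binary entropy function, with $\log$ to base $2$ and the convention $0\log 0 = 0$) and $\psi(x) := 1 - 2(x-\tfrac12)^2$, and to prove $f := \psi - \phi \geq 0$ on $[0,1]$. The structural facts I would exploit are that $\phi$ and $\psi$ are both symmetric about $x = \tfrac12$ and that $f$ is in fact \emph{strictly convex} on $(0,1)$, so its unique interior critical point $x = \tfrac12$ is a global minimum there, and at that point $f$ vanishes (this is also why the inequality is tight, with $\phi(\tfrac12) = 1 = \psi(\tfrac12)$).

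Concretely the steps are: (i) handle the endpoints, $f(0) = f(1) = 1 - 2\cdot\tfrac14 - 0 = \tfrac12 > 0$; (ii) on $(0,1)$ compute $f'(x) = (2 - 4x) - \log\frac{1-x}{x}$ and $f''(x) = -4 + \frac{1}{(\ln 2)\,x(1-x)}$; (iii) use $x(1-x) \leq \tfrac14$ together with $\ln 2 < 1$ to get $\frac{1}{(\ln 2)\,x(1-x)} \geq \frac{4}{\ln 2} > 4$, hence $f'' > 0$ on $(0,1)$ and $f$ is strictly convex; (iv) note $f'(\tfrac12) = (2-2) - \log 1 = 0$ (also immediate from the symmetry of $f$), so $x = \tfrac12$ minimizes $f$ over $(0,1)$; (v) evaluate $f(\tfrac12) = 1 - 0 - 1 = 0$, giving $f \geq 0$ on $(0,1)$ with equality only at $x = \tfrac12$; (vi) combine with (i) to conclude on all of $[0,1]$. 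A self-contained alternative I would keep in reserve is the substitution $x = \tfrac12 + t$, $t\in[-\tfrac12,\tfrac12]$: using $\log(\tfrac12 \pm t) = -1 + \log(1 \pm 2t)$ one rewrites the claim as $(1+s)\ln(1+s) + (1-s)\ln(1-s) \geq (\ln 2)\,s^2$ with $s := 2t$; calling the left side $g(s)$, one has $g(0) = g'(0) = 0$ and $g''(s) = \frac{2}{1-s^2} \geq 2$, so integrating twice yields $g(s) \geq s^2 \geq (\ln 2)\,s^2$, strict unless $s = 0$.

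As for the main obstacle: there is honestly no deep difficulty here, since the argument is a single-variable convexity computation. The only places that demand care are the bookkeeping of the logarithm base — the bound is sharp exactly for base $2$, and holds a fortiori for any larger base (in particular the natural logarithm), since the entropy then only shrinks while the right-hand side is unchanged — and the continuity/endpoint convention $0\log 0 = 0$. The one genuinely load-bearing estimate is $f'' \geq 0$ on $(0,1)$, and this is precisely where the coefficient $2$ in $1 - 2(x - \tfrac12)^2$ is used: the parabola must curve downward slowly enough that $-\phi$'s convexity (which blows up only near the endpoints) dominates everywhere, and the slack $\tfrac{4}{\ln 2} - 4 > 0$ quantifies exactly how much room there is.
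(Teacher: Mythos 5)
Your proof is correct. There is, however, nothing in the paper to compare it against step by step: the paper does not prove Lemma \ref{Lem:ErrorEntropyUB} at all, but imports it from \cite{yi_derivation_2020} and supports it only with the visual illustration in Figure \ref{Fig:BinaryDistrEntUB}. Your argument therefore supplies a self-contained justification that the paper leaves implicit, and both of your routes check out: the direct one, since $f=\psi-\phi$ satisfies $f''(x)=-4+\frac{1}{(\ln 2)\,x(1-x)}>0$ on $(0,1)$ (using $x(1-x)\le\tfrac14$ and $\ln 2<1$), $f'(\tfrac12)=0$, $f(\tfrac12)=0$, and $f(0)=f(1)=\tfrac12$; and the reserve one, where $g(s)=(1+s)\ln(1+s)+(1-s)\ln(1-s)$ has $g(0)=g'(0)=0$ and $g''(s)=\frac{2}{1-s^2}\ge 2$, so $g(s)\ge s^2\ge(\ln 2)s^2$ — this second route is exactly the quadratic lower bound on the binary Kullback--Leibler divergence $D(\mathrm{Ber}(\tfrac{1+s}{2})\,\|\,\mathrm{Ber}(\tfrac12))$ that underlies Pinsker-type estimates, so it is arguably the more transparent of the two. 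Your bookkeeping remarks are also the right ones to make explicit: the bound is tight at $x=\tfrac12$ only for base-$2$ logarithms, holds a fortiori in nats (the left side shrinks while the right side is unchanged), and requires the convention $0\log 0=0$ at the endpoints — points worth flagging since the surrounding use of the lemma in Theorem \ref{Thm:ErrorProbabilityMI_Bounds} never fixes the base.
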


Lemma \ref{Lem:ErrorEntropyUB} can be used to bound the entropy associated with a binary distribution, and a simple visual illustration of it is presented in Figure \ref{Fig:BinaryDistrEntUB} where we let $x$ be the error probability and $1-x$ be the correct probability (or accuracy).

\begin{figure}[!htb]
	\centering
	\includegraphics[width=\linewidth]{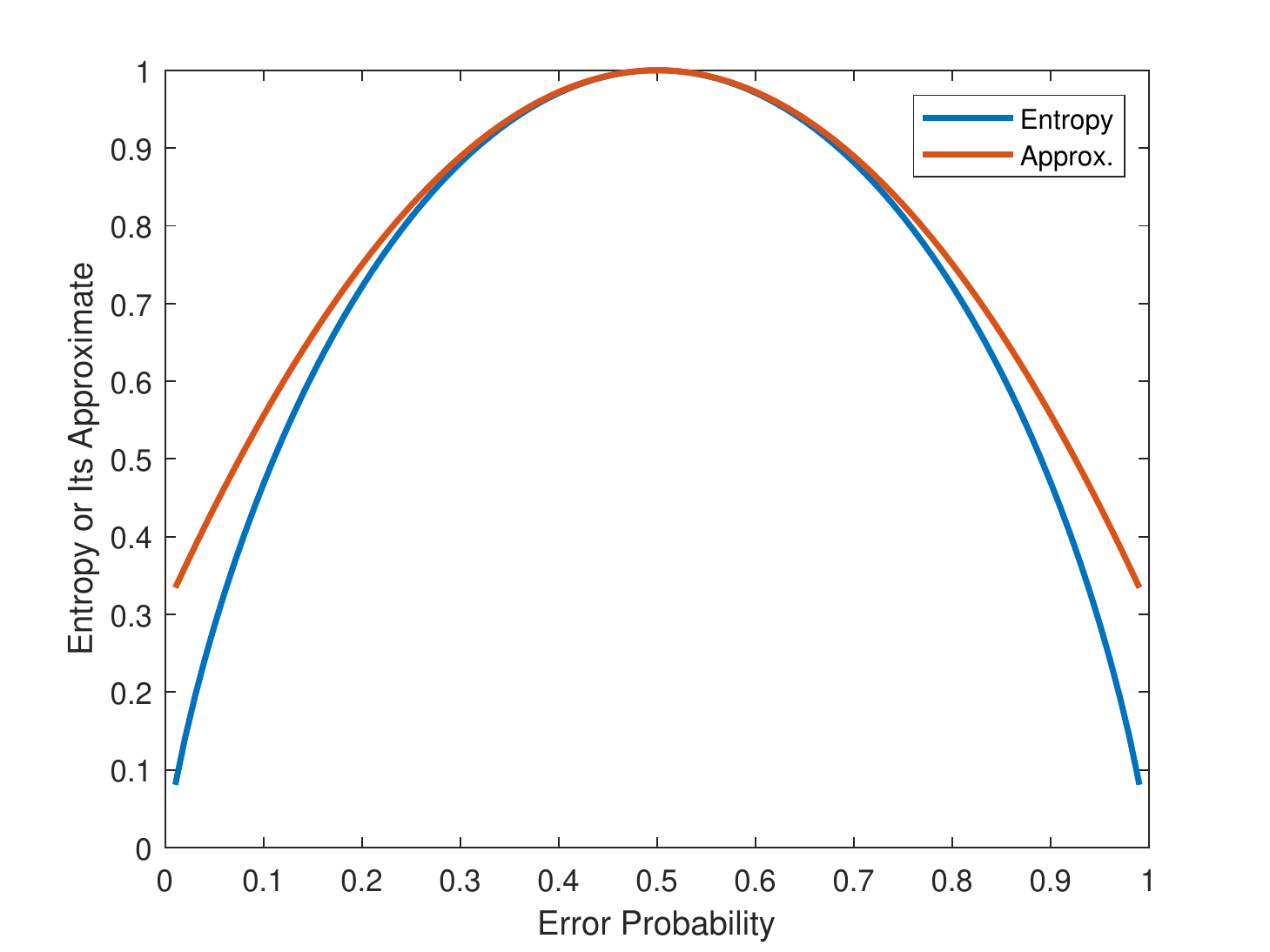}
	\caption{Upper bound of entropy assoicated with binary distribution: error probability $x$ and correct probability $1-x$.}\label{Fig:BinaryDistrEntUB}
\end{figure}

\begin{figure}[!htb]
	\centering
	\includegraphics[width=\linewidth]{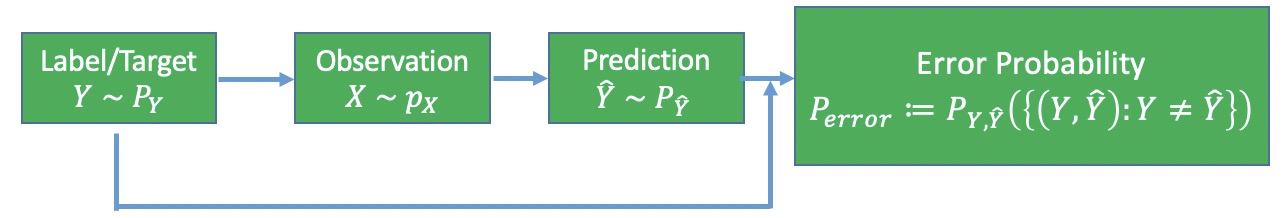}
	\caption{Information-theoretic view point of learning process.}\label{Fig:ErrorProbability}
\end{figure}

\begin{thm}\label{Thm:ErrorProbabilityMI_Bounds}
	(Error Probability Bound via Mutual Information) Assume that the learning process $Y\to X\to\hat{Y}$ in Figure \ref{Fig:ErrorProbability} is a Markov chain where $Y\in[C]$, $X\in\mbR^n$, and $\hat{Y}\in[C]$, then for the prediction $\hat{Y}$ from an arbitrary learned model, we have
	\begin{align}\label{Eq:ErrorProbabilityBounds}
		\max\left( 0,\frac{2+H(Y) - I(X;Y) - a}{4} \right)
		\leq P_{error}
	\end{align}
	where $a:=\sqrt{(H(Y) - I(X;Y) - 2)^2 + 4}$.
\end{thm}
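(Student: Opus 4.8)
The plan is to turn the claim into a Fano-type inequality relating $P_{error}$ to $H(Y|X)$ and then invert it. The first step is to rewrite the quantity appearing in the bound: by Theorem~\ref{Thm:ITQuantitiesConnections} we have $H(Y)-I(X;Y)=H(Y|X)$, and since $Y\to X\to\hat Y$ is a Markov chain the data processing inequality gives $I(Y;\hat Y)\le I(X;Y)$, hence
\[
H(Y|\hat Y)\;=\;H(Y)-I(Y;\hat Y)\;\ge\;H(Y)-I(X;Y).
\]
So it suffices to produce an upper bound on $H(Y|\hat Y)$ that depends only on $P_{error}$, is monotone, and then to solve the resulting scalar inequality for $P_{error}$.

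For the upper bound I would run the usual Fano argument but with the cardinality term replaced by the ``carefully designed relaxation'' advertised in the introduction. Introduce the error indicator $E:=\mathbf{1}\{Y\ne\hat Y\}$, so $P(E=1)=P_{error}$; chain-ruling $H(Y,E\mid\hat Y)$ in the two possible orders and using that $E$ is a function of $(Y,\hat Y)$ and that $Y=\hat Y$ on $\{E=0\}$ gives
\[
H(Y|\hat Y)\;\le\;H_b(P_{error})\;+\;P_{error}\cdot H(Y\mid E=1,\hat Y),
\]
where $H_b$ is the binary entropy. Now Lemma~\ref{Lem:ErrorEntropyUB}, applied with $x=P_{error}$, replaces $H_b(P_{error})$ by the quadratic surrogate $1-2(P_{error}-\tfrac12)^2$, while the residual term $H(Y\mid E=1,\hat Y)$ is controlled by the cardinality-free relaxation (in place of the textbook bound $\log(C-1)$). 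Collecting terms, the target is an inequality of the shape
\[
H(Y)-I(X;Y)\;\le\;g(P_{error}),\qquad g(p):=\frac{1}{2(1-p)}+2p,
\]
with $g$ strictly increasing on $[0,1)$ and $g(0)=\tfrac12$.

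The final step is elementary. Set $s:=H(Y)-I(X;Y)\ge 0$ and $p:=P_{error}$. Clearing the denominator in $s\le g(p)$ yields the quadratic inequality $4p^{2}-(2s+4)p+(2s-1)\le 0$, whose discriminant equals $4\bigl((s-2)^{2}+4\bigr)$; hence $p$ must lie between the two roots $\tfrac{(s+2)\pm a}{4}$ with $a:=\sqrt{(s-2)^{2}+4}$. Since $a\ge 2-s$ one checks that the larger root always exceeds $1$, so for a genuine probability the only binding constraint is $p\ge\tfrac{(s+2)-a}{4}$, which is precisely the asserted bound; and because $P_{error}\ge 0$ trivially, one may write it as $\max\!\bigl(0,\tfrac{2+H(Y)-I(X;Y)-a}{4}\bigr)\le P_{error}$, the outer maximum simply absorbing the regime where $s$ is too small for the estimate to bite.

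I expect the main obstacle to be exactly the relaxation of the residual entropy $H(Y\mid E=1,\hat Y)$: the naive Fano bound introduces $\log(C-1)$ and so leaves the number of classes in the final estimate, whereas to land on the clean function $g$ above one must instead trade that term (at some cost in tightness) for a cardinality-free quantity whose combination with Lemma~\ref{Lem:ErrorEntropyUB} telescopes exactly into $g$. Everything else --- the data-processing step, the chain rule, and the quadratic inversion, including the bookkeeping that singles out the correct root and justifies the outer $\max$ with $0$ --- is routine once that relaxation is pinned down.
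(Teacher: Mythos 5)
Your scaffolding coincides with the paper's proof almost line for line: the error indicator $E$, the chain-rule decomposition $H(Y\mid\hat Y)\le H(P_{error})+P_{error}\,H(Y\mid \hat Y,E=1)$, the data-processing step $H(Y\mid\hat Y)=H(Y)-I(Y;\hat Y)\ge H(Y)-I(X;Y)$, the quadratic surrogate for the binary entropy from Lemma~\ref{Lem:ErrorEntropyUB}, and the final inversion (your $4p^{2}-(2s+4)p+(2s-1)\le 0$ is the paper's $2p^{2}-(2+s)p+(s-\tfrac12)\le 0$, with the same discriminant and the same selection of the lower root) are exactly the paper's steps, and your target inequality $s\le \frac{1}{2(1-p)}+2p$ is algebraically identical to the paper's inequality $(1-P_{error})\bigl(H(Y)-I(X;Y)\bigr)\le 1-2\bigl(P_{error}-\tfrac12\bigr)^{2}$. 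But the one step you leave open --- how the residual term $H(Y\mid\hat Y,E=1)$ is handled without introducing $\log(C-1)$ --- is precisely the substantive content of the theorem, and you never supply it: you only posit that some ``cardinality-free relaxation'' exists whose combination with Lemma~\ref{Lem:ErrorEntropyUB} telescopes into $g$. As written, the chain from the Fano decomposition to $s\le g(p)$ is therefore not established, so the proposal has a genuine gap rather than being a complete alternative argument.

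For the record, the paper's ``relaxation'' is the single inequality $H(Y\mid\hat Y,E=1)\le H(Y\mid\hat Y)$, which turns the decomposition into $H(Y\mid\hat Y)\le H(P_{error})+P_{error}H(Y\mid\hat Y)$, i.e.\ $(1-P_{error})H(Y\mid\hat Y)\le H(P_{error})$; combining this with the data-processing lower bound on $H(Y\mid\hat Y)$ and with Lemma~\ref{Lem:ErrorEntropyUB} yields exactly your function $g$. Your instinct that this is the delicate point is well founded: bounding the entropy conditioned on the \emph{event} $E=1$ by $H(Y\mid\hat Y)$ is not an instance of ``conditioning reduces entropy'' (that fact only controls the average $H(Y\mid\hat Y,E)$), and it can fail, e.g.\ when $\hat Y$ equals $Y$ with high probability while the rare errors are spread over many classes, so that $H(Y\mid\hat Y,E=1)$ is close to $\log(C-1)$ yet $H(Y\mid\hat Y)$ is small. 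So the missing step is not routine bookkeeping to be filled in later; it is the crux of the proof, and without an explicit (and correctly justified) substitute for it your argument does not go through.
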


\begin{proof}\label{Proof:ErrorProbabilityMI_Bounds}
	(of Theorem \ref{Thm:ErrorProbabilityMI_Bounds}) We define a random variable $E$
	\begin{align}
		E = \begin{cases}
			0, \text{if } Y= \hat{Y},\\
			1, \text{if } Y\neq \hat{Y},
		\end{cases}
	\end{align}
	then the error probability will become
	\begin{align}
		P_{error} = P_{E}(E=1).
	\end{align}
	From the properties of conditional joint entropy, we have
	\begin{align}\label{Eq:UpperBound}
		H(E,Y|\hat{Y})
		& = H(E|\hat{Y})  + H(Y|E,\hat{Y}) \nonumber \\
		& = H(E|\hat{Y}) + P_E(E=0) H(Y|\hat{Y},E=0) \nonumber\\
		& \quad + P_E(E=1) H(Y|\hat{Y},E=1) \nonumber\\
		& = H(E|\hat{Y}) + P_{E}(E=1)H(Y|\hat{Y},E=1) \nonumber\\
		& \leq H(E|\hat{Y}) + P_E(E=1) H(Y|\hat{Y})  \nonumber\\
		& \leq H(E) + P_E(E=1) H(Y|\hat{Y})  \nonumber\\
		& = H(P_{error}) + P_{error}H(Y|\hat{Y}),
	\end{align}
	where we used the fact that $H(Y|\hat{Y},E=0)=0$, $H(Y|\hat{Y},E=1)\leq H(Y|\hat{Y})$, and $H(E|\hat{Y}) \leq H(E)=H(P_{error})$ with $H(P_{error})$ defined as follows
	\begin{align*}
		H(P_{error})
		& :=	P_{error}\log\left(\frac{1}{P_{error}}\right) \\
		&\quad + (1-P_{error})\log\left( \frac{1}{1-P_{error}}\right).
	\end{align*}
	We also have
	\begin{align}
		H(E,Y|\hat{Y})
		& = H(Y|\hat{Y}) + H(E|Y,\hat{Y}) \nonumber\\
		& = H(Y|\hat{Y})  \label{Eq:HYcYhat}\\
		& = H(Y) - I(Y;\hat{Y})  \nonumber\\
		& \geq H(Y) - I(X;Y), \label{Eq:relaxed}
	\end{align}
	where we used the fact that $H(E|Y,\hat{Y})=0$, and the data processing inequality associated with Markov process $Y\to X\to \hat{Y}$, i.e., 
	\begin{align}\label{Eq:DataProcessingIneq}
		I(X;Y) \geq I(Y;\hat{Y}).
	\end{align}
	Thus, from \eqref{Eq:UpperBound} and \eqref{Eq:HYcYhat}, we have
	\begin{align}
		H(P_{error}) + P_{error}H(Y|\hat{Y}) \geq H(Y|\hat{Y}).
	\end{align}
	Combining the above and \eqref{Eq:relaxed}, we get
	\begin{align}\label{Eq:IntermediateBound}
		H(P_{error})  
		\geq (1-P_{error}) \left(H(Y) - I(X;Y) \right),
	\end{align}
	which implies $P_{error} \geq 1 - \frac{H(P_{error})}{H(Y) - I(X;Y)}$.
	
	From Lemma \ref{Lem:ErrorEntropyUB}, we have
	\begin{align*}
		(1-P_{error}) \left(H(Y) - I(X;Y) \right) \leq 1-2(P_{error} - 0.5)^2,
	\end{align*}
	or
	\begin{align}
		a_2P_{error}^2 + a_1 P_{error} + a_0 \leq 0,
	\end{align}
	where 
	\begin{align*} 
		a_2:=2, \\
		a_1:=- (2+H(Y) - I(X;Y)),\\ 
		a_0:=H(Y) - I(X;Y)-0.5.
	\end{align*}
	By solving the above inequality for $P_{error}$, we get $P_{error} \in [\frac{2+H(Y) - I(X;Y) - a}{4}, \frac{2+H(Y) - I(X;Y) + a}{4}]$, 
	where $a$ is defined as 
	\begin{align*}
		a:=\sqrt{(H(Y) - I(X;Y) - 2)^2 + 4}.
	\end{align*}
	Since $P_{error}\in[0,1]$, we have 
	\begin{align}
		\max\left( 0,\frac{2+H(Y) - I(X;Y) - a}{4} \right)
		\leq P_{error}.
	\end{align}
	
\end{proof}

\section*{Appendix G: Mutual Information Bounds in a Binary Classification Data Model}\label{Sec:MIBdsBnrCls}

We consider a binary classification data model $P_{X,Y}$ in $\mbR^n\times\{-1,1\}$. In the data generation process, we first sample a label $y\in\{-1,1\}$, and then a corresponding feature $x$ from a Gaussian distribution. We model the feature as a Gaussian random vector $X$ with sample space $\mbR^n$, i.e., 
\begin{align}\label{Defn:BinaryClassificationDataModel}
	P(Y=-1) = q, P(Y=1) = 1-q, \nonumber \\
	p(X=x|y) = \frac{1}{\sqrt{|2\pi\Sigma|}} \exp\left(-\frac{(x-y\mu)^T\Sigma^{-1}(x-y\mu)}{2}\right),
\end{align}
where $\mu\in\mbR^n$ is a mean vector, and $\Sigma\in\mbR^{n\times n}$ is a positive semidefinite matrix. We will denote by $Y\sim \mcB(q)$ the marginal distirbution $P_Y$, and $X\sim \mcN(y\mu,\Sigma)$ the conditional distribution $p_{X|y}$. 

\begin{lemma}\label{Lem:QuadraticFormExpectation}
	(Expectation of Quadratic Form of Gaussian Random Vector) For a Gaussian random vector $X\in\mbR^n$ following $\mcN(\mu,\Sigma)$, we have
	\begin{align}\label{Eq:QuadraticFormExpectation}
		\mbE_{X}\left[ X^TAX \right] = \tr(A\Sigma) + \mu^T\Sigma^{-1}\mu, \nonumber\\
		\mbE_{X}\left[ (X-\mu)^TA(X-\mu) \right] = \tr(A\Sigma), \nonumber\\
		\mbE_{X}\left[ (X+\mu)^T A (X+\mu) \right] = \tr(A\Sigma) + 4\mu^T\Sigma \mu,
	\end{align}
	where $A\in\mbR^{n\times n}$ is a square matrix.
\end{lemma}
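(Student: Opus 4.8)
The plan is to reduce everything to the single fact that a centered Gaussian $Z\sim\mcN(\bm{0},\Sigma)$ has second moment matrix $\mbE[ZZ^T]=\Sigma$, and to handle the non-centered case by a translation. Concretely, I would write $X=\mu+Z$ with $Z\sim\mcN(\bm{0},\Sigma)$, so that $\mbE[Z]=\bm{0}$, and then expand each of the three quadratic forms about $\mu$.

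The key computational step is the centered identity $\mbE[Z^TAZ]=\tr(A\Sigma)$. This follows from the scalar–trace trick: since $Z^TAZ$ is a scalar, $Z^TAZ=\tr(Z^TAZ)=\tr(AZZ^T)$, and pulling the trace through the (linear) expectation gives $\mbE[Z^TAZ]=\tr\!\left(A\,\mbE[ZZ^T]\right)=\tr(A\Sigma)$. Notice this never uses invertibility of $\Sigma$, which is consistent with the hypothesis that $\Sigma$ is only positive semidefinite.

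With this in hand the three claims are immediate bookkeeping. For the first, $(\mu+Z)^TA(\mu+Z)=\mu^TA\mu+\mu^TAZ+Z^TA\mu+Z^TAZ$, and taking expectations kills the two cross terms (since $\mbE[Z]=\bm{0}$), leaving $\mu^TA\mu+\tr(A\Sigma)$. For the second, $(X-\mu)^TA(X-\mu)=Z^TAZ$, so its expectation is exactly $\tr(A\Sigma)$. For the third, $X+\mu=2\mu+Z$, so $(X+\mu)^TA(X+\mu)=4\mu^TA\mu+2\mu^TAZ+2Z^TA\mu+Z^TAZ$, whose expectation is $4\mu^TA\mu+\tr(A\Sigma)$.

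I do not expect any real obstacle here: the argument is a one-line trace manipulation plus linearity of expectation. The only points requiring a word of care are (i) if one wants to allow non-symmetric $A$, one may replace $A$ by its symmetrization $\tfrac12(A+A^T)$ since $Z^TAZ=Z^T\tfrac12(A+A^T)Z$, though this is not needed for the identities as written; and (ii) the right-hand sides in the first and third displayed equations should be read as $\tr(A\Sigma)+\mu^TA\mu$ and $\tr(A\Sigma)+4\mu^TA\mu$ respectively (the $\Sigma^{-1}$ and $\Sigma$ appearing there are typographical slips), since the derivation above produces exactly those expressions and, in the subsequent application to the model \eqref{Defn:BinaryClassificationDataModel}, it is the form $\mu^T\Sigma^{-1}\mu$ that arises only after choosing $A=\Sigma^{-1}$.
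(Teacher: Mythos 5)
Your proof is correct and essentially matches the paper's: the paper likewise expands the quadratic form about the mean and reduces to the trace identity $\mbE\left[\tr\left(A(X-\mu)(X-\mu)^T\right)\right]=\tr(A\Sigma)$, only written as an explicit Gaussian integral rather than via the substitution $X=\mu+Z$. Your reading of the right-hand sides as $\tr(A\Sigma)+\mu^TA\mu$ and $\tr(A\Sigma)+4\mu^TA\mu$ is also what the paper's own derivation produces, so the $\Sigma^{-1}$ and $\Sigma$ appearing in the displayed statement are indeed typographical slips, harmless in the later application where $A=\Sigma^{-1}$.
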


\begin{proof}\label{Proof:QuadraticFormExpectation}
	(of Lemma \ref{Lem:QuadraticFormExpectation}) From the definition of expectation, we have
	\begin{align*}
		& \mbE_{X}\left[ X^TAX \right] \\
		& = \int_{\mbR^n}  \frac{1}{\sqrt{|2\pi\Sigma|}} \exp\left(-\frac{(x-\mu)^T\Sigma^{-1}(x-\mu)}{2}\right) x^TAx  dx \\
		& = \int_{\mbR^n}  \frac{1}{\sqrt{|2\pi\Sigma|}} \exp\left(-\frac{(x-\mu)^T\Sigma^{-1}(x-\mu)}{2}\right) \tr(Axx^T)  dx \\
		& = \int_{\mbR^n}  \frac{1}{\sqrt{|2\pi\Sigma|}} \exp\left(-\frac{(x-\mu)^T\Sigma^{-1}(x-\mu)}{2}\right) \\
		&\quad \times \tr\left(A(x-\mu)(x-\mu)^T - A\mu\mu^T + Ax\mu^T + A\mu x^T \right)  dx \\
		& = \int_{\mbR^n}  \frac{1}{\sqrt{|2\pi\Sigma|}} \exp\left(-\frac{(x-\mu)^T\Sigma^{-1}(x-\mu)}{2}\right) \\
		& \quad \times \left( \tr(A(x-\mu)(x-\mu)^T) - \mu^TA\mu + \mu^TAx + x^TA\mu \right)  dx \\
		& = \int_{\mbR^n}  \frac{1}{\sqrt{|2\pi\Sigma|}} \exp\left(-\frac{(x-\mu)^T\Sigma^{-1}(x-\mu)}{2}\right) \\
		&\quad \times \tr(A(x-\mu)(x-\mu)^T)  dx \\
		&\quad + \mu^TA\mu\\
		& = \tr(A\Sigma) + \mu^TA\mu.
	\end{align*}
	Similarly, we can derive the other two equations in \eqref{Eq:QuadraticFormExpectation}.
\end{proof}

\begin{thm}\label{Thm:BinaryClassificationDataModelTruthMIApp}
	(Mutual Information of Binary Classification Dataset Model) For the data model with distribution defined in \eqref{Defn:BinaryClassificationDataModelApp}, we have the mutual information $I(X;Y)$ satisfying
	\begin{align}\label{Eq:MIinBinaryClassificationModelApp}
		2\min(q,1-q)\mu^T\Sigma^{-1}\mu \leq I(X;Y) \leq 4q(1-q)\mu^T\Sigma^{-1}\mu.
	\end{align}
\end{thm}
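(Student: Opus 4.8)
The plan is to start from the identity $I(X;Y)=\sum_{y}P_Y(y)\,D_{KL}\!\left(p_{X\mid Y=y}\,\big\|\,p_X\right)$, which drops straight out of the definition of mutual information in \eqref{Eq:MutualInfoDefn} after writing $p(x,y)=P(y)p(x\mid y)$ and $p(x)=\sum_{y'}P(y')p(x\mid y')$. For the binary Gaussian model this becomes
\[
I(X;Y)=q\,D_{KL}\!\left(p_{X\mid -1}\,\big\|\,p_X\right)+(1-q)\,D_{KL}\!\left(p_{X\mid 1}\,\big\|\,p_X\right),
\]
with $p_X=q\,p_{X\mid -1}+(1-q)\,p_{X\mid 1}$ a two-component Gaussian mixture, $p_{X\mid 1}=\mathcal{N}(\mu,\Sigma)$ and $p_{X\mid -1}=\mathcal{N}(-\mu,\Sigma)$. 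The only closed-form fact needed is the Gaussian--Gaussian divergence with common covariance: $D_{KL}\!\left(p_{X\mid 1}\,\|\,p_{X\mid -1}\right)=D_{KL}\!\left(p_{X\mid -1}\,\|\,p_{X\mid 1}\right)=\tfrac12(2\mu)^{T}\Sigma^{-1}(2\mu)=2\mu^{T}\Sigma^{-1}\mu$.

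For the upper bound I would invoke convexity of $D_{KL}(P\,\|\,\cdot)$ in its second argument. Since $p_X$ is the convex combination $q\,p_{X\mid -1}+(1-q)\,p_{X\mid 1}$,
\[
D_{KL}\!\left(p_{X\mid 1}\,\big\|\,p_X\right)\le q\,D_{KL}\!\left(p_{X\mid 1}\,\big\|\,p_{X\mid -1}\right)+(1-q)\,D_{KL}\!\left(p_{X\mid 1}\,\big\|\,p_{X\mid 1}\right)=2q\,\mu^{T}\Sigma^{-1}\mu,
\]
and symmetrically $D_{KL}(p_{X\mid -1}\,\|\,p_X)\le 2(1-q)\,\mu^{T}\Sigma^{-1}\mu$. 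Substituting into the identity yields $I(X;Y)\le q\cdot 2(1-q)\mu^{T}\Sigma^{-1}\mu+(1-q)\cdot 2q\,\mu^{T}\Sigma^{-1}\mu=4q(1-q)\,\mu^{T}\Sigma^{-1}\mu$. This direction is essentially bookkeeping once the identity and the Gaussian divergence are in place.

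For the lower bound the first move is to discard the unequal weights: because both divergences are nonnegative, $I(X;Y)\ge \min(q,1-q)\left[D_{KL}(p_{X\mid -1}\,\|\,p_X)+D_{KL}(p_{X\mid 1}\,\|\,p_X)\right]$, so it would suffice to bound the bracketed sum below by $2\mu^{T}\Sigma^{-1}\mu$. To get a handle on it I would pass to the scalar log-likelihood-ratio statistic $t:=2X^{T}\Sigma^{-1}\mu$, which is sufficient (so $I$ is unchanged) and satisfies $t\mid Y{=}y\sim\mathcal{N}\!\left(2y\,\mu^{T}\Sigma^{-1}\mu,\ 4\mu^{T}\Sigma^{-1}\mu\right)$; under this reduction $p_{X\mid y}/p_X$ is an explicit logistic-type function of $t$, and $D_{KL}(p_{X\mid 1}\,\|\,p_X)+D_{KL}(p_{X\mid -1}\,\|\,p_X)$ collapses to one-dimensional Gaussian integrals of $\log\!\left(q+(1-q)e^{t}\right)$, which I would estimate using the moment generating function of $t$ together with a log-sum / Jensen bound applied in the direction that keeps the estimate nontrivial.

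I expect this last step to be the crux. The naive application of Jensen to $\mathbb{E}\!\left[\log(q+(1-q)e^{t})\right]$ is useless (it only recovers $I(X;Y)\ge 0$), so the lower bound hinges on a sharper, carefully signed control of those Gaussian log-integrals --- effectively quantifying how much probability mass the mixture marginal $p_X$ shares between the two conditional Gaussians. Everything else (the average-KL identity, the Gaussian KL value, and the convexity argument for the upper bound) is standard; the delicate part is extracting the clean linear lower bound $2\min(q,1-q)\mu^{T}\Sigma^{-1}\mu$ from these one-dimensional integrals.
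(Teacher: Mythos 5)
Your upper-bound argument is correct and, modulo packaging, rests on the same convexity fact as the paper's: the paper bounds $h(X)$ from above by applying Jensen's inequality inside the mixture, $\log\bigl((1-q)e^{-A}+qe^{-B}\bigr)\ge -(1-q)A-qB$, which is exactly the concavity-of-$\log$ statement underlying convexity of $D_{KL}(P\,\|\,\cdot)$ in its second argument; your decomposition $I(X;Y)=q\,D_{KL}(p_{X|-1}\|p_X)+(1-q)\,D_{KL}(p_{X|1}\|p_X)$ plus the Gaussian--Gaussian divergence $2\mu^T\Sigma^{-1}\mu$ is a cleaner route to the identical bound $4q(1-q)\mu^T\Sigma^{-1}\mu$, avoiding the quadratic-form expectation lemma. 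That half is fine.

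The genuine gap is the lower bound, and it cannot be closed, because the inequality you reduce to after discarding the weights is false. Since $p_X\ge(1-q)\,p_{X|1}$ and $p_X\ge q\,p_{X|-1}$ pointwise, one always has $D_{KL}(p_{X|1}\|p_X)\le\log\frac{1}{1-q}$ and $D_{KL}(p_{X|-1}\|p_X)\le\log\frac{1}{q}$, so your bracketed sum is at most $\log\frac{1}{q(1-q)}$ no matter how large $\mu^T\Sigma^{-1}\mu$ is; the target $2\mu^T\Sigma^{-1}\mu$ therefore fails whenever $\mu^T\Sigma^{-1}\mu>\frac12\log\frac{1}{q(1-q)}$, and no sharper handling of the one-dimensional Gaussian log-integrals can rescue it. Indeed the theorem's stated lower bound is itself false in that regime: $I(X;Y)\le H(Y)\le\log 2$ for a binary label, while $2\min(q,1-q)\mu^T\Sigma^{-1}\mu$ is unbounded (take $q=\tfrac12$, $n=1$, $\Sigma=1$, $\mu=2$: the claim asserts $I\ge 4$). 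The paper's own Appendix proof slips at precisely the analogous step: after bounding $\log p_X(x)$ by the larger of the two Gaussian exponents it needs a lower bound on $\int p_X(x)\,\min\bigl(\tfrac{(x-\mu)^T\Sigma^{-1}(x-\mu)}{2},\tfrac{(x+\mu)^T\Sigma^{-1}(x+\mu)}{2}\bigr)dx$, but it substitutes $\min(I_-,I_+)$, i.e.\ it uses $\mathbb{E}[\min(\cdot,\cdot)]\le\min(\mathbb{E}[\cdot],\mathbb{E}[\cdot])$ in the wrong direction. So your instinct that this step is the crux is right, but the honest conclusion is that any correct lower bound must saturate near $H(Y)$ as the class separation grows (e.g.\ be of the form $\min\{c\,\mu^T\Sigma^{-1}\mu,\,H(Y)\}$ or tend to $H(Y)$), rather than grow linearly in $\mu^T\Sigma^{-1}\mu$ as claimed.
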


\begin{proof}\label{Proof:BinaryClassificationDataModelTruthMI}
	(of Theorem \ref{Thm:BinaryClassificationDataModelTruthMIApp}) From the definition of mutual information, we have
	\begin{align}\label{Defn:MIinBinaryClassificationModel}
		I(X,Y) := h(X) - h(X|Y), 
	\end{align}
	where the differential entropy $h(X|Y=1) = h(X|Y=-1) = \frac{1}{2}\log(|2\pi e\Sigma|)$ and $e$ is the natural number. Thus 
	\begin{align}\label{Eq:ConditionalEntropy_XcY}
		h(X|Y) 
		& = P(Y=1)\times h(X|Y=1) \\
		&\quad + P(Y=-1)\times h(X|Y=-1) \\
		& =\frac{1}{2}\log(|2\pi e\Sigma|).
	\end{align}
	
	From the definition of data model in \eqref{Defn:BinaryClassificationDataModel}, we have the marginal distirbution $p_X$
	\begin{align}\label{Eq:XMarginalDistribution}
		p_X(x) 
		& = P_Y(Y=1) \times p_{X|Y=1}(X=x) \\
		&\quad + P_Y(Y=-1) \times p_{X|Y=-1}(X=x) \nonumber\\
		& =  \frac{(1-q)}{\sqrt{|2\pi\Sigma|}} \exp\left(-\frac{(x-\mu)^T\Sigma^{-1}(x-\mu)}{2}\right) \\
		&\quad +  \frac{q}{\sqrt{|2\pi\Sigma|}} \exp\left(-\frac{(x+\mu)^T\Sigma^{-1}(x+\mu)}{2}\right).
	\end{align}
	we have \eqref{Eq:XEntropy}, 
	\begin{align}\label{Eq:XEntropy}
		& h(X) \nonumber\\
		& := \int_\mbR p_X(x) \log\left( \frac{1}{p_X(x)} \right) dx \nonumber \\
	& =  \int_\mbR - p_X(x)  \nonumber \\
	& \quad \times \log\left( 
	\frac{(1-q)}{\sqrt{|2\pi\Sigma|}} \exp\left(-\frac{(x-\mu)^T\Sigma^{-1}(x-\mu)}{2}\right)
	\right) dx \nonumber \\
	&\quad + \int_\mbR -p_X(x)\nonumber \\
	& \quad \times   \log\left( 
	\frac{q}{\sqrt{|2\pi\Sigma|}} \exp\left(-\frac{(x+\mu)^T\Sigma^{-1}(x+\mu)}{2}\right)
	\right) dx
\end{align}

From Jensen's inequality for convex function $\exp(x)$, we have \eqref{Eq:LogOfConvexCombinationOfExp}. 

\begin{table*}
	\centering 
	\begin{minipage}{0.75\textheight}
		\begin{align}\label{Eq:LogOfConvexCombinationOfExp}
			& \log\left( 
			\frac{(1-q)}{\sqrt{|2\pi\Sigma|}} \exp\left(-\frac{ (x-\mu)^T\Sigma^{-1}(x-\mu) }{2}\right)
			+  \frac{q}{\sqrt{|2\pi\Sigma|}} \exp\left(-\frac{(x+\mu)^T\Sigma^{-1}(x+\mu)}{2}\right)
			\right) \nonumber \\
			& =  \log\left( \frac{1}{\sqrt{|2\pi\Sigma|}} \right) 
			+ \log\left( {(1-q) \times  \exp\left(-\frac{ (x-\mu)^T\Sigma^{-1}(x-\mu) }{2}\right)
				+ q \times  \exp\left(-\frac{ (x+\mu)^T\Sigma^{-1}(x+\mu) }{2}\right)} \right)  \nonumber \\
			&\geq  \log\left( \frac{1}{ \sqrt{|2\pi\Sigma|} } \right) 
			+ \log\left( \exp\left((1-q)\times  \left(-\frac{ (x-\mu)^T\Sigma^{-1}(x-\mu) }{2}\right) + q\times \left(-\frac{ (x+\mu)^T\Sigma^{-1}(x+\mu) }{2}\right)\right) \right)  \nonumber \\ 
			& =  \log\left( \frac{1}{ \sqrt{|2\pi\Sigma|} } \right) 
			+ \left((1-q)\times  \left(-\frac{(x-\mu)^T\Sigma^{-1}(x-\mu)}{2}\right) + q\times \left(-\frac{(x+\mu)^T\Sigma^{-1}(x+\mu)}{2}\right)\right)  \nonumber \\
			& = \log\left( \frac{1}{ \sqrt{|2\pi\Sigma|} } \right) 
			+ \left( -\frac{1}{2}x^T\Sigma^{-1}x - \frac{1}{2}\mu^T\Sigma^{-1}\mu + (1-2q)x^T\Sigma^{-1}\mu \right).
		\end{align}
	\end{minipage}
\end{table*}

Since 
\begin{align*}
	& \int_\mbR p_X(x)
	\left( \log\left( \frac{1}{ \sqrt{|2\pi\Sigma|} } \right) 
	-  \frac{1}{2}\mu^T\Sigma^{-1}\mu \right) dx \\
	& = \log\left( \frac{1}{ \sqrt{|2\pi\Sigma|} } \right) 
	-  \frac{1}{2}\mu^T\Sigma^{-1}\mu,
\end{align*}
and
\begin{align*}
	& \int_{\mbR^n} p_X(x) (1-2q)x^T\Sigma^{-1}\mu dx\\
	& = (1-q)(1-2q)\mu^T\Sigma^{-1}\mu + q (1-2q)(-\mu)^T\Sigma^{-1}\mu \\
	& = (1-2q)^2 \mu^T\Sigma^{-1}\mu,
\end{align*}
and 
\begin{align*}
	\int_{\mbR^n} p_X(x) \left(-\frac{1}{2}x^T\Sigma^{-1}x \right)dx
	= - \frac{1}{2}\left( n + \mu^T\Sigma^{-1}\mu \right),
\end{align*}
where we used Lemma \ref{Lem:QuadraticFormExpectation}, then we have from \eqref{Eq:XEntropy}, 
\begin{align}\label{Eq:XEntropyUb}
	& h(X) \nonumber\\
	& \leq \int_{\mbR^n} - p_X(x) \times \left( \log\left( \frac{1}{ \sqrt{|2\pi\Sigma|} } \right)  -\frac{1}{2}x^T\Sigma^{-1}x  \right) dx \nonumber\\
	& \quad + \int_{\mbR^n} - p_X(x) \times \left(  - \frac{1}{2}\mu^T\Sigma^{-1}\mu + (1-2q)x^T\Sigma^{-1}\mu \right) dx \nonumber\\
	& =(-1) \left( \log\left( \frac{1}{ \sqrt{|2\pi\Sigma|} } \right) 
	-  \frac{1}{2}\mu^T\Sigma^{-1}\mu \right) \nonumber\\
	& \quad + (-1) \left(  (1-2q)^2 \mu^T\Sigma^{-1}\mu - \frac{1}{2}\left( n + \mu^T\Sigma^{-1}\mu \right) \right) \nonumber\\
	& = \frac{1}{2}\log|2\pi e\Sigma| + 4q(1-q)\mu^T\Sigma^{-1} \mu.
\end{align}
Thus, combining \eqref{Defn:MIinBinaryClassificationModel}, \eqref{Eq:ConditionalEntropy_XcY}, and \eqref{Eq:XEntropyUb}, we can get 
\begin{align}
	& I(X;Y) \nonumber \\ 
	& \leq \frac{1}{2}\log|2\pi e\Sigma| + 4q(1-q)\mu^T\Sigma^{-1} \mu -  \frac{1}{2}\log|2\pi e\Sigma| \nonumber \\
	& = 4q(1-q)\mu^T\Sigma^{-1} \mu.
\end{align}

We now derive the lower bound of $I(X;Y)$. Since we have \eqref{Eq:SomeEq},
\begin{table*}
	\centering 
	\begin{minipage}{0.75\textheight}
		\begin{align}\label{Eq:SomeEq}
			& \log\left( 
			\frac{(1-q)}{\sqrt{|2\pi\Sigma|}} \exp\left(-\frac{ (x-\mu)^T\Sigma^{-1}(x-\mu) }{2}\right)
			+  \frac{q}{\sqrt{|2\pi\Sigma|}} \exp\left(-\frac{(x+\mu)^T\Sigma^{-1}(x+\mu)}{2}\right)
			\right) \nonumber \\
			& =  \log\left( \frac{1}{\sqrt{|2\pi\Sigma|}} \right) 
			+ \log\left( {(1-q) \times  \exp\left(-\frac{ (x-\mu)^T\Sigma^{-1}(x-\mu) }{2}\right)
				+ q \times  \exp\left(-\frac{ (x+\mu)^T\Sigma^{-1}(x+\mu) }{2}\right)} \right) \nonumber \\
			& \leq \log\left( \frac{1}{\sqrt{|2\pi\Sigma|}} \right) 
			+ \log\left( \max\left(\exp\left(-\frac{ (x-\mu)^T\Sigma^{-1}(x-\mu) }{2}\right),  \exp\left(-\frac{ (x+\mu)^T\Sigma^{-1}(x+\mu) }{2}\right)\right) \right) \nonumber \\
			& = \log\left( \frac{1}{\sqrt{|2\pi\Sigma|}} \right) 
			+ \log\left( \exp\left(\max\left(-\frac{ (x-\mu)^T\Sigma^{-1}(x-\mu) }{2},-\frac{ (x+\mu)^T\Sigma^{-1}(x+\mu) }{2}\right)\right) \right) \nonumber \\
			& = \log\left( \frac{1}{\sqrt{|2\pi\Sigma|}} \right)  + \max\left(-\frac{ (x-\mu)^T\Sigma^{-1}(x-\mu) }{2},-\frac{ (x+\mu)^T\Sigma^{-1}(x+\mu) }{2}\right)
		\end{align}
	\end{minipage}
\end{table*}
then we get \eqref{Eq:DifferentialEntropy3} where we used \eqref{Eq:SomeEqI_-} and \eqref{Eq:SomeEqI_+}.
\begin{table*}
	\centering 
	\begin{minipage}{0.75\textheight}
		\begin{align}\label{Eq:DifferentialEntropy3}
			h(X) 
			& \geq \int_\mbR - p_X(x)\log\left( \frac{1}{\sqrt{|2\pi\Sigma|}} \right) dx \nonumber \\
			&\quad + \int_\mbR - p_X(x)\max\left(-\frac{ (x-\mu)^T\Sigma^{-1}(x-\mu) }{2},-\frac{ (x+\mu)^T\Sigma^{-1}(x+\mu) }{2}\right) dx \nonumber \\
			& = - \log\left( \frac{1}{\sqrt{|2\pi\Sigma|}} \right)   +  \min\left(I_-,I_+\right),
		\end{align}
		\begin{align}\label{Eq:SomeEqI_-}
			I_-
			&:=\int_{\mbR^n} p_X(x) \frac{ (x-\mu)^T\Sigma^{-1}(x-\mu) }{2} dx \nonumber \\
			&=\int_\mbR 	\left(\frac{(1-q)}{\sqrt{|2\pi\Sigma|}} \exp\left(-\frac{ (x-\mu)^T\Sigma^{-1}(x-\mu) }{2}\right)
			+  \frac{q}{\sqrt{|2\pi\Sigma|}} \exp\left(-\frac{(x+\mu)^T\Sigma^{-1}(x+\mu)}{2}\right)\right) \nonumber\\
			&\quad 	\times \frac{ (x-\mu)^T\Sigma^{-1}(x-\mu) }{2} dx
		\end{align}
		\begin{align}\label{Eq:SomeEqI_+}
			I_+
			&:=
			\int_{\mbR^n} p_X(x) \frac{ (x+\mu)^T\Sigma^{-1}(x+\mu) }{2} dx  \nonumber\\
			& =\int_\mbR 	\left(\frac{(1-q)}{\sqrt{|2\pi\Sigma|}} \exp\left(-\frac{ (x-\mu)^T\Sigma^{-1}(x-\mu) }{2}\right) 
			+  \frac{q}{\sqrt{|2\pi\Sigma|}} \exp\left(-\frac{(x+\mu)^T\Sigma^{-1}(x+\mu)}{2}\right)\right)  \nonumber\\
			&\quad \times \frac{ (x+\mu)^T\Sigma^{-1}(x+\mu) }{2} dx.
		\end{align}
	\end{minipage}
\end{table*}

From Lemma \ref{Lem:QuadraticFormExpectation}, we have 
\begin{align}\label{Eq:I_Minus}
	L_- 
	& = \frac{(1-q)n+q(n+4\mu^T\Sigma^{-1}\mu)}{2} \nonumber\\
	& = \frac{n}{2}+2q\mu^T\Sigma^{-1}\mu,
\end{align}
and 
\begin{align}\label{Eq:I_Plus}
	I_+
	& = \frac{(1-q)(n+4\mu^T\Sigma^{-1}\mu) + qn}{2} \nonumber\\
	& = \frac{n}{2} + 2(1-q)\mu^T\Sigma^{-1}\mu.
\end{align}

Combining  \eqref{Eq:DifferentialEntropy3},  \eqref{Eq:I_Plus}, and \eqref{Eq:I_Minus}, we have 
\begin{align}
	h(X) \geq  - \log\left( \frac{1}{\sqrt{|2\pi\Sigma|}} \right) + \frac{n}{2} + 2\min(q,1-q)\mu^T\Sigma^{-1}\mu.
\end{align}
Then from \eqref{Defn:MIinBinaryClassificationModel}, we have 
\begin{align*}
	I(X;Y) \geq 2\min(q,1-q)\mu^T\Sigma^{-1}\mu.
\end{align*}

\end{proof}

\begin{cor}\label{Corola:ErrorProbabilityBound}
For the data distribution defined in \eqref{Defn:BinaryClassificationDataModel}, we assume the $Y\to X\to \hat{Y}$ forms a Markov chain where $\hat{Y}$ is the prediction from a classifier, and we follow the learning process in Figure \ref{Fig:ErrorProbability} to learn the classifier. Then, the error probability for an arbitrary classifier must satisfy
\begin{align*}
	\max\left( 0,\frac{2+H(Y) - 4q(1-q)\mu^T\Sigma^{-1}\mu - a}{4} \right)
	\leq P_{error},
\end{align*}
where $a:=\sqrt{(H(Y) - I(X;Y) - 2)^2 + 4}$.
\end{cor}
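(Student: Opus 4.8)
The plan is to obtain Corollary~\ref{Corola:ErrorProbabilityBound} by feeding the mutual-information upper bound of Theorem~\ref{Thm:BinaryClassificationDataModelTruthMIApp} into the general error-probability lower bound of Theorem~\ref{Thm:ErrorProbabilityMI_Bounds}. Since the data model \eqref{Defn:BinaryClassificationDataModel} first draws $Y\sim\mcB(q)$ and then generates the observation $X$ through the Gaussian channel, and the predictor $\hat{Y}$ is formed from $X$ alone, the chain $Y\to X\to\hat{Y}$ is Markov exactly as required by Theorem~\ref{Thm:ErrorProbabilityMI_Bounds}. Applying that theorem verbatim to this model gives
\begin{align*}
	\max\left(0,\ \frac{2+H(Y)-I(X;Y)-a}{4}\right)\le P_{error},\qquad a:=\sqrt{(H(Y)-I(X;Y)-2)^2+4},
\end{align*}
where here $H(Y)=-q\log q-(1-q)\log(1-q)$ is the binary entropy of $\mcB(q)$.

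Next I would invoke Theorem~\ref{Thm:BinaryClassificationDataModelTruthMIApp}, which yields $I(X;Y)\le 4q(1-q)\mu^T\Sigma^{-1}\mu$ for this model. Substituting this bound into the \emph{linear} occurrence of $I(X;Y)$ in the numerator above (leaving the term $a$ untouched) can only decrease the fraction, because $-I(X;Y)\ge -4q(1-q)\mu^T\Sigma^{-1}\mu$ implies
\begin{align*}
	\frac{2+H(Y)-4q(1-q)\mu^T\Sigma^{-1}\mu-a}{4}\ \le\ \frac{2+H(Y)-I(X;Y)-a}{4}.
\end{align*}
Applying $\max(\cdot,0)$ to both sides — a monotone operation — and chaining with the inequality from the first step gives precisely the claimed bound.

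I do not foresee any real obstacle: the corollary is essentially a one-line composition of two already-established results. The only point requiring a moment's care is that the substitution is performed only in the linear term, so the direction of the inequality is immediate and no monotonicity argument for the full right-hand side is needed. (If one instead wanted the sharper statement in which $a$ is also rewritten using $4q(1-q)\mu^T\Sigma^{-1}\mu$, one would additionally verify that $t\mapsto \tfrac14\bigl(2+H(Y)-t-\sqrt{(H(Y)-t-2)^2+4}\bigr)$ is nondecreasing in $t$, which follows from $\tfrac{d}{ds}\bigl(s+4-\sqrt{s^2+4}\bigr)=1-s/\sqrt{s^2+4}>0$ after the substitution $s=H(Y)-t-2$; but this refinement is not required for the corollary as stated.)
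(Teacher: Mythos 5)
Your proposal is correct and follows essentially the same route as the paper, which simply plugs the mutual-information upper bound of Theorem~\ref{Thm:BinaryClassificationDataModelTruthMI} into the error-probability lower bound of Theorem~\ref{Thm:ErrorProbabilityMI_Bounds}. Your explicit remark that the substitution is made only in the linear occurrence of $I(X;Y)$ (so the inequality direction is immediate, and $a$ keeps its original definition) is a useful clarification of the paper's one-line argument, but it is the same proof.
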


\begin{proof}
(of Corollary \ref{Corola:ErrorProbabilityBound}) 
We can simply plug in the bounds of MI from Theorem \ref{Thm:BinaryClassificationDataModelTruthMI} to Theorem \ref{Thm:EntEstBound} to get Corollary \ref{Corola:ErrorProbabilityBound}.

\end{proof}

\section*{Appendix H: Experimental Results}

\subsection*{Ignored Label Conditional Entropy in ImageNet}\label{Sec:ImgNt_IgnrLblCE}

In this section, we give examples in Figure \ref{Fig:CIFAR10_samples} and \ref{Fig:ImageNetImgs} from CIFAR-10 and ImageNet dataset to show the information loss during the annotation process \cite{deng_imagenet:_2009,he_deep_2015}.

\begin{figure}[!htb]
\centering
\begin{subfigure}[b]{0.11\textwidth}
	\centering
	\includegraphics[width=\linewidth]{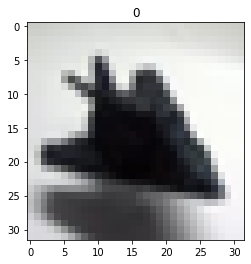}
	\caption{Truth label: class 0 airplane}
\end{subfigure}%
~
\begin{subfigure}[b]{0.11\textwidth}
	\centering
	\includegraphics[width=\linewidth]{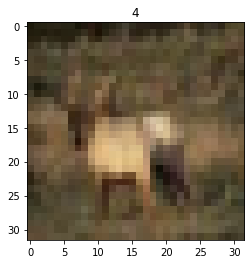}
	\caption{Truth label: class 4 dear}
\end{subfigure}
~
\begin{subfigure}[b]{0.11\textwidth}
	\centering
	\includegraphics[width=\linewidth]{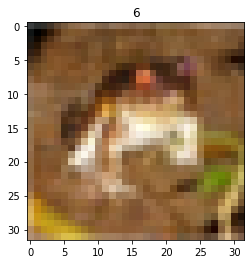}
	\caption{Truth label: class 6 frog}
\end{subfigure}%
~
\begin{subfigure}[b]{0.11\textwidth}
	\centering
	\includegraphics[width=\linewidth]{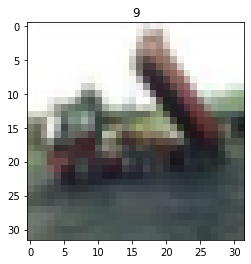}
	\caption{Truth label: class 9 truck}
\end{subfigure}
\caption{Examples from CIFAR-10 dataset.}\label{Fig:CIFAR10_samples}
\end{figure}

\begin{figure}[!htb]
\centering
\begin{subfigure}[b]{0.22\textwidth}
	\centering
	\includegraphics[width=\linewidth]{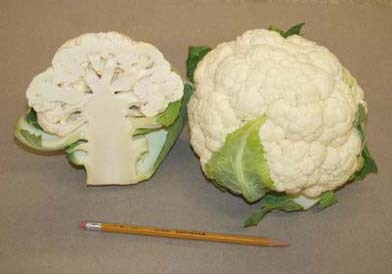}
	\caption{Truth label is cauliflower}\label{Fig:ImageNetImgsMultiple}
\end{subfigure}%
~
\begin{subfigure}[b]{0.22\textwidth}
	\centering
	\includegraphics[width=\linewidth]{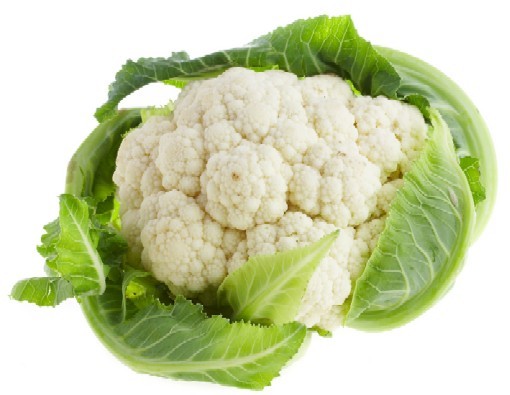}
	\caption{Truth label is cauliflower}\label{Fig:ImageNetImgsSingle}
\end{subfigure}
\caption{Image examples from ImageNet dataset. Though Figure \ref{Fig:ImageNetImgsMultiple} contains also a pencil, the human annotator only labeled it with cauliflower. In deep learning practice, since we usually use the one-hot encoding of the label for a given image, i.e., assigning all the probability mass to the annotated class while zero to all the other classes, this further encourages the model to ignore the conditional entropy of the label \cite{he_deep_2015,szegedy_rethinking_2016,yi_trust_2019}.}
\label{Fig:ImageNetImgs}
\end{figure}

\subsection*{Error Probability Lower Bound via Mutual Information in Imbalanced Dataset}\label{Sec:ErrPrbBd_ImblcdDs}

In Figure \ref{Fig:ErrroProbability_and_MI}, we show the relation between the classification error probability lower bound in terms of mutual information for a imbalanced data distribution.

\begin{figure}[!htb]
\centering
\includegraphics[width=\linewidth]{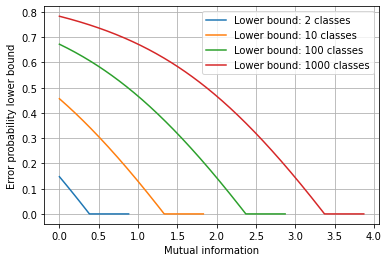}
\caption{Error probability lower bound and mutual information for unbalanced data disrtribution: one class takes probability mass 0.7, while the other classes share the 0.3 evenly.}\label{Fig:ErrroProbability_and_MI}
\end{figure}

\subsection*{Illustration of Mutual Information Bounds in Binary Classification Data Model}\label{Sec:BnrClsDtMdl}

We consider a simplified case in $\mbR$ with $\mu=1$ and variance $\sigma^2=1$ for illustrating the mutual information bounds, when the variance becomes bigger, the two distributions $\mcN(1,\sigma^2)$ and $\mcN(-1,\sigma^2)$ get closer to each other. Thus, conditioning on $X$ can give very little information about $Y$, making it difficult to differentiate the two class labels. In Figure \ref{Fig:XConditionalDistributionsUnderDifferentVariance}, we give illustrations for this phenomenon. As we can see in Figure \ref{Fig:XConditionalDistributionsUnderDifferentVariance}, as the $\sigma^2$ increases from 1 to 100, the two conditional distributions $\mcN(1,\sigma^2)$ and $\mcN(-1,\sigma^2)$ get closer to each other, and the marginal distribution in \eqref{Eq:XMarginalDistribution} can be approximated by either of them. This results in that less information is revealed about $Y$ when we condition on $X$ with larger variance $\sigma^2$. We also give illustrations of the mutual information bounds for the data distribution in $\mbR$ in Figure \ref{Fig:MutualInformationBounds}. 

\begin{figure*}[!htb]
\centering
\begin{subfigure}[b]{0.42\textwidth}
	\centering
	\includegraphics[width=\linewidth]{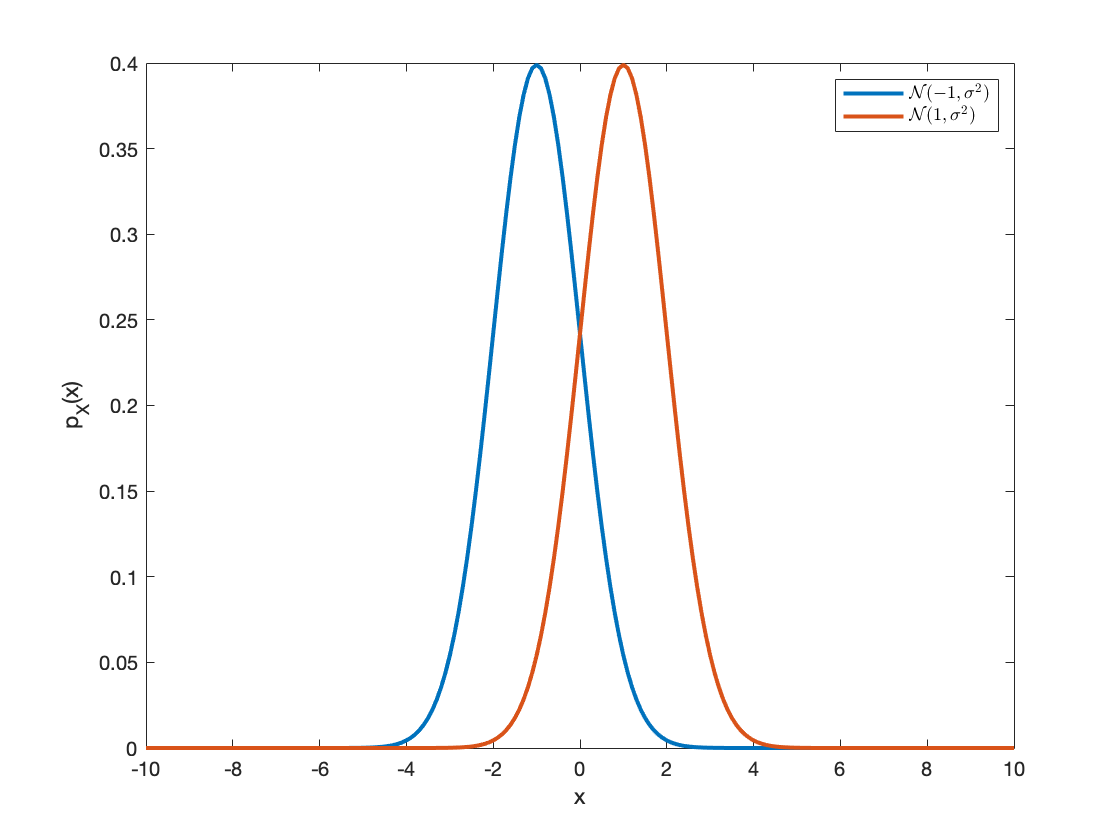}
	\caption{Variance $\sigma^2=1$}
\end{subfigure}%
~
\begin{subfigure}[b]{0.42\textwidth}
	\centering
	\includegraphics[width=\linewidth]{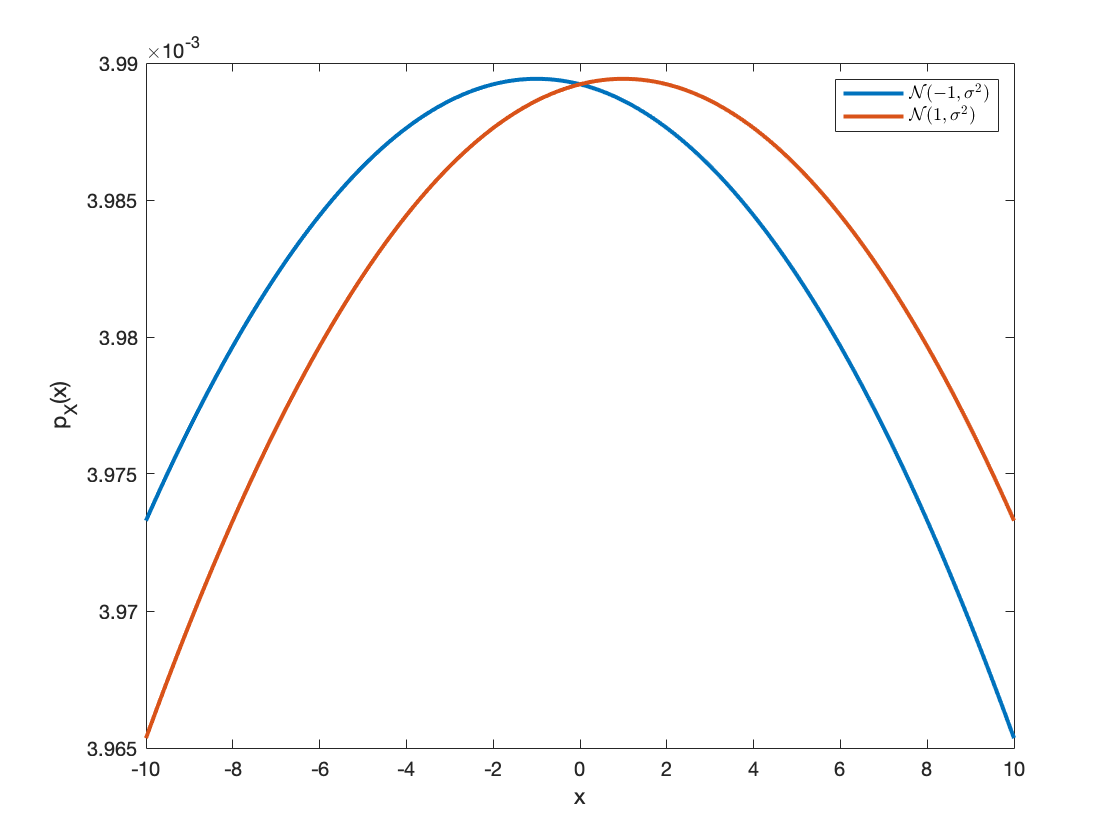}
	\caption{Variance $\sigma^2=100$}
\end{subfigure}
\caption{Conditional distribution $p_{X|Y}$ under with different variance $\sigma^2$.}\label{Fig:XConditionalDistributionsUnderDifferentVariance}
\end{figure*}

\begin{figure*}[!htb]
\centering
\begin{subfigure}[b]{0.42\textwidth}
	\centering
	\includegraphics[width=\linewidth]{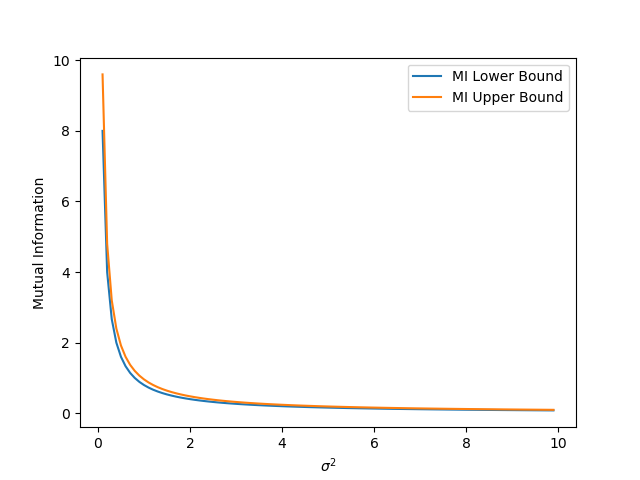}
	\caption{$q=0.4$}
\end{subfigure}%
~
\begin{subfigure}[b]{0.42\textwidth}
	\centering
	\includegraphics[width=\linewidth]{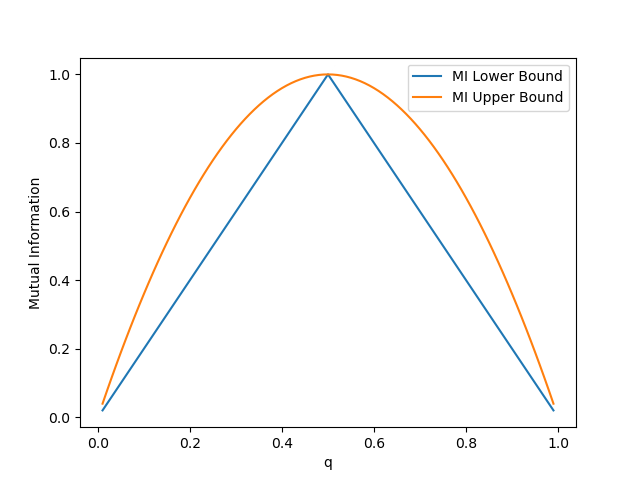}
	\caption{$\sigma^2=1$}
\end{subfigure}
\caption{Mutual information bounds of binary classification data model in \eqref{Defn:BinaryClassificationDataModel}.}\label{Fig:MutualInformationBounds}
\end{figure*}

\subsection*{Real-word Datasets and Corresponding Model Architectures}

{\bf MLP and CNN architectures for MNIST classification} The MLP is a 3-layer fully connected neural network with 64, 64, and 10 neurons in each layer. All the layers except the final layer use a relu activation function. The CNN is a 4-layer neural network with 2 convolutional layers followed by 2 fully connected layers. The first convolutional layer has 10 kernels of size $5\times5$, and the second convolutional layer has 20 kernels of size $5\times5$. A maxpooling layer with stride 2 is applied after each convolutional layer. The two fully connected layers have 50 and 10 neurons, respectively, and the first fully connected layer uses relu activation function. All our experiments are conducted on a Windows machine with Intel Core(TM) i9 CPU @ 3.7GHz, 64Gb RAM, and 1 NVIDIA RTX 3090 GPU card. 

\subsection*{Learning Curves on MNIST and CIFAR-10}

In this section, we present supplemental experimental results in Figure \ref{Fig:MNIST_MLP} and \ref{Fig:MNIST_CNN} for demonstrating the performance of the proposed approach in MNIST classification, and in Figure \ref{Fig:CIFAR10_ResNet18} and \ref{Fig:CIFAR10_GoogLeNet} for demonstrating the performance of the proposed approach in CIFAR-10 classification with ResNet-18 and GoogLeNet \cite{szegedy_going_2015,he_deep_2015}.

\begin{figure*}[!htb]
\centering
\begin{subfigure}[b]{0.42\textwidth}
	\centering
	\includegraphics[width=\linewidth]{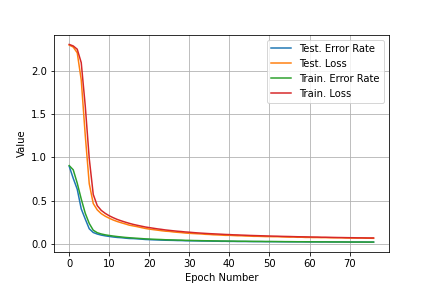}
	\caption{celLoss}\label{Fig:MNIST_CNN_cel}
\end{subfigure}%
~
\begin{subfigure}[b]{0.42\textwidth}
	\centering
	\includegraphics[width=\linewidth]{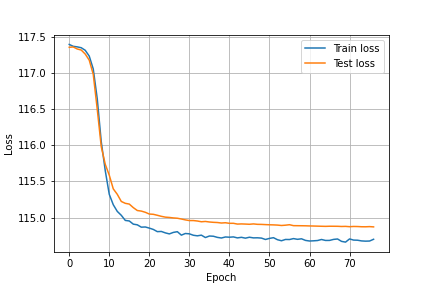}
	\caption{milLoss}\label{Fig:MNIST_CNN_milLoss}
\end{subfigure}
~
\begin{subfigure}[b]{0.42\textwidth}
	\centering
	\includegraphics[width=\linewidth]{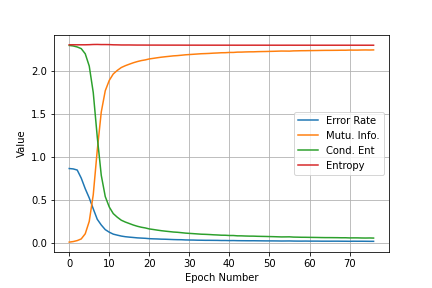}
	\caption{milLoss}\label{Fig:MNIST_CNN_milInformation}
\end{subfigure}%
\\
\begin{subfigure}[b]{0.42\textwidth}
	\centering
	\includegraphics[width=\linewidth]{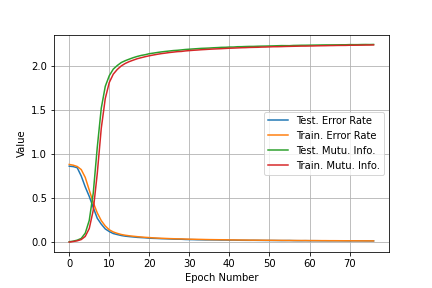}
	\caption{milLoss}\label{Fig:MNIST_CNN_milErrorRate_MI}
\end{subfigure}
\begin{subfigure}[b]{0.42\textwidth}
	\centering
	\includegraphics[width=\linewidth]{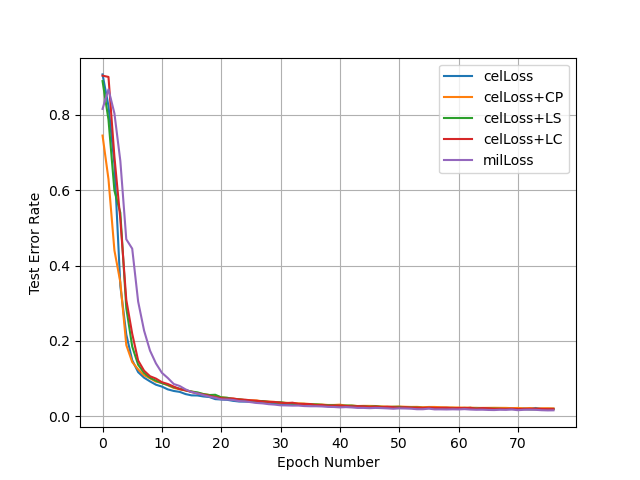}
	\caption{Error rates}\label{Fig:MNIST_CNN_milAcel_ErrorRate}
\end{subfigure}
\caption{CNN on MNIST. \ref{Fig:MNIST_CNN_cel}: error rate and loss during training and test at different epochs. \ref{Fig:MNIST_CNN_milLoss}: loss during training and testing at different epochs. \ref{Fig:MNIST_CNN_milInformation}: error rate, mutual information, label conditional entropy, and label entropy during test at different epochs. \ref{Fig:MNIST_CNN_milErrorRate_MI}: mutual information and error rate during training and testing at different epoch. \ref{Fig:MNIST_CNN_milAcel_ErrorRate}: testing error rate curve associated with different loss function.}\label{Fig:MNIST_CNN}
\end{figure*}

\begin{figure*}[!htb]
\centering
\begin{subfigure}[b]{0.42\textwidth}
	\centering
	\includegraphics[width=\linewidth]{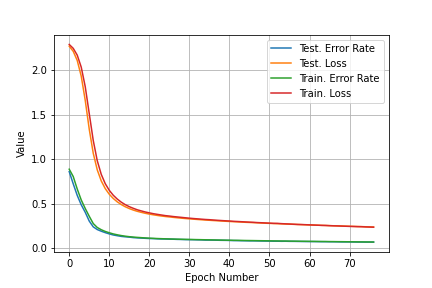}
	\caption{celLoss}\label{Fig:MNIST_MLP_cel}
\end{subfigure}%
~
\begin{subfigure}[b]{0.42\textwidth}
	\centering
	\includegraphics[width=\linewidth]{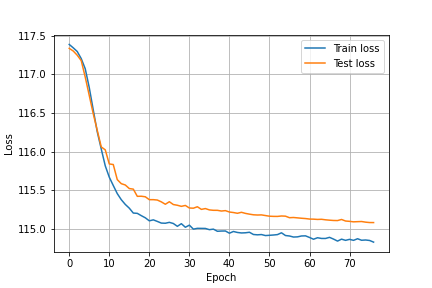}
	\caption{milLoss}\label{Fig:MNIST_MLP_milLoss}
\end{subfigure}
\begin{subfigure}[b]{0.42\textwidth}
	\centering
	\includegraphics[width=\linewidth]{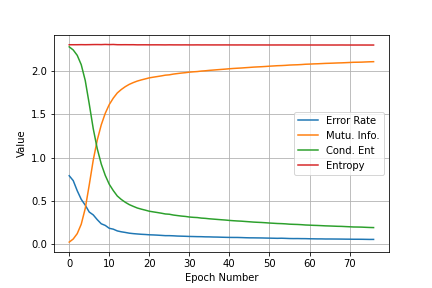}
	\caption{milLoss}\label{Fig:MNIST_MLP_milInformation}
\end{subfigure}%
~
\begin{subfigure}[b]{0.42\textwidth}
	\centering
	\includegraphics[width=\linewidth]{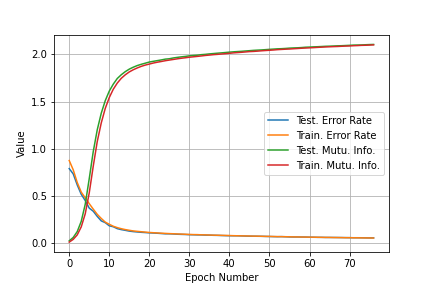}
	\caption{milLoss}\label{Fig:MNIST_MLP_milTrainTest}
\end{subfigure}
~
\begin{subfigure}[b]{0.42\textwidth}
	\centering
	\includegraphics[width=\linewidth]{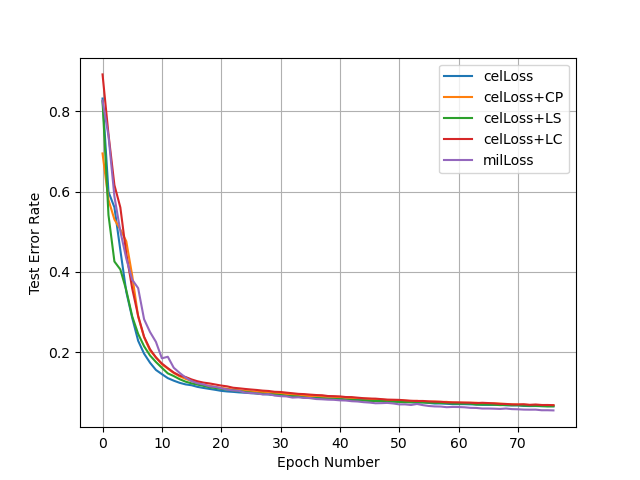}
	\caption{Error rates}\label{Fig:MNIST_MLP_testErrorRate}
\end{subfigure}
\caption{Multiple layer perceptron on MNIST. \ref{Fig:MNIST_MLP_cel}: error rate and loss during training and test at different epochs. \ref{Fig:MNIST_MLP_milLoss}: loss during training and testing at different epochs. \ref{Fig:MNIST_MLP_milInformation}: error rate, mutual information, label conditional entropy, and label entropy during test at different epochs. \ref{Fig:MNIST_MLP_milTrainTest}: mutual information and error rate during training and testing at different epoch. \ref{Fig:MNIST_MLP_testErrorRate}: testing error rate curves associated with different loss functions.}\label{Fig:MNIST_MLP}
\end{figure*}

\begin{figure*}[!htb]
\centering
\begin{subfigure}[b]{0.42\textwidth}
	\centering
	\includegraphics[width=\linewidth]{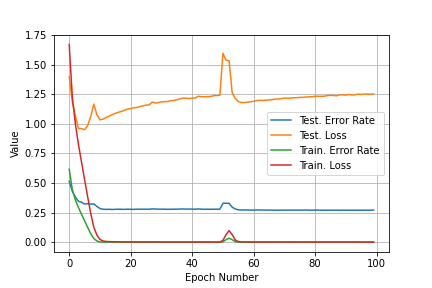}
	\caption{celLoss}\label{Fig:CIFAR10_ResNet18_cel}
\end{subfigure}%
~
\begin{subfigure}[b]{0.42\textwidth}
	\centering
	\includegraphics[width=\linewidth]{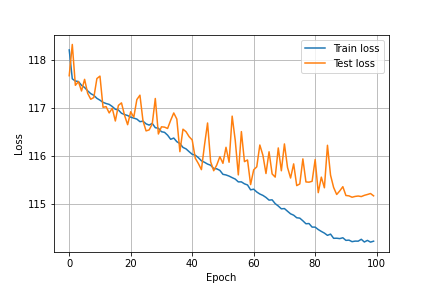}
	\caption{milLoss}\label{Fig:CIFAR10_ResNet18_milLoss}
\end{subfigure}
\begin{subfigure}[b]{0.42\textwidth}
	\centering
	\includegraphics[width=\linewidth]{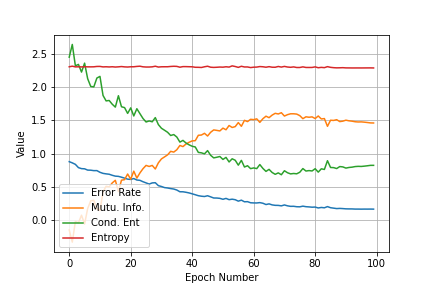}
	\caption{milLoss}\label{Fig:CIFAR10_ResNet18_milInformation}
\end{subfigure}%
~
\begin{subfigure}[b]{0.42\textwidth}
	\centering
	\includegraphics[width=\linewidth]{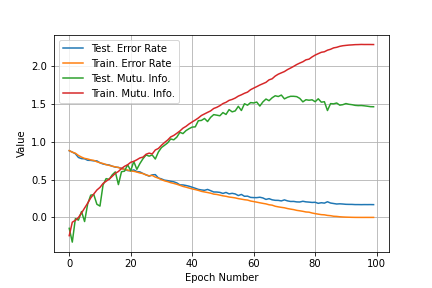}
	\caption{milLoss}\label{Fig:CIFAR10_ResNet18_milErrorRate_MI}
\end{subfigure}
\begin{subfigure}[b]{0.42\textwidth}
	\centering
	\includegraphics[width=\linewidth]{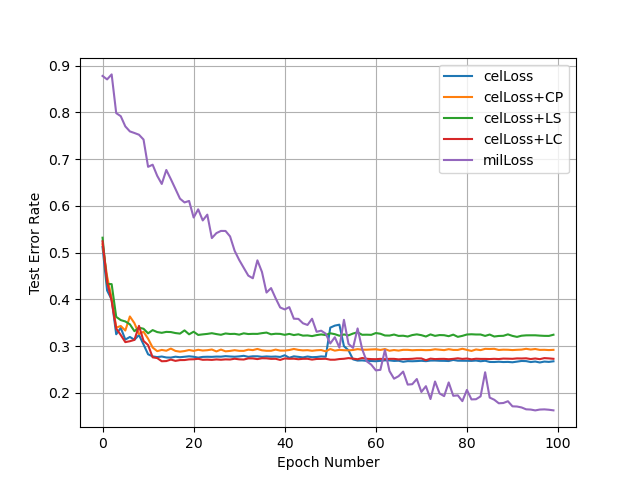}
	\caption{Error rates}\label{Fig:CIFAR10_ResNet18_milAcel_ErrorRate}
\end{subfigure}
\caption{ResNet18 on CIFAR-10. \ref{Fig:CIFAR10_ResNet18_cel}: error rate and loss during training and test at different epochs. \ref{Fig:CIFAR10_ResNet18_milLoss}: loss during training and testing at different epochs. \ref{Fig:CIFAR10_ResNet18_milInformation}: error rate, mutual information, label conditional entropy, and label entropy during test at different epochs. \ref{Fig:CIFAR10_ResNet18_milErrorRate_MI}: mutual information and error rate during training and testing at different epoch. \ref{Fig:CIFAR10_ResNet18_milAcel_ErrorRate}: testing error rate curves associated with different loss functions.}\label{Fig:CIFAR10_ResNet18}
\end{figure*}

\begin{figure*}[!htb]
\centering
\begin{subfigure}[b]{0.42\textwidth}
	\centering
	\includegraphics[width=\linewidth]{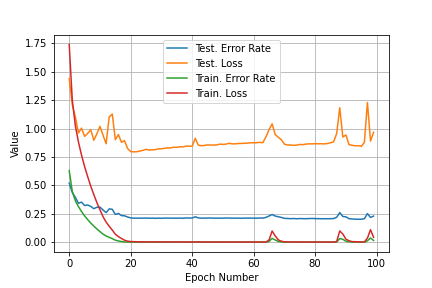}
	\caption{celLoss}\label{Fig:CIFAR10_GoogLeNet_cel}
\end{subfigure}%
~
\begin{subfigure}[b]{0.42\textwidth}
	\centering
	\includegraphics[width=\linewidth]{Figures/GoogLeNet_mil_avg/GoogLeNetmilLoss_epoch_loss}
	\caption{milLoss}\label{Fig:CIFAR10_GoogLeNet_milLoss}
\end{subfigure}
\begin{subfigure}[b]{0.42\textwidth}
	\centering
	\includegraphics[width=\linewidth]{Figures/GoogLeNet_mil_avg/GoogLeNetmilLoss_Testing_MiAccLossEntCondent_vs_Epoch_noLoss}
	\caption{milLoss}\label{Fig:CIFAR10_GoogLeNet_milInformation}
\end{subfigure}%
~
\begin{subfigure}[b]{0.42\textwidth}
	\centering
	\includegraphics[width=\linewidth]{Figures/GoogLeNet_mil_avg/GoogLeNetmilLoss_TrainTesting_MiAccloss_vs_Epoch_noLoss}
	\caption{milLoss}\label{Fig:CIFAR10_GoogLeNet_milErrorRate_MI}
\end{subfigure}
\begin{subfigure}[b]{0.42\textwidth}
	\centering
	\includegraphics[width=\linewidth]{Figures/CIFAR10_GoogLeNet_testErrorRate_acrossLossFunctions}
	\caption{Error rates}\label{Fig:CIFAR10_GoogLeNet_milAcel_ErrorRate}
\end{subfigure}
\caption{GoogLeNet on CIFAR-10. \ref{Fig:CIFAR10_GoogLeNet_cel}: error rate and loss during training and test at different epochs. \ref{Fig:CIFAR10_GoogLeNet_milLoss}: loss during training and testing at different epochs. \ref{Fig:CIFAR10_GoogLeNet_milInformation}: error rate, mutual information, label conditional entropy, and label entropy during test at different epochs. \ref{Fig:CIFAR10_GoogLeNet_milErrorRate_MI}: mutual information and error rate during training and testing at different epoch. \ref{Fig:CIFAR10_GoogLeNet_milAcel_ErrorRate}: testing error rate curves associated with different loss functions.}\label{Fig:CIFAR10_GoogLeNet}
\end{figure*}

\end{document}